
\documentclass{article}

\usepackage{microtype}
\usepackage{graphicx}
\usepackage{subfigure}
\usepackage{booktabs} 

\usepackage{hyperref}



\usepackage[accepted]{icml2025}

\usepackage{amsmath}
\usepackage{amssymb}
\usepackage{mathtools}
\usepackage{amsthm}

\usepackage[capitalize,noabbrev]{cleveref}

\theoremstyle{plain}
\newtheorem{theorem}{Theorem}[section]
\newtheorem{proposition}[theorem]{Proposition}
\newtheorem{lemma}[theorem]{Lemma}

\theoremstyle{definition}
\newtheorem{definition}[theorem]{Definition}

\theoremstyle{remark}

\usepackage{bbm}

\newcommand{\Rb}{\mathbb{R}}

\newcommand{\Lc}{\mathcal{L}}

\usepackage[textsize=tiny]{todonotes}

\icmltitlerunning{Training Dynamics of Transformers}

\begin{document}

\twocolumn[
\icmltitle{
How Transformers Learn Regular Language Recognition: A Theoretical Study on Training Dynamics and Implicit Bias}




\begin{icmlauthorlist}
\icmlauthor{Ruiquan Huang}{psu}
\icmlauthor{Yingbin Liang}{osu}
\icmlauthor{Jing Yang}{uva}
\end{icmlauthorlist}

\icmlaffiliation{psu}{Department of Electrical Engineering, The Pennsylvania State University, State College, PA 16801, USA}
\icmlaffiliation{osu}{Department of Electrical and Computer Engineering, The Ohio State University, Columbus, OH 43210, USA}
\icmlaffiliation{uva}{Department of Electrical and Computer Engineering, University of Virginia, Charlottesville, VA 22904, USA}

\icmlcorrespondingauthor{Jing Yang}{yangjing@virginia.edu}

\icmlkeywords{Transformer Theory, Chain-of-though, Regular Language Recognition, Training Dynamics, Implicit Bias, Even Pairs, Parity Check, Machine Learning, ICML}

\vskip 0.3in
]



\printAffiliationsAndNotice{}  

\begin{abstract}
Language recognition tasks are fundamental in natural language processing (NLP) and have been widely used to benchmark the performance of large language models (LLMs). These tasks also play a crucial role in explaining the working mechanisms of transformers. In this work, we focus on two representative tasks in the category of regular language recognition, known as `even pairs' and `parity check', the aim of which is to determine whether the occurrences of certain subsequences in a given sequence are even. Our goal is to explore how a one-layer transformer, consisting of an attention layer followed by a linear layer, learns to solve these tasks by theoretically analyzing its training dynamics under gradient descent. 
While even pairs can be solved directly by a one-layer transformer, parity check need to be solved by integrating Chain-of-Thought (CoT), either into the inference stage of a transformer well-trained for the even pairs task, or into the training of a one-layer transformer. For both problems, our analysis shows that the joint training of attention and linear layers exhibits two distinct phases. In the first phase, the attention layer grows rapidly, mapping data sequences into separable vectors. In the second phase, the attention layer becomes stable, while the linear layer grows logarithmically and approaches in direction to a max-margin hyperplane that correctly separates the attention layer outputs into positive and negative samples, and the loss decreases at a rate of $O(1/t)$. Our experiments validate those theoretical results.

\end{abstract}

\section{Introduction}
\label{intro}


Transformers~\citep{vaswani2017attention} have become foundational in modern machine learning, revolutionizing natural language processing (NLP) tasks such as language modeling~\citep{devlin2018bert}, translation~\citep{wang2019learning}, and text generation~\citep{radford2019language}. 
Among these, {\bf language recognition tasks} are fundamental to NLP and are widely used to benchmark the empirical performance of large language models (LLMs)~\citep{bhattamishra2020ability,deletangneural}. Beyond their practical applications, these tasks hold significant potential for uncovering the underlying working mechanism of transformers. A growing body of research has explored the expressiveness and learnability of transformers in these settings~\citep{strobl2024formal,hahn2024sensitive,chiang2022overcoming,merrill2023expressive,hahn2020theoretical}. Despite this, there has been little effort to understand transformers' training dynamics in language recognition tasks. 


In this work, we take the first step towards bridging this gap by focusing on two fundamental pattern recognition tasks in formal language recognition, known as `even pairs' and `parity check' problems, and explore how transformers can be trained to learn these tasks from a theoretical perspective. Specifically, the objective of the `even pairs' problem is to determine whether the total number of specific subsequences in a binary sequence is even, and the objective of the `parity check' problem is to determine whether the total occurrence of a single pattern is even. 
These tasks are particularly compelling for studying transformers because they require the model to recognize parity constraints and capture global dependencies across long sequences, which are essential for real-world applications such as syntax parsing and error detection in communication systems. 

For the two problems of our interest, the even pairs problem has not been studied before theoretically. The parity check problem has recently been studied in \citet{kim2024transformersCoT,wen2024fromCoT}, 
which characterized the training dynamics of CoT for learning parity. However, \citet{kim2024transformersCoT} analyzed the training of an attention layer only, leaving more general characterization of joint training of feed-forward and attention layers yet to be studied.
\citet{wen2024fromCoT} analyzed three iteration steps in training without establishing the convergence of the entire training process. Our goal is to develop a more general training dynamics characterization, including joint training of the attention and linear feed-forward layers, the convergence rate of the loss functions, and the implicit bias of the training parameters. 
Furthermore, while both the `even pairs' and `parity check' problems are classification tasks, existing theoretical studies on transformers in classification settings \citep{li2023theoretical,tarzanagh2023max, tarzanagh2023transformers,vasudeva2024implicit,deora2023optimization,yang2024training,magen2024benign,jiang2024unveil,sakamoto2024benign} have primarily focused on cases where class distinctions are based on {\it identifiable} features. 
In contrast, these language recognition tasks will pose unique challenges, which require the transformer to leverage its attention mechanism to uncover intricate dependencies inherent in data sequences. By exploring these tasks, our work will offer new insights into the fundamental mechanism of transformers.




In this work, we investigate how a one-layer transformer, consisting of an attention layer followed by a linear layer, learns to perform the `even pairs' and `parity check' tasks. We will theoretically analyze the model dynamics during the training process of gradient descent, and examine how transformer parameters will be guided to converge to a solution with implicit bias. Here, we will jointly analyze the training process of the attention layer and linear layer, which will be significantly different from most existing analysis of the training dynamics of transformers for classification problems, where
joint training is not studied~\citep{huang2024non,tarzanagh2023max,kim2024transformersCoT,li2024mechanics}.

Our major contributions are four-fold:

First, for the even pairs problem, we identify two distinct learning phases. In Phase 1, both linear and attention layers grow rapidly, inducing separable outputs of the attention layer. In Phase 2, the attention layer remains almost unchanged, while the dynamics of the linear layer is governed by an implicit bias, which converges in direction to the max-margin hyperplane that correctly separates the attention layer's outputs into positive and negative samples. We also show that the loss function decays to the global minimum sublinearly in time. To the best of our knowledge, this is the first theoretical study on the training dynamics of transformers for the even pairs problem.

Second, we innovatively leverage the insights from the even pairs problem and Chain-of-Thought (CoT) to solve the parity check problem through two different approaches. In the first approach, we introduce truncated CoT into the {\it inference} stage of a trained transformer. 
We demonstrate that a transformer, well-trained on the even pairs problem but without CoT training, can successfully solve the parity check problem in a zero-shot manner (without any additional training) using truncated CoT inference. Such a surprising result is based on the intricate connection between the even pairs problem and the parity check problem. For the second approach, it trains a one-layer transformer with CoT under teacher forcing, where we further include the training loss of even pairs to stabilize the training process. We show that with a two-phase training process, similarly to that of the even pairs problem, gradient descent provably renders a one-layer transformer that can solve parity check via CoT.

Third, we introduce a novel analytical technique for studying joint training of attention and linear layers. Specifically, we employ higher-order Taylor expansions to precisely analyze the coupling between gradients of two layers and its impact on parameter updates in Phase 1. Then we incorporate implicit bias principles to further characterize the training dynamics in Phase 2. Since all parameters are actively updated, we must bound the perturbations in the attention layer and analyze their effects on the linear layer. This is achieved by carefully designing the scaling factor in the attention mechanism, which not only stabilizes training but also underscores its critical role in transformer architectures.

Finally, we conduct experiments to validate our theoretical findings, demonstrating consistent parameter growth, alignment behavior, and loss convergence. 

\section{Related Work}

Due to the recent extensive theoretical studies of transformers from various perspectives, the following summary will mainly focus on the training dynamics characterization for transformers, which is highly relevant to this paper.


{\bf Learning regular language recognition problems via transformers.} Regular language recognition tasks are fundamental to NLP and are widely used to benchmark the empirical performance of large language models (LLMs)~\citep{bhattamishra2020ability,deletangneural}. Theoretical understanding of transformers for solving these tasks are mainly focusing on the expressiveness and learnability~\citet{strobl2024formal,hahn2024sensitive,chiang2022overcoming,merrill2023expressive,hahn2020theoretical}. Among these studies, several negative results highlight the limitations of transformers in learning problems such as parity checking. Notably, \citet{merrill2023expressive} demonstrated that chain-of-thought (CoT) reasoning significantly enhances the expressive power of transformers. We refer readers to the comprehensive survey by \citet{strobl2024formal} for a detailed discussion on the expressiveness and learnability of transformers. Regarding the study on the training transformers, \citet{kim2024transformersCoT, wen2024fromCoT} showed that it is impossible to successfully learn the parity check problem by applying transformer once. They further developed CoT method and showed that attention model with CoT can be trained to provably learn parity check. Differently from \citet{kim2024transformersCoT}, our work analyzed the softmax attention model jointly trained with a linear layer with CoT for learning parity check. Our CoT training design is also different from that in \citet{kim2024transformersCoT}. Further, \citet{wen2024fromCoT} studied the sample complexity of training a one-layer transformer for learning complex parity problems. They analyzed three steps in the training procedure without establishing the convergence of the entire training process, which is one of our focuses here.



{\bf Training dynamics of transformers with CoT.} CoT is a powerful technique that enables transformers to solve more complex tasks by breaking down problem-solving into intermediate reasoning steps~\citep{wei2022chain, kojima2022large}. Recently, training dynamics of transformers with CoT has been studied in \citet{kim2024transformersCoT,wen2024fromCoT} for the parity problems (as discussed above) and in \citet{li2024trainingCoT} for in-context supervised learning. 

{\bf Training dynamics of transformers for classification problems.} A recent active line of research has focused on studying the training dynamics of transformers for classification problems. \citet{tarzanagh2023max, tarzanagh2023transformers} showed that the training dynamics of an attention layer for a classification problem is equivalent to a support vector machine problem. Further, \citet{vasudeva2024implicit} established the convergence rate for the training of a one-layer transformer for a classification problem. \citet{li2023theoretical} characterized the training dynamics of vision transformers and provide a converging upper bound on the generalization error. \citet{deora2023optimization} studied the training and generalization error under the neural tangent kernel (NTK) regime. \citet{yang2024training} characterized the training dynamics of gradient flow for a word co-occurrence recognition problem. \citet{magen2024benign,jiang2024unveil,sakamoto2024benign} studied the benign overfitting of transformers in learning classification tasks. Although the problems of our interest here (i.e., even pairs and parity check) generally fall into the classification problem, these language recognition tasks pose unique challenges that require transformer to leverage its attention mechanism to uncover intricate dependencies inherent in data sequences, which have not been addressed in previous studies of the conventional classification problems.


{\bf Training dynamics of transformers for other problems.} Due to the rapidly increasing studies in this area, we include only some example papers in each of the following topics. In order to understand the working mechanism of transformers, training dynamics has been intensively investigated for various machine learning problems, for example, in-context learning problems in \citet{ahn2024transformers,mahankali2023one,zhang2023trained,huang2023context,cui2024superiority,cheng2023transformers,kim2024transformers,nichani2024transformers,chen2024training,chen2024provably,yang2024in-context},
next-token prediction (NTP) in \citet{tian2023scan,tian2023joma,li2024mechanics,huang2024non,thrampoulidis2024implicit},  unsupervised learning in \citet{huang2024how}, regression problem in \citet{boix2023transformers}, etc. Those studies for the classification problem and CoT training have been discussed above.



{\bf Implicit bias.} Our analysis of the convergence guarantee develops implicit bias of gradient descent for transformers. Such characterization has been previously established in \citet{soudry2018implicit,nacson2019convergence,ji2021characterizing,ji2021fast} for gradient descent-based optimization and in  \citet{phuong2020inductive,frei2022implicit,kou2024implicit} for training ReLU/Leaky-ReLU networks on orthogonal data. More studies along this line can be found in a comprehensive survey \citet{vardi2023implicit}. Most relevant to our study are the recent works \citet{huang2024non} and \citet{tarzanagh2023max, tarzanagh2023transformers,sheen2024implicit}, which established implicit bias for training transformers for next-token prediction and classification problems, respectively. Differently from those work on transformers, our study here focuses on the even pairs and parity check problems, which have unique structures not captured in those work.

\section{Problem Formulation}\label{sec:problemformulation}

{\bf Notations.} All vectors considered in this paper are column vectors. For a matrix $W$, $\|W\|$ represents its Frobenious norm. For a vector $v$, we use $[v]_i$ to denote the $i$-th coordinate of $v$. We use $\{e_i\}_{i\in [d]}$ to denote the canonical basis of $\Rb^{d}$. We use $\phi(v)$ to denote the softmax function, i.e., $[\phi(v)]_i = \exp(v_i)/\sum_j \exp(e_j^\top v)$, which can be applied to any vector with arbitrary dimension. The inner product $\langle A, B\rangle$ of two matrices or vectors $A,B$ equals $\mathrm{Trace}(AB^\top)$ $\mathrm{Trace}(AB^\top)$. For a set $\mathcal{X}$, we use $\mathcal{X}^n$ to denote the Cartesian product of $n$ copies of $\mathcal{X}$, and $\mathcal{X}^{\leq n} = \cup_{k=1}^n \mathcal{X}^k$.

In this work, we consider pattern recognition tasks in the context of formal language recognition, which challenges the ability of machine learning models such as transformers to recognize patterns over long sequences. 

First, we introduce the general pattern recognition task in binary sequence. The set of all binary sequences is denoted by $\{\texttt{a}, \texttt{b}\}^{\leq L_{\max}}$, where $L_{\max}$ is the maximum length of the sequence. The pattern of interest is a set of sequences $\mathcal{P}=\{p_1,\ldots, p_n\} \subset \{\texttt{a}, \texttt{b}\}^{\leq L_{\max}}$. Given a sequence $X$ and a pattern $\mathcal{P}$, let $N_{\mathcal{P}}$ be the number of total matching subsequences in $X$, where a subsequence of $X$ matches if it equals some $p_i\in\mathcal{P}$. The pattern recognition task is to determine whether $N_{\mathcal{P}}$ satisfies a predefined condition, such as 
whether $N_{\mathcal{P}}$ is even. 

In the following, we describe two representative tasks that are particularly interesting in the study of regular language recognition due to their simple formulation and inherent learning challenges. 
More details about formal language recognition (which includes regular and non-regular language recognition) can be found in \citet{deletangneural}.

{\bf Even pairs.}  The pattern of interest is $\mathcal{P} = \{\texttt{ab}, \texttt{ba}\}$. For example, in the sequence $X = \texttt{aabba}$, there is one $\texttt{ab}$ and one $\texttt{ba}$, resulting in a total of two, which is even. 

{\bf Parity check.} The pattern of interest is $\mathcal{P}=\{\texttt{b}\}$. In other words, this task is simply to determine whether the number of \texttt{b}s in a binary string $X\in\{\texttt{a,b}\}^L$ is even or odd. 

Although {\em even pairs} may appear to involve more complex pattern recognition than {\em parity check}, it can be shown that this task is equivalent to determining whether the first and last tokens of the sequence are equal, which can be solved in $O(1)$ time~\cite{deletangneural}. However, parity check usually requires $O(L)$ time complexity given an $L$-length sequence since we need to check every token at least once.

In this work, our aim is to apply transformer models to solve these problems while leveraging these tasks to understand the underlying working mechanisms of transformer models.

{\bf Embedding strategy.} We employ the following embedding strategy for each input binary sequence. For a token $\texttt{a}$ at position $\ell$, its embedding is given by $E_\ell^\texttt{a} = e_{2\ell - 1}$, and for a token $\texttt{b}$ at position $\ell$, its embedding is given by $E_\ell^\texttt{b} = e_{2\ell}$. Such an embedding ensures that token embeddings are orthogonal to each other, which is a widely adopted condition for transformers.

{\bf One-layer transformer.} We consider a one-layer transformer, denoted as $\mathtt{T}_{\theta}: \mathbb{R}^{d\times L_{\max}} \rightarrow \mathbb{R}$, which takes a sequence of token embeddings as input and outputs a scaler value, and $\theta$ includes all trainable parameters of the transformer. Specifically, let $X = [x_1,\ldots, x_L]$ denote the input sequence, where $x_\ell\in\mathbb{R}^d$ for each $1\leq\ell\leq L$ and $L$ is the length of the sequence. Let the value, key and query matrices be denoted by $W_{v}, W_k, W_q\in\mathbb{R}^{d\times d}$.
Then, the attention layer is given by $ W_{v} \sum_{\ell=1}^L x_\ell \varphi_\ell $, where $\varphi_\ell = [\phi(X^\top W_{k}W_q x_L/\lambda)]_\ell$, and $\lambda\in\mathbb{R}$ is a scaling parameter. Hence, the one-layer transformer has the form $\mathtt{T}_{\theta}(X) = W_{u} W_{v} \sum_{\ell=1}^L x_\ell \varphi_\ell$, where  $W_u\in\mathbb{R}^{1\times d}$ denotes the linear feed-forward layer.  For simplicity, we reparameterize $W_uW_v$ as $u$ and $W_kW_q$ as $W$, as commonly taken in \citet{huang2023context,tian2023scan,li2024mechanics}. Hence, the transformer is reformulated as $\mathtt{T}_{\theta}(X) = u^\top \sum_{\ell=1}^{L}x_\ell \varphi_\ell$, where $\varphi_\ell=[\phi(X^\top W x_L/\lambda)]_\ell$, and all trainable parameters are captured in $\theta=(u,W)$. We will call $W$ and $u$ respectively as attention and linear layer parameters. 

When the transformer is used to for the regular language recognition tasks, for a given input $X$, it will take the sign (i.e., $1$ or $-1$) of the transformer output $\mathtt{T}_{\theta}(X)$ as the predicted label. For the even pairs task, a positive predicted label 1 means the input sequence contains even pairs, and $-1$ otherwise. Similarly, for the parity check task, a positive predicted label 1 means the input sequence contains even number of the pattern of interest, and $-1$ otherwise.

{\bf Learning objective.} We adopt the logistic loss for these binary classification tasks. We denote $I = \cup_{L=1}^{L_{\max}} I_L$ as the training dataset, where $I_L$ denotes the set of all length-$L$ sequences. An individual training data consists of a sequence $X^{(n)} = [x_1^{(n)}, \ldots, x_L^{(n)}]$ and a label $y_n\in\{1,-1\}$, where $n$ is the index of the data. 
With slight abuse of notation, we also use the notation $n\in I_L$ to indicate that $X^{(n)}$ has length $L$. Then, the loss function can be expressed as: 
\begin{align*}
&   \Lc(u,W) \\
&\textstyle = \sum_{L=1}^{L_{\max}}\frac{1}{|I_L|}\sum_{n\in I_L} \log \left(1 + \exp\left(-y_n\mathtt{T}_{\theta}(X^{(n)}) \right)\right).
\end{align*}
 The use of $1/|I_L|$ in the loss function ensures that sequences of different lengths contribute equally to training. 

Our goal is to train the transformer to minimize the loss function, i.e., to solve the problem
\(
\min_{\theta=(u, W)} \Lc(u,W).
\)
We adopt a 2-phase gradient descent (GD) to minimize the loss function $\Lc(u,W)$ as follows. We adopt zero initialization, i.e., $\theta_0 = 0$. Then at early steps $t\leq t_0$ (to be specified later), we update $\theta=(u, W)$ as follows:
\begin{align*}
   & u_{t+1} = u_t - \eta \nabla_u\Lc_t, \quad  W_{t+1} = W_t - \eta \lambda \nabla_W\Lc_t,
\end{align*}
where $\eta$ is the learning rate, and $\Lc_t$ is the abbreviation of $\Lc(u_t, W_t)$.

After step $t_0$, we update $\theta$ as follows:
\begin{align*}
  &  u_{t+1} = u_t - \eta \nabla_u\Lc_t, \quad  W_{t+1} = W_t - \eta \nabla_W\Lc_t
\end{align*}
We remark that such a learning rate schedule can be viewed as an approximation of GD with decaying learning rate or Adam~\citep{kingma2014adam}. To be more specific, Adam updates the parameter in the form of $\theta_{t+1}=\theta_t - \eta m_t/\sqrt{v_t + \epsilon}$, where $m_t\approx g_t$, $v_t\approx g_t^2$, $\epsilon$ is a positive constant, and $g_t$ is the gradient. Thus, during the early training steps, Adam's update behaves closely to the high learning rate regime in GD. In the subsequent training steps when the gradient is relatively small, Adam behaves similar to vanilla GD due to the constant $\epsilon$ in the denominator $\sqrt{v_t + \epsilon}$. 

\section{Even Pairs Problem}\label{sec:evenpairs}

In this section, we characterize the training dynamics of a one-layer transformer for learning the {\em even pairs} task. For simplicity, we choose $I_L = \{\texttt{a,b}\}^L.$

{\bf Key challenges.} The key challenge in this analysis arises from the joint training of the linear and attention layers. The intertwined updates of these layers create a coupled stochastic process, complicating the analysis of parameter evolution.
Furthermore, since every token contributes to both positive and negative samples, this leads to gradient cancellation during analysis, making the analysis more difficult. 

To address the above challenges, we employ higher-order Taylor expansions to precisely analyze the coupling between gradients of two layers and its impact on parameter updates.
We next present our results for these two phases and explain the insights that these results imply. Before we proceed, we introduce the following concepts.

{\bf Token score.} We define the token score of \(E_\ell^w,w\in\{a,b\}\) as \(\langle u_t, E_\ell^w \rangle\), which quantifies the alignment between the token embedding and the linear layer.

{\bf Attention score.} For a given attention layer $W_t$, we define the (raw) attention score of \(E_\ell^w\) as \(\langle E_\ell^w, W_t E_L^{w'} \rangle,w,w'\in\{a,b\}\). For token $x_\ell$, its attention weight is given by \(\varphi_\ell = [\phi(X^\top W_t x_L)]_\ell\), which is proportional to the exponential of its attention score. Importantly, it is the differences between the attention scores of different tokens that govern the attention weight distribution.

{\bf Relationship with the transformer output.} Consequently, the transformer output \(\mathtt{T}(X)\) for an input \(X\) is a weighted average of the token scores of \(x_\ell\), with the attention weights 
$\varphi_\ell$ serving as the weighting factors.




\subsection{Phase 1: Rapid Growth of Token and Attention Scores}  
In Phase 1 of training, the linear and attention layers exhibit mutually reinforcing dynamics, with the featuring dynamics captured in the following theorem. 
\begin{theorem}[Phase 1]\label{thm:phase1}
Let \(-w\) denote the flip of token \(w \in \{\texttt{a}, \texttt{b}\}\). Choose $\lambda=\Omega(L_{\max}^2), t_0 = O(1/(\eta L_{\max}) )$, and $\eta=O(\min\{1/L_{\max},1/\lambda^{2/3}\})$. Then, for all \(t \leq t_0\), the parameters evolve as follows:  

(1) The dynamics of linear layer \(u\) is governed by the following inequalities. 
\begin{align*}
&\langle u_t, E_1^w \rangle = \Theta(\eta t),  \qquad \forall w,
\\
&\langle u_t, E_\ell^w \rangle = -\Theta(\eta^2 t^2),  \qquad  \forall \ell \geq 2, \forall w,
\\
&\langle u_t, E_2^w - E_\ell^w \rangle \leq - \Omega(\eta^2 t^2), \quad \forall \ell \geq 3, \forall w.
\end{align*}
(2) The dynamics of attention layer \(W\) is governed by the following inequalities. For any length $L\leq L_{\max}$, we have
\begin{align*}
   &  \langle E_1^w - E_\ell^{w'}, W_t E_L^w \rangle \geq \Omega(\eta^2 t^2), \quad \forall \ell \geq 2, \forall w, w' \\
   &\langle E_{\ell}^{w'} - E_1^{-w}, W_t E_L^w \rangle \geq \Omega(\eta^2 t^2), \quad \forall \ell \geq 2, \forall w, w'\\
   &\langle E_2^{w'} - E_\ell^{w''}, W_t E_L^w \rangle \geq \Omega(\eta^4 t), \quad \forall \ell \geq 3, \forall w,w',w''.
\end{align*}

\if{0}
     \[
     \langle E_1^w - E_\ell^{w'}, W_t E_L^w \rangle \geq \Omega(\eta^2 t^2), \; \forall \ell \geq 2, \forall w, w'
     \] 
     \[
     \langle E_{\ell}^{w'} - E_1^{-w}, W_t E_L^w \rangle \geq \Omega(\eta^2 t^2), \forall \ell \geq 2, \forall w, w'
     \]
     \[ \langle E_2^{w'} - E_\ell^{w''}, W_t E_L^w \rangle \geq \Omega(\eta^2 t^2), \quad \forall \ell \geq 3, \forall w,w',w''. \]
     \fi
\end{theorem}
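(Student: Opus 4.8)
The plan is to track the evolution of the token scores $\langle u_t, E_\ell^w\rangle$ and the attention scores $\langle E_\ell^w, W_t E_L^{w'}\rangle$ jointly by induction on $t$, using the zero initialization $\theta_0=0$ as the base case. At $t=0$ all token scores vanish and the softmax weights are uniform, $\varphi_\ell = 1/L$, so the transformer output $\mathtt{T}_{\theta_0}(X)=0$ and the logistic derivative equals $-y_n/2$ at every sample. The first step is to compute $\nabla_u\Lc_0$ and $\nabla_W\Lc_0$ explicitly at this symmetric point: since $\mathtt{T}_\theta(X)=u^\top\sum_\ell x_\ell\varphi_\ell$, we have $\nabla_u\Lc = -\sum_{L}\frac{1}{|I_L|}\sum_{n\in I_L}\frac{y_n}{1+\exp(y_n\mathtt{T}(X^{(n)}))}\sum_\ell x_\ell^{(n)}\varphi_\ell^{(n)}$, and $\nabla_W\Lc$ picks up the softmax Jacobian $\varphi_\ell(\delta_{\ell m}-\varphi_m)$ contracted against $u^\top x_\ell$ and the outer product $x_m (x_L)^\top/\lambda$. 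Averaging over all $2^L$ length-$L$ sequences and using the orthonormal embedding $E_\ell^w=e_{2\ell-1}$ or $e_{2\ell}$, the combinatorics collapses: the label $y_n$ depends only on whether the first and last tokens of $X^{(n)}$ agree, so summing over the "interior" tokens $x_2,\dots,x_{L-1}$ symmetrizes everything, and the only surviving structure is the correlation between $x_1$, $x_L$ and $y_n$. This is what produces the leading $\Theta(\eta t)$ growth of $\langle u_t,E_1^w\rangle$ (the first token is the informative one) and keeps the other token scores at lower order.

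Next I would set up the induction. Suppose at step $t$ the claimed bounds hold; write $u_{t+1}=u_t-\eta\nabla_u\Lc_t$ and $W_{t+1}=W_t-\eta\lambda\nabla_W\Lc_t$. The key is that throughout Phase 1 all the relevant scores are $O(\eta t\cdot\mathrm{poly}(L_{\max}))=O(1)$ by the choice $t_0=O(1/(\eta L_{\max}))$, so $\mathtt{T}_\theta(X^{(n)})=O(1)$ and the logistic factor $1/(1+\exp(y_n\mathtt{T}))$ stays bounded away from $0$ and $1$ — hence the gradients are, to leading order, the same as at initialization plus a controlled perturbation. The attention weights satisfy $\varphi_\ell = \frac1L + O(\text{attention score gap}/\lambda)$; because the attention scores are $O(\eta^2t^2)$ and are divided by $\lambda=\Omega(L_{\max}^2)$, the deviation of $\varphi$ from uniform is $O(\eta^2 t^2/\lambda)$, which is negligible at this order. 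This is the crucial reason for choosing $\lambda$ large: it decouples the attention layer's influence on the linear-layer gradient to leading order, so $\langle u_{t+1},E_1^w\rangle - \langle u_t,E_1^w\rangle$ is a constant plus $o(1)$, giving the linear-in-$t$ growth, while the second-order correction feeds the $-\Theta(\eta^2t^2)$ term for $\ell\ge 2$ and the $\eta^4 t$ term for the attention scores. For the attention-score dynamics, $\nabla_W\Lc_t$ contracted with $E_\ell^w (E_L^{w'})^\top$ is, to leading order, proportional to $y_n$ times $(u_t^\top E_\ell^w)$ summed appropriately; since $\langle u_t,E_1^w\rangle=\Theta(\eta t)$ dominates the other token scores, the first-token attention score grows fastest, giving the $\Omega(\eta^2t^2)$ lower bounds on $\langle E_1^w - E_\ell^{w'}, W_tE_L^w\rangle$ (the extra $\eta$ and $\lambda$ factors multiply $\nabla_W$, and $\eta\lambda\cdot(\eta t)$ summed over $t$ steps gives $\eta\lambda\eta t^2 \gtrsim \eta^2 t^2$ after absorbing $\lambda\ge 1$), while the finer separation among $\ell\ge 3$ tokens only emerges from the weaker $-\Theta(\eta^2t^2)$ gap in the token scores, producing the weaker $\Omega(\eta^4 t)$ bound.

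The main obstacle, and where I would spend the most care, is the \emph{gradient cancellation} issue flagged in the "Key challenges" paragraph: each interior token $E_\ell^w$ with $2\le\ell\le L-1$ appears in exactly half the positive and half the negative sequences, so its first-order contribution to $\nabla_u\Lc$ cancels and one must extract the sign of $\langle u_t,E_\ell^w\rangle$ from the \emph{second-order} Taylor term of the logistic loss around $\mathtt{T}=0$ — i.e., from the curvature $\exp(y_n\mathtt{T})/(1+\exp(y_n\mathtt{T}))^2$ expanded to the next order, which is why these scores come out at $\Theta(\eta^2t^2)$ rather than $\Theta(\eta t)$. Making this rigorous requires (i) a uniform bound on the remainder in the higher-order Taylor expansion, valid because $\mathtt{T}=O(1)$, (ii) carefully bounking the perturbation of the attention weights $\varphi_\ell$ away from uniform and propagating it into both gradients (here the $\eta=O(1/\lambda^{2/3})$ scaling is used to ensure the attention perturbation does not corrupt the $\eta^2t^2$-order terms), and (iii) checking that the cross terms between $\delta u = u_t - u_0$ and $\delta W = W_t - W_0$ — the genuinely coupled part — are lower order than the diagonal terms over the whole horizon $t\le t_0$. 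Once these perturbation bounds are in place, summing the per-step increments telescopes to the stated $\Theta(\eta t)$, $\Theta(\eta^2t^2)$, and $\Omega(\eta^4 t)$ expressions, completing the induction.
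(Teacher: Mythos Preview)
Your overall plan---induction on $t$, Taylor-expand the logistic derivative around zero, symmetrize over interior tokens, and control the softmax perturbation via the large $\lambda$---is exactly the paper's approach. Two specific mechanisms you have slightly wrong will matter when you execute the computation.

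First, the $\Theta(\eta t)$ growth of $\langle u_t, E_1^w\rangle$ is \emph{not} driven by ``the correlation between $x_1,x_L$ and $y_n$'' for $L\ge2$: conditioned on $x_1=E_1^w$, exactly half of the length-$L$ sequences have $y_n=+1$ and half $y_n=-1$ (depending on $x_L$), and with uniform $\varphi$ at $t=0$ these cancel exactly, so the $L\ge2$ blocks contribute zero to $\langle-\nabla_u\Lc_0,E_1^w\rangle$. The entire leading term comes from the $L=1$ block, where every single-token sequence is a positive sample and contributes $\tfrac12\cdot\tfrac12$, giving $\langle u_1,E_1^w\rangle=\eta/4$; this persists to $\eta t/4+O(\eta^2t^2)$. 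Second, your accounting ``$\eta\lambda\cdot(\eta t)$ summed over $t$ steps \ldots\ absorbing $\lambda\ge1$'' is off: $\nabla_W\Lc$ already carries a $1/\lambda$ prefactor from the scaled-softmax Jacobian, so the Phase-1 learning rate $\eta\lambda$ for $W$ cancels it \emph{exactly}, and the per-step increment of $\langle E_1^w-E_\ell^{w'},W_tE_L^w\rangle$ is $\eta\cdot\Theta(\eta t)$, summing to $\Theta(\eta^2t^2)$ with no $\lambda$ left over. Finally, the paper first establishes a symmetry lemma ($\langle u_t,E_\ell^{\texttt a}\rangle=\langle u_t,E_\ell^{\texttt b}\rangle$ for all $\ell$, and analogous identities for $W_t$) by a flip-pairing argument, which collapses each length-$L$ block to a single positive and single negative representative sequence; you will want this reduction in hand before attempting the per-step estimates, as it is what makes the Taylor-expansion bookkeeping tractable.
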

\Cref{thm:phase1} characterizes the following featuring dynamics of the linear and attention layers in Phase 1. 
{\bf (a)} The first equation of part (1) indicates that there is a rapid growth of the first token score. This is because sequences of length \(L=1\) (which always have positive labels) dominate the early training, 
and create an initial bias for the first token. {\bf (b)} The first two equations of part (2) indicate that the attention weight of the first token increases in {\it positive} samples (where last token $E_L^w$ and first token $E_1^w$ share the same value $w$), and is suppressed in {\it negative} samples. In other words, the attention layer allocates more weights on non-leading tokens (with $\ell\geq 2$) in negative samples. Consequently, the transformer output of the negative samples relies more on the token scores at non-leading positions. In order to minimize the loss, those token scores become increasingly negative, as shown in the second equation of part (1). {\bf(c)} The last equation of part (1) indicates that the token score of the second token decreases faster than other non-leading tokens, as it appears more frequently across samples. This rapid token score decrease drives the attention layer to allocate more attention weight on it over other non-leading tokens to reduce the loss, as shown in the last equation in part (2).

Due to the fact that attention weight is determined by the differences between attention scores, \Cref{thm:phase1} suggests that at the end of Phase 1, the attention layer focuses on the first token (with \(\varphi_1^{(n,t_0)} > 1/L\) for length-$L$ samples) in {\it positive} samples, and on the second token (with \(\varphi_2^{(n,t_0)} > 1/L\) for length-$L$ samples) in {\it negative} samples. Hence, the attention layer maps data samples to satisfy a \textit{separable} property. To be more specific, we first introduce the definition of separable data. 

\begin{definition}[Separable dataset]
A dataset of $d$-dimensional vectors and their labels $\{(v^{(n)}, y_n)\}_{n=1}^N$ are separable if there exists $u\in\mathbb{R}^d$ such that 
\[ \langle u, y_n v^{(n)} \rangle > 0, \quad \forall 1\leq n\leq N. \]
\end{definition}
Intuitively speaking, a dataset is separable indicates that there exists a hyperplane that can correctly separate the positive and negative samples into two half-spaces.

At the end of Phase 1, the attention layer $W_{t_0}$ maps data samples to $\sum_{\ell=1}^L x_\ell^{(n)}\varphi_\ell^{(n,t_0)}$. It can be shown that if the linear layer has parameter \( u = E_1^{\texttt{a}} + E_1^{\texttt{b}} - E_2^{\texttt{a}} - E_2^{\texttt{b}} \), then, the predicted label of the data sample, which is the sign of $u^\top\sum_{\ell=1}^L x_\ell^{(n)}\varphi_\ell^{(n,t_0)}$, matches with the ground-truth label $y_n$, i.e., $\langle u, y_n \sum_{\ell=1}^L x_\ell^{(n)}\varphi_\ell^{(n,t_0)}\rangle >0$ for all $n$. Thus, we have the following proposition.

\begin{proposition}\label{prop:separable}
    Let $v^{(n)} =  \sum_{\ell=1}^L x_\ell^{(n)}\varphi_\ell^{(n,t_0)}$ with label $y_n$. Then, at the end of phase 1, the dataset $\{(v^{(n)}, y_n)\}$ is separable by $u = E_1^{\texttt{a}} + E_1^{\texttt{b}} - E_2^{\texttt{a}} - E_2^{\texttt{b}}$.
\end{proposition}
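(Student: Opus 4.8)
The plan is to use Theorem~\ref{thm:phase1} to pin down the attention weights $\varphi_\ell^{(n,t_0)}$ at the end of Phase~1, and then directly compute $\langle u, y_n v^{(n)}\rangle$ for the candidate vector $u = E_1^{\texttt{a}} + E_1^{\texttt{b}} - E_2^{\texttt{a}} - E_2^{\texttt{b}}$. First I would split the training samples into three groups according to their length $L$ and their label. For $L=1$, the sequence is a single token $E_1^w$, which is always a positive sample, so $v^{(n)} = E_1^w$ and $\langle u, y_n v^{(n)}\rangle = \langle u, E_1^w\rangle = 1 > 0$ by orthonormality of the embeddings. For $L\geq 2$, recall that the even pairs label is positive exactly when the first and last tokens agree: a positive sample has last token $E_L^w$ and first token $E_1^w$ with the same symbol $w$, while a negative sample has first token $E_1^{-w}$ when the last token is $E_L^w$.

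For $L\geq 2$ positive samples, the first two attention inequalities of part~(2) of Theorem~\ref{thm:phase1} give $\langle E_1^w - E_\ell^{w'}, W_{t_0} E_L^w\rangle \geq \Omega(\eta^2 t_0^2)$ for all $\ell\geq 2$, i.e. the raw attention score of the (correct) first token $E_1^w$ exceeds that of every other token by a positive margin. Since $\varphi_\ell$ is the softmax of these scores and there are at most $L$ tokens, this forces $\varphi_1^{(n,t_0)} > 1/L$ (the first token carries more than the uniform share of the weight), and hence $\sum_{\ell\geq 2}\varphi_\ell^{(n,t_0)} < 1 - 1/L$. Now compute
\begin{align*}
\langle u, y_n v^{(n)}\rangle &= \langle u, v^{(n)}\rangle = \sum_{\ell=1}^L \varphi_\ell^{(n,t_0)} \langle u, x_\ell^{(n)}\rangle.
\end{align*}
Because the embeddings $\{e_i\}$ are orthonormal and $u$ is supported on coordinates of positions $1$ and $2$, we have $\langle u, x_\ell^{(n)}\rangle = 1$ when $\ell=1$ (first token), $\langle u, x_\ell^{(n)}\rangle = -1$ when $\ell=2$, and $\langle u, x_\ell^{(n)}\rangle = 0$ for $\ell\geq 3$. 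Thus the inner product equals $\varphi_1^{(n,t_0)} - \varphi_2^{(n,t_0)}$, and since $\varphi_1^{(n,t_0)} > 1/L \geq \varphi_2^{(n,t_0)}$ cannot hold from $\varphi_1$ alone — one needs $\varphi_1 > \varphi_2$, which follows because the score gap $\langle E_1^w - E_2^{w'}, W_{t_0}E_L^w\rangle \geq \Omega(\eta^2 t_0^2) > 0$ makes $\varphi_1^{(n,t_0)} > \varphi_2^{(n,t_0)}$ — we get $\langle u, y_n v^{(n)}\rangle > 0$. For $L\geq 2$ negative samples, the symmetric argument uses the second and third inequalities of part~(2): the score of $E_2^{w'}$ dominates that of $E_1^{-w}$ (the actual first token, by the second inequality with the flip) and of every $E_\ell^{w''}$ with $\ell\geq 3$ (by the third inequality), so $\varphi_2^{(n,t_0)} > \varphi_1^{(n,t_0)}$ and $\varphi_2^{(n,t_0)} > \varphi_\ell^{(n,t_0)}$ for $\ell\geq 3$; then $\langle u, v^{(n)}\rangle = \varphi_1^{(n,t_0)} - \varphi_2^{(n,t_0)} < 0$, and multiplying by $y_n = -1$ yields $\langle u, y_n v^{(n)}\rangle > 0$.

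The main obstacle is the bookkeeping that translates the raw-score inequalities of Theorem~\ref{thm:phase1} into strict comparisons of softmax weights $\varphi_\ell^{(n,t_0)}$ in a way that is uniform over all sequences of all lengths $L\leq L_{\max}$ — in particular verifying that the positive lower bounds $\Omega(\eta^2 t_0^2)$ and $\Omega(\eta^4 t_0)$ on the score gaps are genuinely positive at $t=t_0$ under the stated choices $\lambda = \Omega(L_{\max}^2)$, $t_0 = O(1/(\eta L_{\max}))$, $\eta = O(\min\{1/L_{\max}, 1/\lambda^{2/3}\})$, and that they survive the normalization by at most $L_{\max}$ terms. Once the comparisons $\varphi_1 > \varphi_2$ (positive samples) and $\varphi_2 > \varphi_1$ (negative samples) are secured, the remainder is the short orthogonality computation above. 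I would also remark that the specific constant vector $u$ is not essential — any $u$ of the form $c_1(E_1^{\texttt{a}}+E_1^{\texttt{b}}) - c_2(E_2^{\texttt{a}}+E_2^{\texttt{b}})$ with $c_1, c_2 > 0$ works — which is why the separability claim is robust and foreshadows the max-margin direction analyzed in Phase~2.
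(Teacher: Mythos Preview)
Your proposal is correct and follows essentially the same route as the paper: both reduce $\langle u, y_n v^{(n)}\rangle$ to $y_n(\varphi_1^{(n,t_0)} - \varphi_2^{(n,t_0)})$ via orthogonality of the embeddings and then invoke the attention-score gap inequalities of Theorem~\ref{thm:phase1} to fix the sign (the paper writes the softmax ratio explicitly as an exponential of the score gap, which is the same comparison). Your closing remark that \emph{any} $c_1,c_2>0$ works is not quite accurate---one needs $c_2/c_1$ bounded by the ratio $\varphi_1/\varphi_2$ on positive samples, which can be close to $1$ since the scores are divided by the large $\lambda$---but this side comment does not affect the main argument.
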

We note that at the end of Phase 1, the linear layer is trained to be \( u_{t_0} \), which may not necessarily separate the attention layer's outputs. In fact, the continual training into Phase 2 will further update the linear layer, so that the attention layer's outputs can be separated by it.

\subsection{Phase 2: Margin Maximization and Implicit Bias} 

In this phase, the transformer shifts focus from rapid feature alignment to margin maximization, driven by the implicit bias of gradient descent. 

Since the outputs of the attention layer at the end of Phase 1 become separable, we can define the max-margin solution of the corresponding separating hyperplane as follows.
\begin{align*} 
u_{EP}^* = &\; \arg\min\|u\| \\ 
\text{s.t.} &\; \textstyle \left\langle u, y_n \sum_{\ell=1}^L x_\ell^{(n)}\varphi_\ell^{(n,t_0)} \right\rangle \geq 1, \quad \forall n\in I. 
\end{align*}

The following theorem characterizes the training dynamics of Phase 2, which shows that the linear layer will converge to the above max-margin solution in direction.
\begin{theorem}[Phase 2]\label{thm:implicit bias}
There exists a constant $t_2 = \Omega(1)$ and $T=O(\lambda^{2/3}/(\eta L_{\max}))$, such that for $t_2\leq t \leq  T$, we have $\|u_t\|\geq \Omega(\log t)$, and
    \begin{align*}
  \textstyle       \left\langle \frac{u_{t}}{\|u_{t}\|}, \frac{u_{EP}^*}{\|u_{EP}^*\|}  \right\rangle \geq 1 - \frac{1}{2}\left( \frac{1}{6\|u_{EP}^*\|} - \frac{1}{\|u_t\|} \right)^2.
    \end{align*}
Moreover, we have $\|W_t - W_{t_0}\| \leq O(1)$.
\end{theorem}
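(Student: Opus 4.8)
\textbf{Proof proposal for Theorem~\ref{thm:implicit bias}.}

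The plan is to reduce the Phase 2 analysis to a classical implicit-bias argument for logistic regression on a (nearly) fixed separable dataset, while simultaneously controlling the drift of the attention layer $W$. The starting point is Proposition~\ref{prop:separable}: at $t=t_0$ the vectors $v^{(n)}_{t} = \sum_{\ell=1}^L x_\ell^{(n)}\varphi_\ell^{(n,t)}$ are separable, and moreover the separating direction $u = E_1^{\texttt a}+E_1^{\texttt b}-E_2^{\texttt a}-E_2^{\texttt b}$ has a quantitative margin that Theorem~\ref{thm:phase1} makes explicit (the attention weight gaps are $\varphi_1^{(n,t_0)}-\varphi_\ell^{(n,t_0)} \ge \Omega(1/L)$ on positive samples, and symmetrically on negative samples). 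First I would freeze $W=W_{t_0}$ and pretend the $v^{(n)}$ are constant; then $\Lc$ restricted to $u$ is a standard logistic loss over a separable dataset, and the results of \citet{soudry2018implicit,ji2021characterizing} give $\|u_t\|=\Omega(\log t)$ and directional convergence to the max-margin solution $u_{EP}^*$. The refined rate with the $\left(\tfrac{1}{6\|u_{EP}^*\|}-\tfrac1{\|u_t\|}\right)^2$ bound would come from the now-standard potential/entropy argument comparing $\langle u_t, u_{EP}^*\rangle$ against $\|u_t\|$, using that along gradient descent $\langle u_t, u_{EP}^*\rangle \ge \|u_t\| - O(1) - (\text{margin terms})$; the constant $1/6$ is an artifact of the explicit margin lower bound.

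The genuinely new part — and the main obstacle — is that $W$ is \emph{not} frozen: in Phase 2 the update is $W_{t+1}=W_t-\eta\nabla_W\Lc_t$, so I must show $\|W_t-W_{t_0}\|\le O(1)$ and that this drift perturbs the $v^{(n)}$ so little that the separability margin (and hence the implicit-bias conclusion) survives. The key quantitative lever is the scaling $\lambda=\Omega(L_{\max}^2)$ together with $T=O(\lambda^{2/3}/(\eta L_{\max}))$: I would bound $\|\nabla_W\Lc_t\|$ by noting that each per-sample gradient in $W$ carries a factor $1/\lambda$ from the softmax derivative and a factor $\ell'(y_n\mathtt T_\theta(X^{(n)}))=O(\exp(-y_n\mathtt T_\theta))$ from the logistic loss. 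Since Phase 1 already drove $y_n\mathtt T_\theta(X^{(n)})$ to be positive and growing (token scores of order $\eta t$ for the first token, attention concentrated correctly), the loss derivative decays fast enough that $\sum_{t\le T}\eta\|\nabla_W\Lc_t\| = O(\tfrac{1}{\lambda}\sum_t \eta\, \ell'_t)$. Using $\|u_t\|=\Theta(\log t)$ and the margin to get $\ell'_t = O(1/t^{c})$ for some $c>0$ (or at worst $\ell'_t=O(1/(t\log t))$ by the same entropy argument), the sum is $O(T/\lambda)$ up to logs, which is $O(\lambda^{-1/3}/(\eta L_{\max}))\cdot \eta = o(1)$ — in particular $O(1)$ — as required; this is exactly where the choice of $\lambda$ and the horizon $T$ are calibrated.

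Concretely the steps, in order, would be: (i) record from Theorem~\ref{thm:phase1} an explicit margin $\gamma=\Omega(1/L_{\max})$ for the dataset $\{(v^{(n)}_{t_0},y_n)\}$ under $u=E_1^{\texttt a}+E_1^{\texttt b}-E_2^{\texttt a}-E_2^{\texttt b}$, giving $\|u_{EP}^*\|=O(L_{\max})$; (ii) set up a bootstrap/induction over $t_0\le t\le T$ carrying the three invariants $\|W_t-W_{t_0}\|\le O(1)$, $\|u_t\|\le O(\log t)$, and $v^{(n)}_t$ separable with margin $\ge \gamma/2$; (iii) in the inductive step, bound the $W$-gradient via the $1/\lambda$ factor and the decaying loss derivative to advance the first invariant, and use the perturbation $\|v^{(n)}_t - v^{(n)}_{t_0}\|=O(\|W_t-W_{t_0}\|/\lambda)=o(\gamma)$ (softmax is $O(1/\lambda)$-Lipschitz in $W$ here because the raw attention scores are $O(1)$) to advance the third; (iv) with $v^{(n)}_t$ pinned to within $o(\gamma)$ of a fixed separable set, run the logistic implicit-bias machinery on the $u$-update to advance the second invariant, lower-bound $\|u_t\|\ge \Omega(\log t)$, and derive the stated directional inequality. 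The only delicate point beyond bookkeeping is ensuring the loss derivatives $\ell'_t$ decay \emph{before} $W$ has drifted enough to break separability — i.e. that the two rates (logarithmic growth of $\|u_t\|$ versus $O(1/\lambda)$-per-step drift of $W$) are compatible over the whole window $[t_0,T]$ — which is precisely what the interlocking choices $\lambda=\Omega(L_{\max}^2)$, $t_0=O(1/(\eta L_{\max}))$, $T=O(\lambda^{2/3}/(\eta L_{\max}))$ are designed to guarantee, so the induction closes.
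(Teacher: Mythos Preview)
Your overall strategy---reduce Phase 2 to an implicit-bias argument for $u$ on an (approximately) fixed separable dataset, while controlling the drift of $W$---matches the paper. But the way you propose to control $\|W_t-W_{t_0}\|$ differs substantively from what the paper does, and your version has two concrete problems.

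First, your bound $\sum_{t\le T}\eta\|\nabla_W\Lc_t\| = O\bigl(\tfrac{1}{\lambda}\sum_t \eta\,\ell'_t\bigr)$ drops a factor of $\|u_t\|$: the $W$-gradient carries $u_t^\top x_\ell^{(n)}$ inside, so the correct estimate is $\|\nabla_W\Lc_t\|=O(\|u_t\|L_{\max}/\lambda)$ per step. Second, and more structurally, using decay of $\ell'_t$ to bound the $W$-drift is circular in exactly the way you flag at the end: you need $y_n\mathtt T_t(X^{(n)})>0$ and growing to get $\ell'_t$ small, but the paper explicitly notes that $u_{t_0}$ need not separate the attention outputs, so there is a transient period where $\ell'_t=\Theta(1)$ and your sum blows up before alignment happens. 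The paper sidesteps this entirely by using the \emph{crude} bounds $\|u_t\|\le \eta L_{\max} t$ and hence $\|W_t-W_{t_0}\|\le \eta^2 L_{\max}^2 t^2/\lambda$, with no appeal to loss decay; the horizon $T=O(\lambda^{2/3}/(\eta L_{\max}))$ is then calibrated so that the \emph{ratios} $\|u_{EP}^*\|\|W_t-W_{t_0}\|/\lambda$ and $\|u_t\|\|W_t-W_{t_0}\|/\lambda$ stay below absolute constants (these are what actually enter the perturbation of $v^{(n)}_t$ and of $y_n\mathtt T_t$). This is simpler than your bootstrap and does not require the invariants to ``interlock''.

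On the $u$-side, rather than invoking Soudry--Ji directly, the paper's key step is a convexity comparison: for each $t$ they show $\Lc\bigl(u_{EP}^*(\|u_t\|/\|u_{EP}^*\|+C_0),\,W_t\bigr)\le \Lc(u_t,W_t)$ for an explicit constant $C_0$, and first-order optimality of convex $\Lc(\cdot,W_t)$ then gives $\langle -\nabla_u\Lc_t,\,u_t/\|u_t\|\rangle \le (1+C_0\|u_{EP}^*\|/\|u_t\|)\langle -\nabla_u\Lc_t,\,u_{EP}^*/\|u_{EP}^*\|\rangle$. This inequality simultaneously drives the logarithmic lower bound on $\|u_t\|$ and, after telescoping with a second time threshold $t_2$ chosen so $\|u_t\|$ is large enough, the stated directional bound with the $1/6$ constant. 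Your ``potential/entropy'' description is in the right spirit, but the paper's argument is robust to the time-varying $W_t$ precisely because the comparison point $u_{EP}^*$ is defined at the frozen $W_{t_0}$ and the $W$-perturbation is absorbed into $C_0$ via the crude drift bound above. (Minor: your invariant (ii) ``$\|u_t\|\le O(\log t)$'' should be a lower bound.)
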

\Cref{thm:implicit bias} characterizes the following dynamics of the linear and attention layers in Phase 2. \textbf{(a)} The norm $\|u_t\|$ of the linear layer continues to grow logarithmically to increase the classification margin. \textbf{(b)} The updates of the attention layer $W_t$ is negligible due to the scaling factor $\lambda$, as indicated by $\|W_t - W_{t_0}\| \leq O(1)$. Hence, attention patterns at the end of Phase 1 persist, i.e., in both positive and negative samples, the attention scores of the first and second tokens still dominate, respectively. \textbf{(c)} The linear layer \(u\) enters a regime governed by implicit bias, which converges to the max-margin solution for separating the attention layer's outputs. This dynamics is also observed in an empirical work \citep{merrill2021effects}.


\begin{theorem}[Convergence of loss]\label{thm:loss_evenpairs}
    For $t\leq T$, we have
    \(\Lc_t = O\left( \frac{ L_{\max}\|u_{EP}^*\|^2 }{\eta \sqrt{t} } \right)\).
\end{theorem}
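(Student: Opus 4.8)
The plan is to leverage the implicit-bias picture established in Theorem~\ref{thm:implicit bias}: in Phase 2 the attention layer is essentially frozen ($\|W_t - W_{t_0}\| \leq O(1)$), so each datum $X^{(n)}$ acts, up to negligible perturbation, through the fixed separable feature $v^{(n)} = \sum_{\ell=1}^{L} x_\ell^{(n)} \varphi_\ell^{(n,t_0)}$, and the loss reduces to a logistic loss in the single variable $u_t$ on a separable dataset. First I would write $\Lc_t = \sum_{L} \frac{1}{|I_L|}\sum_{n\in I_L} \log(1 + \exp(-y_n \langle u_t, v^{(n)}\rangle)) + (\text{perturbation})$, and argue the perturbation term coming from $W_t \neq W_{t_0}$ is lower order because the change in each $\varphi_\ell^{(n,t)}$ is $O(\|W_t - W_{t_0}\|/\lambda) = O(1/\lambda)$ while $\|u_t\|$ only grows like $\log t$, so the induced change in each logit is $o(1)$ relative to the margin. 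Then I would use the standard fact that for logistic loss $\log(1+e^{-z}) \leq e^{-z}$, giving $\Lc_t \leq \sum_L \frac{1}{|I_L|}\sum_{n\in I_L} \exp(-y_n \langle u_t, v^{(n)}\rangle)$.

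Next I would convert the directional-alignment bound of Theorem~\ref{thm:implicit bias} together with the norm growth $\|u_t\| \geq \Omega(\log t)$ into a lower bound on the margin $y_n \langle u_t, v^{(n)}\rangle$ for every $n$. Write $u_t = \|u_t\| \hat u_t$ with $\hat u_t$ the unit vector; the alignment inequality $\langle \hat u_t, \hat u_{EP}^*\rangle \geq 1 - \frac{1}{2}(\frac{1}{6\|u_{EP}^*\|} - \frac{1}{\|u_t\|})^2$ means $\hat u_t$ is within a small angle of $\hat u_{EP}^* = u_{EP}^*/\|u_{EP}^*\|$, and since $u_{EP}^*$ satisfies $y_n \langle u_{EP}^*, v^{(n)}\rangle \geq 1$ by definition of the max-margin solution, we get $y_n \langle \hat u_t, v^{(n)}\rangle \geq \frac{1}{\|u_{EP}^*\|} - (\text{error from misalignment})\cdot\max_n\|v^{(n)}\|$. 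Here $\max_n \|v^{(n)}\| = O(1)$ since $\varphi^{(n,t_0)}$ is a probability vector and the $x_\ell$ are unit vectors. Tracking constants, one should obtain $y_n\langle u_t, v^{(n)}\rangle \geq \|u_t\| \cdot c/\|u_{EP}^*\|$ for an absolute constant $c$ (the $1/6$ factor and the quadratic error term are precisely engineered so the effective margin stays a fixed fraction of $1/\|u_{EP}^*\|$). Substituting into the exponential bound yields $\Lc_t \leq (\sum_L 1) \exp(-c\|u_t\|/\|u_{EP}^*\|) \leq L_{\max} \exp(-c\|u_t\|/\|u_{EP}^*\|)$.

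Finally I would insert the quantitative norm growth. To get a $1/\sqrt t$ rate (rather than just "eventually small"), I need $\|u_t\| \geq \frac{\|u_{EP}^*\|}{c}\left(\frac{1}{2}\log t + O(1)\right)$, i.e. a logarithmic lower bound with the correct leading constant tied to $\|u_{EP}^*\|$; this should follow from the Phase~2 analysis underlying Theorem~\ref{thm:implicit bias}, where $\|u_t\|$ grows by roughly $\eta \cdot (\text{gradient magnitude}) \approx \eta \Lc_t$-type increments, so $\|u_t\| \approx \frac{\|u_{EP}^*\|}{c}\log(\eta t) + \cdots$; carrying the $\eta$-dependence through gives the stated $\Lc_t = O(L_{\max}\|u_{EP}^*\|^2/(\eta\sqrt t))$, where the extra $\|u_{EP}^*\|^2$ and $1/\eta$ absorb the constants from the per-step increment size. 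The main obstacle I anticipate is making the frozen-attention reduction fully rigorous: I must show the cumulative effect of the small but nonzero attention drift over the $\Omega(T)$ steps of Phase~2 does not erode the effective margin — this requires that the $O(1/\lambda)$ per-step logit perturbation, even summed and amplified by $\|u_t\| = O(\log t)$, stays below the $\Omega(1/\|u_{EP}^*\|)$ margin slack, which is exactly why $\lambda = \Omega(L_{\max}^2)$ and $T = O(\lambda^{2/3}/(\eta L_{\max}))$ are chosen as they are; verifying these scalings interlock correctly is the delicate part, whereas the logistic-loss-to-exponential and alignment-to-margin steps are routine.
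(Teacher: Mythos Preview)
Your proposal is correct and follows essentially the same route as the paper. The paper packages your margin step into its Lemma~\ref{lemma:output>norm} (the formal version of Theorem~\ref{thm:implicit bias}), which directly records $y_n\mathtt{T}_t(X^{(n)}) \geq \tfrac{5\|u_t\|}{6\|u_{EP}^*\|}$ with the attention drift already absorbed, and then combines it with the norm lower bound $\|u_t\|\geq \tfrac{3\|u_{EP}^*\|}{5}\log\!\big(\eta(t-t_0-1)/(14\|u_{EP}^*\|^2)\big)$ to obtain exactly the $1/\sqrt{t}$ rate you anticipate (the product $\tfrac{5}{6}\cdot\tfrac{3}{5}=\tfrac{1}{2}$ is what produces the square root).
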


\Cref{thm:loss_evenpairs} indicates that the loss converges to \(O\left(\frac{ \eta^{1/2} }{ \lambda^{1/3} } \right)\). Therefore, as long as the scaling factor $\lambda = \Omega(\frac{\eta^{2/3}}{\epsilon^3})$, the loss can achieve arbitrarily small value $\epsilon$.  The full proof in this section can be found in \Cref{sec:prove EP}.

In summary, the trained transformer utilizes its attention to decide if two tokens are equal and the linear layer increases the classification margin and enable fast loss decay. 

\section{Parity Check Problem}

The parity check problem is generally considered to be more difficult than the even pairs problem. For instance, it has been shown in \citet{perez2021attention} that it is impossible to recognize parity by applying transformer once. However, it has recently been shown in \citet{kim2024transformersCoT} that chain-of-thought (CoT) can serve as an advanced approach to solving such a task. 

In this section, we provide two new approaches to solving the parity check problem, by integrating CoT with the solution for the even pairs problem studied in \Cref{sec:evenpairs}. The first approach solves parity check by applying the one-layer transformer well-trained for even pairs via a truncated CoT-type {\em inference}, which does not require any additional training. The second approach trains a one-layer transformer with CoT under teacher forcing, where  GD provably renders a transformer that solves the parity check problem. 

\subsection{Approach 1: Inference via Truncated CoT}\label{sec:inference}
Inspired by the 2-state machine, we show that by simply taking {\em inference} via truncated CoT, the one-layer transformer well-trained for the even pairs problem can solve parity check efficiently without additional training. 

To formalize this, we first outline the method to solve parity check through the lens of a 2-state finite automaton. Recall that the parity check problem is to determine whether the number of \texttt{b}s in a binary string $X\in\{\texttt{a,b}\}^L$ is even or odd. Given an \(X = w_1w_2\cdots w_L\), the automaton initializes its state \(s_1\) to $w_1$ and updates \(s_t\) for \(t \geq 2\) sequentially as follows. At each step \(t \geq 2\), the state transits to $s_{t+1}=\texttt{a}$ if $s_t=w_{t+1}$, or $s_{t+1}=\texttt{b}$ if \(s_t\neq w_{t+1}\). For example, for \(X = \texttt{abb}\), the state transitions are \(s_1 = \texttt{a} \to s_2 = \texttt{b}\) (as \(s_1\neq w_2 = \texttt{b}\)) \(\to s_3 = \texttt{a}\) (as \(s_2 = w_3 = \texttt{b}\)). The character \texttt{a} of the last state indicates that the sequence takes even parity (we equate $1$ with token \texttt{a} and $-1$ with token \texttt{b}). It is worth noting that the core step of parity check involves comparing two characters each time, which is also performed in even pairs except that the characters to be checked are fixed to be the first and last tokens.

Inspired by this observation, we propose the following inference method via truncated CoT to solve parity check.

{\bf Inference via truncated CoT.} Recall that the even pairs problem is equivalent to labeling whether the first and the last tokens are the same. Hence, the transformer trained for even pairs can provide correct labels for such a task during the inference. We thus propose truncated CoT that leverages the label predicted by the trained transformer for even pairs iteratively to obtain the answer for parity check. Such an {\em inference} process runs as follows, with the pseudocode provided in \Cref{alg:TCoT}. Given a binary sequence \(X = w_1\cdots w_L\), at each iteration \(t \in \{1, \dots, L-1\}\) of CoT, it performs the following steps:
(1) check whether the first and the last tokens of $X$ are equal by applying the one-layer transformer trained for even pairs; (2) append the predicted label \(y = \texttt{a}\) or \(y = \texttt{b}\) to the end of the sequence $X$ (we equate $1$ with token \texttt{a} and $-1$ with token \texttt{b}); (3) remove the first token in $X$ to maintain the sequence length. Then after \(L-1\) iterations, the final prediction provides the parity of the original input sequence. 

\begin{algorithm}[tb]
   \caption{Truncated CoT}
   \label{alg:TCoT}
\begin{algorithmic}
   \STATE {\bfseries Input:} Binary sequence $X=w_1\cdots w_L$
   \FOR{$t\in\{1,\ldots,L-1\}$}
   \STATE Predict $y_t = \mathtt{T}_{\theta_T}(X)$
   \STATE Let $w_{L+t}=\texttt{a}$ if $y_t=1$ and $w_{L+t} = \texttt{b}$ if $y_t=-1$.
   \STATE Update $X = w_{t+1}\cdots w_{L+t}$.
   \ENDFOR
   \STATE {\bf Output} $y_{L-1}$.
\end{algorithmic}
\end{algorithm}

\subsection{Approach 2: Training with CoT under Teacher Forcing}
In this section, we train a one-layer transformer with the full version of Chain-of-Thought (CoT) to solve the parity check problem. Unlike the truncated CoT described in \Cref{sec:inference}, we keep the original sequence and only append the predicted label to the end of the sequence at each iteration. 

To explain how we train a one-layer transformer to execute CoT learning of parity, we first describe how the parity of a sequence can be obtained via CoT step by step.

For a given sequence $X = w_1\cdots w_{L_0}$ with length $L_0$, we generate CoT inputs $X^1, \ldots, X^{L_0-1}$ and their corresponding labels to learn the parity as follows. First, let $X^1=X$. For each $t\in\{1,\ldots,L_0-1\}$, take $X^t$ and compare its tokens $w_{L_0+t-1}$ and $w_{t}$. If they are the same, let $w_{L_0+t} = \texttt{a}$; otherwise, let $w_{L_0 + t} = \texttt{b}$. Then set $w_{L_0 + t}$ to be the label of $X^{t}$, and append $w_{L_0 + t}$ to the end of $X^t$ to {obtain $X^{t+1}=w_1\cdots w_{L_0+t}$} for the next step of CoT. Finally, the label $w_{2L_0 - 1}$ is the parity of $X$ (See \Cref{pic:examplCoT}). 

\begin{figure}[ht]
\begin{center}
\centerline{\includegraphics[width=\columnwidth]{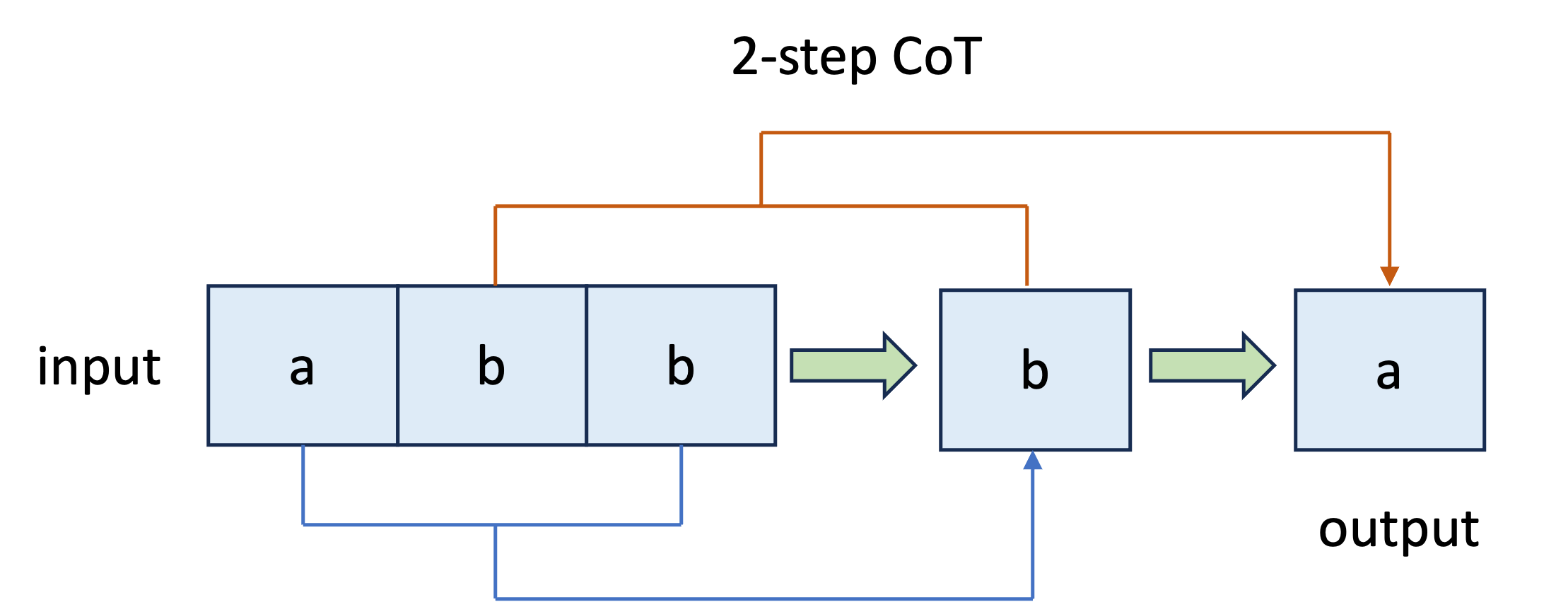}}
\caption{An illustration of CoT on input \texttt{abb}, where $L_0 = 3$.}
\label{pic:examplCoT}
\end{center}
\vskip -0.2in
\end{figure}

{\bf Training design.} To train a transformer to learn each step $1\leq t \leq L_0-1$ of CoT, our dataset should be labeled to compare the token $w_{L_0+t-1}$ (last token of the input sequence of the current CoT step) and $w_{t}$, and such a label is used to train (supervise) the transformer to conduct the $t$-th step of CoT correctly. Thus, to train the $t$-th step of CoT, we use the set $I_L$ of length-$L$ sequences (where $L=L_0+t-1$), and for each sequence $X^{(n)}\in I_L$, set its label $y^p_n$ to be 1 if the last token matches the token at position $L-L_0+1=t$, and $-1$ otherwise. The superscript `$p$' in $y^p_n$ indicates that the label is constructed for parity check. Thus, the training of CoT will use data sequences with length $L$  where $L_0 \leq L \leq 2L_0-1$ and the total loss is given by
\begin{align*}
    &\Lc_{CoT}(u,W) \\
    &\textstyle= \sum_{L=L_0}^{2L_0-1}\frac{1}{|I_L|}\sum_{n\in I_L} \log \left(1 + \exp\left(-y^p_n\mathtt{T}_{\theta}(X^{(n)}) \right)\right),
\end{align*} 
where we adopt the one-layer transformer described in \Cref{sec:problemformulation} with the dimension $d\geq  2L_0 -2 $.

Furthermore, we observe in our experiments that if we directly train transformers over the above CoT loss $\Lc_{CoT}(u,W)$, the gradient vanishes. Interestingly, initializing transformer by that trained for even pairs helps to avoid such a case. Motivated by such an observation, we introduce the even pairs loss to regularize the training process. To this end, we include data sequences with length $L < L_0$ for the even pairs loss, and label those sequences by their even pairs labels. Namely, for any sequence $X^{(n)}$, where $n\in I_L$ with $L \leq L_0$, set the label $y^e_n$ to be 1 if the last token matches the first token, and $-1$ otherwise. The superscript `$e$' in $y^e_n$ indicates that the label is constructed for even pairs. These data sequences provide a regularization loss given by
\begin{align*}
   & \Lc_{Reg}(u,W) \\
   & \textstyle= \sum_{L=1}^{L_0-1}\frac{1}{|I_L|}\sum_{n\in I_L} \log \left(1 + \exp\left(-y^e_n\mathtt{T}_{\theta}(X^{(n)}) \right)\right).
\end{align*} 
The role of the regularization loss is to initially guide the linear layer to rapidly increase along the direction of solving even pairs problems, which will also initialize the parameters to learn parity check in a stable way. We note that such regularization is also equivalent to data mixing technique.

Hence, the total training loss for parity check is given by
\begin{align}\label{eq:parityloss}
\Lc_{Parity}= \Lc_{CoT}(u,W)+\Lc_{Reg}(u,W). 
\end{align} 
We minimize the above loss function by gradient descent as described in \Cref{sec:problemformulation} for training the parity check problem. In the following, we choose $I_L = \{\texttt{a,b}\}^L$ for simplicity.
 





\subsection{Training Dynamics of Approach 2}

The training process of CoT under teacher forcing can also be divided into two training phases. Below, we present the theoretical characterization of those two phases. 
\begin{theorem}[Phase 1]\label{thm:phase1CoT}
Let \(-w\) denote the flip of token \(w \in \{\texttt{a}, \texttt{b}\}\). Choose $\lambda=\Omega(L_{\max}^2)$ and $t_0 = O(1/(\eta L_{\max}) )$. Then, for all \(t \leq t_0\), the parameters evolve as follows:  

(1) The dynamics of linear layer \(u\) is governed by the following inequalities.   
     \begin{align*}
  &   \langle u_t, E_1^w \rangle = \Theta(\eta t), \quad \forall w, \\
  &   \langle u_t, E_\ell^w \rangle = -\Theta(\eta^2 t^2), \quad \forall \ell \geq 2, \forall w, \\
  &    \langle u_t, E_2^w - E_\ell^w \rangle \leq - \Omega(\eta^2 t^2), \quad \forall \ell \geq 3,\forall w. 
     \end{align*}

(2) The dynamics of the attention layer \(W\) is governed by the following inequalities. For any length $L< L_{0}$, we have
     \begin{align*}
   &  \langle E_1^w - E_\ell^{w'}, W_t E_L^w \rangle \geq \Omega(\eta^2 t^2), \quad \forall \ell \geq 2, \forall w, w', \\
  &   \langle E_{\ell}^{w'} - E_1^{-w}, W_t E_L^w \rangle \geq \Omega(\eta^2 t^2), \quad\forall \ell \geq 2, \forall w, w', \\
   &  \langle E_2^{w'} - E_\ell^{w''}, W_t E_L^w \rangle \geq \Omega(\eta^4 t), \quad \forall \ell \geq 3, \forall w,w',w''. 
     \end{align*}

For length $L\geq L_0$, let $\ell_0 = L- L_0 +1$, and we have
\begin{align*}
   &  \langle E_{\ell_0}^{-w} - E_\ell^{w'}, W_t E_L^w \rangle \geq \Omega(\eta^2 t^2), \quad \forall \ell \neq \ell_0,\forall w, w',\\
 &    \langle E_{\ell}^{w'} - E_{\ell_0}^{-w}, W_t E_L^w \rangle \geq \Omega(\eta^2 t^2), \quad \forall \ell \neq \ell_0, {\forall w, w',} \\
  &   \langle E_1^{w'} - E_\ell^{w''}, W_t E_L^w \rangle \geq \Omega(\eta^4 t), \quad \forall \ell \neq 1, \ell_0, {\forall w, w',w''}. 
     \end{align*}
\end{theorem}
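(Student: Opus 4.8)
The plan is to reuse the coupled two-layer dynamics analysis developed for even pairs Phase~1 (\Cref{thm:phase1}), adapted to the composite objective $\Lc_{Parity}=\Lc_{CoT}+\Lc_{Reg}$ and to the fact that the discriminative position is now length-dependent: it is position $1$ for the regularization samples ($L<L_0$), but position $\ell_0=L-L_0+1$ for the CoT samples ($L\ge L_0$). I would run a single induction on $t\le t_0$ that simultaneously maintains all of the claimed inequalities together with auxiliary bounds: uniform control of $\|u_t\|$ and $\|W_t\|$, of the logistic factors $\sigma_n^{(t)}=(1+\exp(y_n\mathtt T_{\theta_t}(X^{(n)})))^{-1}$ staying within a constant factor of $1/2$, and of every attention weight $\varphi_\ell^{(n,t)}$ staying within $O(1/L)$ of the uniform value. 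Because $\theta_0=0$, at the first step the attention is exactly uniform, all transformer outputs vanish, and all $\sigma_n^{(0)}=1/2$; writing out $\nabla_u\Lc_{Parity}$ and $\nabla_W\Lc_{Parity}$ (with the softmax Jacobian for the latter) then shows that the only leading-order drift is produced by the length-$1$ sequences of $\Lc_{Reg}$, which carry label $+1$ deterministically and yield $\langle u_t,E_1^w\rangle=\Theta(\eta t)$. Every other embedding $E_\ell^w$ ($\ell\ge2$) occurs in equally many positively- and negatively-labeled sequences of each length, so its first-order drift cancels --- this gradient cancellation is what forces the analysis to second order.

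\textbf{The feedback loop.} Once $\langle u_t,E_1^w\rangle=\Theta(\eta t)$, I would Taylor-expand the softmax about the uniform point to extract the $O(\eta^2t^2)$ correction to $\nabla_W\Lc_{Parity}$. For a regularization sample, a slightly larger $\varphi_1$ raises the output when the first and last tokens agree and lowers it when they disagree, and the sign of $\sigma_n y_n$ then makes the two position-$1$-centered attention-score gaps $\langle E_1^w-E_\ell^{w'},W_tE_L^w\rangle$ and $\langle E_\ell^{w'}-E_1^{-w},W_tE_L^w\rangle$ grow at rate $\Omega(\eta^2t^2)$, exactly as in even pairs. For a CoT sample the same computation --- but tracking position $\ell_0$ rather than $1$, since the label compares $w_{\ell_0}$ with $w_L$ --- gives the $\Omega(\eta^2t^2)$ growth of the $\ell_0$-centered gaps; one must additionally verify that the bias already sitting at position $1$ only feeds the subordinate $\Omega(\eta^4 t)$ gap $\langle E_1^{w'}-E_\ell^{w''},W_tE_L^w\rangle$ and does not disturb the $\ell_0$ estimates. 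Feeding the $O(\eta^2t^2)$ attention perturbation back into $\nabla_u\Lc_{Parity}$ then drives the non-leading token scores: negatively-labeled samples now place slightly more than uniform weight on the non-discriminative positions, so minimizing the logistic loss forces $\langle u_t,E_\ell^w\rangle=-\Theta(\eta^2t^2)$ for $\ell\ge2$, and the position occurring in the largest number of sequences across the support of $\Lc_{Parity}$ --- position $2$ for the short/regularization part and position $1$ for the long/CoT part (hence stated as a \emph{relative} gap, since position $1$ also carries the $\Theta(\eta t)$ positive bias) --- acquires the fastest extra decay, producing the $\Omega(\eta^2t^2)$ and $\Omega(\eta^4 t)$ gap inequalities. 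Identifying this ``most frequent'' position is a bookkeeping computation over the length multiset $\{1,\dots,L_0-1\}\cup\{L_0,\dots,2L_0-1\}$.

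\textbf{Closing the induction.} To make the estimates self-consistent for all $t\le t_0=O(1/(\eta L_{\max}))$ I would use the two structural choices exactly as in the even pairs proof: $\lambda=\Omega(L_{\max}^2)$ keeps the softmax arguments $X^\top W_tx_L/\lambda$ of size $o(1)$ throughout Phase~1, so the cubic and higher Taylor remainders of $\phi$ are dominated by the retained terms; and $t_0=O(1/(\eta L_{\max}))$ keeps $\eta t_0=O(1/L_{\max})$ small enough that the logistic factors remain within a constant of $1/2$ and all the sign patterns above persist. A discrete Gr\"onwall / telescoping argument then upgrades the per-step increments into the claimed $\Theta(\cdot)$ and $\Omega(\cdot)$ bounds.

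\textbf{Main obstacle.} The hard part is the CoT portion of the feedback loop: unlike in pure even pairs, the linear layer already carries a nonzero positive component $\Theta(\eta t)$ at position $1$ coming from $\Lc_{Reg}$, yet position $1$ is \emph{not} discriminative for any CoT sample with $\ell_0\ge2$. One must show that this pre-existing bias is funneled entirely into the subordinate $\Omega(\eta^4t)$-order attention gap and into the sign structure of the non-leading token scores without overwhelming the $\ell_0$-centered dynamics --- that is, that the two loss components cooperate rather than compete. Quantitatively this means decomposing the contributions of $\Lc_{CoT}$ and $\Lc_{Reg}$ to $\nabla_W$ and $\nabla_u$ at each Taylor order and checking that the cross terms are of strictly lower order, which is delicate precisely because the embeddings $E_\ell^w$ are shared across the two losses and across all sequence lengths, so the usual per-sample sign arguments must be carried out jointly.
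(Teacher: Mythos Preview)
Your proposal is correct and follows essentially the same approach as the paper: a single induction on $t\le t_0$ that mirrors the even pairs Phase~1 analysis, with the length-$1$ regularization samples seeding the $\Theta(\eta t)$ bias at position~$1$, Taylor expansion of the softmax to extract the $\Omega(\eta^2 t^2)$ attention-gap growth (centered at position~$1$ for $L<L_0$ and at $\ell_0$ for $L\ge L_0$), and the structural choices $\lambda=\Omega(L_{\max}^2)$ and $t_0=O(1/(\eta L_{\max}))$ closing the induction. The paper additionally front-loads a symmetry lemma showing $\langle u_t,E_\ell^w\rangle=\langle u_t,E_\ell^{-w}\rangle$ and analogous identities for $W_t$, which collapses each length to two representative sequences (one positive, one negative) and makes the bookkeeping you describe in the ``feedback loop'' and ``main obstacle'' paragraphs substantially lighter; you may want to isolate this reduction explicitly rather than absorb it into the per-sample sign arguments.
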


Since the loss function in \Cref{eq:parityloss} is regularized by the loss of even pairs, the linear layer $u$ exhibits the same dynamics as the even pairs problem in \Cref{thm:phase1}. The key difference between CoT training in \Cref{thm:phase1CoT} and even pairs training in \Cref{thm:phase1} lies in attention dynamics on sequences with length $L \geq L_0$, which are labeled for CoT training of parity check. In particular, the last three inequalities in \Cref{thm:phase1CoT} suggests that it is the token at position $L-L_0+1$ that differs most from other tokens. 

At the end of Phase 1, the outputs of the attention layer are also separable. Namely, there exists a linear classifier that provides correct labels for all CoT steps, i.e., labels all training sequences with length $L_0\leq L\leq 2L_0-1$ correctly. As a by-product, such a linear classifier also provides correct even pairs labels for the sequences with $L < L_0$.
For those separable data sequences, we define thee max-margin solution for the separating hyperplane as 
\begin{align*} 
u_{CoT}^* = & \; \arg\min\|u\|, \\
\text{s.t.} & \textstyle \; \left\langle u, y_n \sum_{\ell=1}^{L(X^{(n)})} x_\ell^{(n)}\varphi_\ell^{(n,t_0)} \right\rangle \geq 1, \quad \forall n\in I, 
\end{align*}
where $L(X^{(n)})$ denotes the length of $X^{(n)}$. Note that $u_{CoT}^*$ slightly abuses notation as the dataset also includes sequences with lengths $L<L_0$ for even pairs.

The following theorem shows that the training enters Phase 2 if we continue to update the parameters by gradient descent, during which the attention layer has negligible change, but the linear layer converges to the max-margin solution $u_{CoT}^*$.
\begin{theorem}[Phase 2]\label{thm:implicit bias CoT}
There exists a constant $t_2 = \Omega(1)$ and $T=O(\lambda^{2/3}/(\eta L_{\max}))$, such that for $t_2\leq t \leq  T$, we have $\|u_t\|\geq \Omega(\log t)$, and
    \begin{align*}
\textstyle         \left\langle \frac{u_{t}}{\|u_{t}\|}, \frac{u_{CoT}^*}{\|u_{CoT}^*\|}  \right\rangle \geq 1 - \frac{1}{2}\left( \frac{1}{6\|u_{CoT}^*\|} - \frac{1}{\|u_t\|} \right)^2. 
    \end{align*}
Moreover, we have $\|W_t - W_{t_0}\| \leq O(1)$.
\end{theorem}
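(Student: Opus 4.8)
The plan is to mirror the Phase 2 analysis for the even pairs problem (\Cref{thm:implicit bias}), since the only structural difference is that the separable dataset underlying $u_{CoT}^*$ now includes the CoT-labeled sequences of length $L_0 \le L \le 2L_0-1$ in addition to the even-pairs-labeled sequences of length $L < L_0$. I would first invoke \Cref{thm:phase1CoT} and the separability claim that follows it: at the end of Phase 1 the vectors $v^{(n)} = \sum_{\ell=1}^{L(X^{(n)})} x_\ell^{(n)} \varphi_\ell^{(n,t_0)}$ together with their labels $y_n$ form a separable dataset (the explicit classifier being a generalization of $E_1^{\texttt{a}}+E_1^{\texttt{b}}-E_2^{\texttt{a}}-E_2^{\texttt{b}}$ that also exploits the position-$\ell_0$ attention concentration). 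This gives a positive margin, hence $u_{CoT}^*$ is well-defined with finite norm, and the logistic loss restricted to a nearly-frozen attention layer behaves like a linearly separable logistic regression in $u$.

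**Key steps.** (1) Use the scaling argument: since $\|W_t\|$ is updated with the un-rescaled gradient in Phase 2 but the attention logits carry the $1/\lambda$ factor with $\lambda = \Omega(L_{\max}^2)$, the perturbation of the softmax weights $\varphi_\ell^{(n,t)}$ away from $\varphi_\ell^{(n,t_0)}$ stays $o(1)$ over the whole horizon $t \le T = O(\lambda^{2/3}/(\eta L_{\max}))$; combined with a bound on $\|\nabla_W \Lc\|$ in terms of the (shrinking) loss, this yields $\|W_t - W_{t_0}\| \le O(1)$, the last claim of the theorem. (2) With the attention outputs essentially fixed, write the $u$-update as gradient descent on a perturbed separable logistic objective, and import the implicit-bias machinery (à la \citet{soudry2018implicit,ji2021characterizing} as adapted in \Cref{sec:prove EP}): the loss forces $\langle u_t, y_n v^{(n)}\rangle \to \infty$, so $\|u_t\|$ grows, and because growth along non-max-margin directions costs more loss, $u_t/\|u_t\|$ aligns with $u_{CoT}^*/\|u_{CoT}^*\|$. (3) Quantify: show $\|u_t\| \ge \Omega(\log t)$ by lower-bounding the per-step increase of $\langle u_t, \bar u\rangle$ for the max-margin direction $\bar u$, and convert the angle bound into the stated quadratic-in-$(1/\|u_t\|)$ form, carrying through the constant $1/(6\|u_{CoT}^*\|)$ exactly as in \Cref{thm:implicit bias} — the only change is replacing $u_{EP}^*$ by $u_{CoT}^*$ and the margin/data-count constants by their CoT-inclusive analogues, which are still $\Theta(1)$ up to $L_{\max}$-dependence absorbed into $T$.

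**Main obstacle.** The delicate part is controlling the coupling in step (1): unlike in \Cref{thm:implicit bias}, here the separating structure relies on \emph{two} distinct attention patterns simultaneously — concentration on position $1$ for the short even-pairs sequences and on position $\ell_0 = L-L_0+1$ for the long CoT sequences — so the perturbation analysis must show that the $O(1)$ drift of $W_t$ does not erode the margin on \emph{either} family, and in particular does not let the attention mass on position $\ell_0$ leak to a wrong position for any of the $L_0-1$ different CoT steps at once. This requires a uniform-over-$L$ bound on how the softmax gap $\langle E_{\ell_0}^{-w} - E_\ell^{w'}, W_t E_L^w\rangle$ degrades, which is where the $\Omega(\eta^2 t^2)$ (resp. $\Omega(\eta^4 t)$) lower bounds from \Cref{thm:phase1CoT} must be shown to dominate the Phase-2 perturbation; once that margin-preservation is in hand, the implicit-bias steps (2)–(3) are essentially identical to the even pairs case.
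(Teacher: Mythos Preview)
Your proposal is correct and mirrors the paper's own approach exactly: the paper simply observes that once separability at $t_0$ is established via \Cref{thm:phase1CoT}, the entire Phase~2 analysis from \Cref{sec:prove EP} (the analogues of \Cref{lemma:gradient and norm} and \Cref{lemma:output>norm}) carries over verbatim with $u_{EP}^*$ replaced by $u_{CoT}^*$. Your identified ``main obstacle'' is in fact a non-issue, because the Phase~2 argument never tracks position-specific attention gaps --- it only uses the crude bound $\|W_t - W_{t_0}\| \le O(1)$ together with the softmax Lipschitz estimate $|\varphi_\ell^{(n,t)} - \varphi_\ell^{(n,t_0)}| \le \|W_t - W_{t_0}\|/\lambda$, so the two attention patterns (short even-pairs sequences versus long CoT sequences) require no separate treatment once $u_{CoT}^*$ is fixed at time $t_0$.
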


The above theorem also implies that the total loss converges sublinearly, which further implies that both the CoT and regularization losses enjoy the same decay rate.
\begin{theorem}\label{thm:loss}
    For $t\leq T$, we have
    \(\Lc_{Parity, t} = O\left( \frac{ L_{\max}\|u_{CoT}^*\|^2 }{\eta \sqrt{t} } \right)\).
\end{theorem}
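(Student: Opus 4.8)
The plan is to prove this exactly as \Cref{thm:loss_evenpairs} is proved, but fed by the Phase-2 characterization of \Cref{thm:implicit bias CoT} in place of \Cref{thm:implicit bias}; the extra CoT and regularization sequences enter only through bookkeeping over the lengths $1\le L\le 2L_0-1$. First I would pass to the exponential surrogate via $\log(1+x)\le x$, so that $\Lc_{Parity,t}\le\sum_{L}\frac{1}{|I_L|}\sum_{n\in I_L}\exp(-z_n^{(t)})$ where $z_n^{(t)}:=y_n\mathtt{T}_{\theta_t}(X^{(n)})=\langle u_t,\,y_n\sum_\ell x_\ell^{(n)}\varphi_\ell^{(n,t)}\rangle$ and $y_n=y_n^e$ for $L<L_0$, $y_n=y_n^p$ otherwise. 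It then suffices to lower bound every margin $z_n^{(t)}$ and to control the growth of $\|u_t\|$.

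Second, I would transport the end-of-Phase-1 separability to the running iterate. By \Cref{thm:implicit bias CoT}, $\|W_t-W_{t_0}\|=O(1)$; since the attention logits carry the $1/\lambda$ factor, the embedding columns of $X$ are orthonormal (so $\|X\|_{\mathrm{op}}=1$), and $\lambda=\Omega(L_{\max}^2)$, Lipschitzness of the softmax gives $\|\varphi^{(n,t)}-\varphi^{(n,t_0)}\|_1=O(1/\lambda)$, hence $\|\sum_\ell x_\ell^{(n)}(\varphi_\ell^{(n,t)}-\varphi_\ell^{(n,t_0)})\|=O(1/\lambda)$. Writing $v^{(n)}=\sum_\ell x_\ell^{(n)}\varphi_\ell^{(n,t_0)}$ for the separable features, the margin constraint $\langle u_{CoT}^*,y_nv^{(n)}\rangle\ge1$ therefore survives the drift: $y_n\langle u_{CoT}^*,\sum_\ell x_\ell^{(n)}\varphi_\ell^{(n,t)}\rangle\ge1-o(1)\ge\tfrac12$. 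Decomposing $u_t=\alpha_t\,u_{CoT}^*/\|u_{CoT}^*\|+u_t^\perp$ and feeding in the alignment bound of \Cref{thm:implicit bias CoT} (which, using $\|u_{CoT}^*\|\ge1$, forces $\|u_t^\perp\|\le\|u_t\|/(6\|u_{CoT}^*\|)$ and $\alpha_t\ge\tfrac{7}{8}\|u_t\|$) together with $\|v^{(n)}\|\le1$, I obtain $z_n^{(t)}\ge c\,\|u_t\|/\|u_{CoT}^*\|>0$ for all $t\ge t_2$ and an absolute constant $c>0$; in particular $1/(1+e^{z_n^{(t)}})\le\exp(-c\|u_t\|/\|u_{CoT}^*\|)$.

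Third --- the quantitative rate --- I would run a potential argument on $\Phi_t:=\langle u_t,u_{CoT}^*\rangle$. Using $\partial_u\mathtt{T}_{\theta_t}(X^{(n)})=\sum_\ell x_\ell^{(n)}\varphi_\ell^{(n,t)}$, the bound $y_n\langle u_{CoT}^*,\partial_u\mathtt{T}\rangle\ge\tfrac12$ from step two, and $\frac{1}{1+e^{z}}\ge\frac12\log(1+e^{-z})$ for $z\ge0$, one GD step satisfies
\[
\Phi_{t+1}-\Phi_t \;=\; -\eta\,\langle\nabla_u\Lc_{Parity,t},\,u_{CoT}^*\rangle \;\ge\; \tfrac{\eta}{4}\,\Lc_{Parity,t}\qquad(t\ge t_2).
\]
Telescoping, together with a standard induction from the bound on $1/(1+e^{z_n^{(t)}})$ above that yields the matching upper bound $\|u_t\|=O(\|u_{CoT}^*\|\log t)$ (complementing the $\Omega(\log t)$ lower bound of \Cref{thm:implicit bias CoT}), gives $\sum_{s=t_2}^{t-1}\Lc_{Parity,s}\le\frac{4}{\eta}\Phi_t\le\frac{4}{\eta}\|u_t\|\,\|u_{CoT}^*\|=O(\|u_{CoT}^*\|^2\log t/\eta)$. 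Since $\Lc_{Parity}$ is $O(L_{\max})$-smooth in $(u,W)$ jointly and $\eta$ is taken correspondingly small (as in \Cref{sec:prove EP}), the descent lemma makes $\Lc_{Parity,s}$ nonincreasing, so $\frac{t}{2}\Lc_{Parity,t}\le\sum_{s=\lceil t/2\rceil}^{t}\Lc_{Parity,s}$ and hence $\Lc_{Parity,t}=O(\|u_{CoT}^*\|^2\log t/(\eta t))=O(L_{\max}\|u_{CoT}^*\|^2/(\eta\sqrt t))$, absorbing the logarithm into the stated bound; for the small $t$ not in this regime the claimed bound already exceeds the trivial estimate $\Lc_{Parity,t}\le\Lc_{Parity,0}=O(L_{\max})$, so it holds for all $t\le T$. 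Finally, $\Lc_{CoT}$ and $\Lc_{Reg}$ are sums of nonnegative terms, so each is at most $\Lc_{Parity}$ and inherits the same $O(1/\sqrt t)$ decay, as claimed after the theorem.

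I expect the main obstacle to be making the potential inequality robust to the attention layer still moving in Phase 2: one must verify that the drift $\varphi^{(n,t)}\ne\varphi^{(n,t_0)}$ neither pushes any margin $z_n^{(t)}$ below $0$ --- which would invalidate both $\frac{1}{1+e^{z}}\ge\frac12\log(1+e^{-z})$ and the descent-lemma monotonicity --- nor contaminates it through cross-layer Hessian terms, and this is exactly what forces $\lambda=\Omega(L_{\max}^2)$ so that $\|u_t\|/\lambda=o(\|u_t\|/\|u_{CoT}^*\|)$ over the entire horizon $T=O(\lambda^{2/3}/(\eta L_{\max}))$. A secondary point is that the target bound must hold for every $t\le T$, handled by stitching the trivial estimate for small $t$ to the potential-plus-monotonicity argument for large $t$; and one should check that swapping the even-pairs label/feature set for the CoT-plus-regularization set alters nothing beyond constants, since both the drift estimate and the potential inequality are label-agnostic given the margin-$1$ property defining $u_{CoT}^*$.
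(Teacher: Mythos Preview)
Your argument is correct, but it takes a genuinely different and more elaborate route than the paper's. The paper (for the analogous \Cref{thm:loss_evenpairs}, and by direct analogy for \Cref{thm:loss}) simply multiplies the two quantitative outputs of the Phase-2 analysis: the per-sample margin bound $y_n\mathtt{T}_t(X^{(n)})\geq \tfrac{5\|u_t\|}{6\|u_{CoT}^*\|}$ and the explicit logarithmic lower bound $\|u_t\|\geq \tfrac{3\|u_{CoT}^*\|}{5}\log\bigl(\eta(t-t_0-1)/(14\|u_{CoT}^*\|^2)\bigr)$, both coming from \Cref{thm:implicit bias CoT} and its supporting lemmas. Composing these gives $y_n\mathtt{T}_t(X^{(n)})\geq \tfrac12\log\bigl(\eta t/(C\|u_{CoT}^*\|^2)\bigr)$, whence each summand of $\Lc_{Parity,t}$ is at most $\log(1+O(\|u_{CoT}^*\|/\sqrt{\eta t}))$, and summing over the $O(L_{\max})$ lengths finishes in three lines. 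No potential function, no telescoping, no descent lemma, and no matching upper bound on $\|u_t\|$ are needed.

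What your route buys is a sharper intermediate rate $O(\|u_{CoT}^*\|^2\log t/(\eta t))$ before you relax it to $O(1/\sqrt t)$, at the cost of having to verify that $\Lc_{Parity}$ is $O(L_{\max})$-smooth \emph{jointly} in $(u,W)$ along the trajectory so that the descent lemma makes the loss nonincreasing. This does hold over the horizon $t\leq T$ (since $\|u_t\|\leq\eta L_{\max} T=O(\lambda^{2/3})$ keeps the $W$-Hessian terms $O(\|u_t\|^2 L_{\max}/\lambda^2)=o(1)$), but it is exactly the extra work the paper's direct plug-in avoids. In short: your proof is valid, but the paper's is the two-line consequence of \Cref{thm:implicit bias CoT} that you set up in your first two steps and then did not use directly.
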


\section{Experiments}
\begin{figure}
    \centering
    \includegraphics[width=1\linewidth]{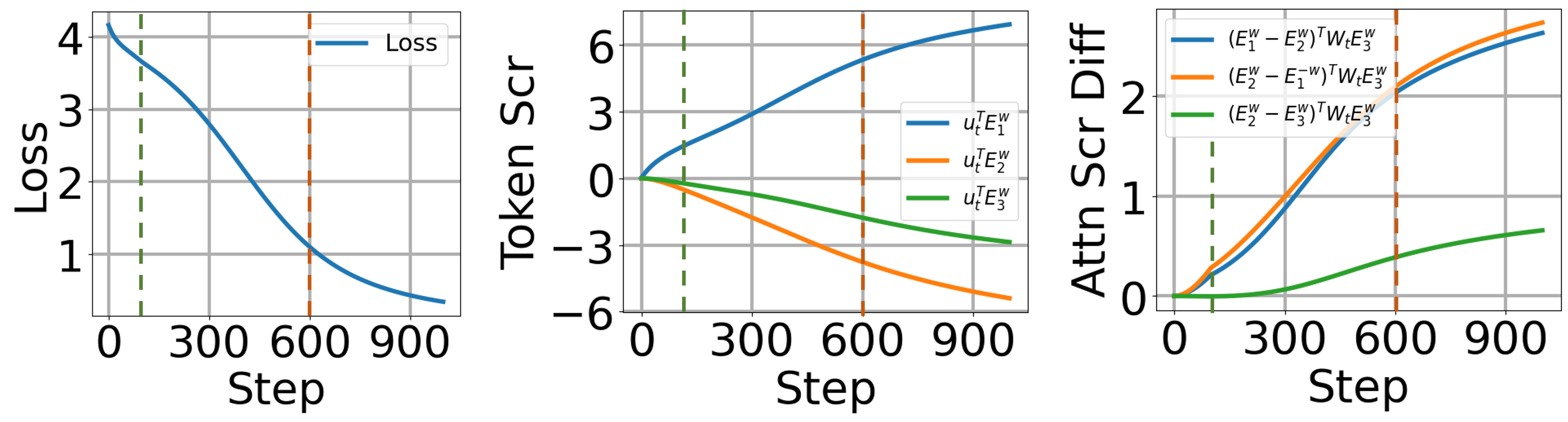}
    \vspace{-3mm}
    \caption{Results of one-layer transformer on even pairs. From the left to the right: (1) Loss decay over training. (2) Token scores at first three positions. (3) Attention scores in length-3 sequences.}
    \label{fig:EP}
\end{figure}

\begin{figure}
    \centering
    \includegraphics[width=1\linewidth]{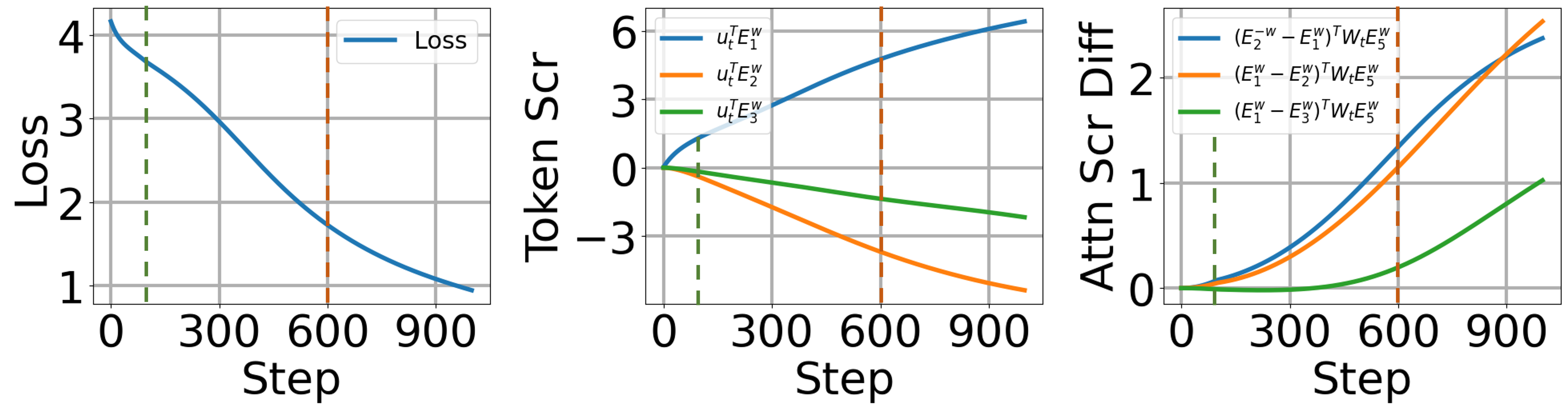}
    \vspace{-3mm}
    \caption{Results of one-layer transformer on parity check. From the left to the right: (1) Loss decay over training. (2) Token scores at first three positions. (3) Attention scores in length-5 sequences.} 
    \label{fig:PA}
\end{figure}

 In this section, we provide experiments on synthetic datasets to verify our theoretical findings. Specifically, we choose $L_{\max} = 6$, $L_0=4$. Then, we train $u$ and $W$ by gradient descent with step size $\eta=0.1$. We choose $t_0=100$, and $\lambda=2$. As observed in \Cref{fig:EP,fig:PA}, the first plot in each figure shows the rapid decay of the loss to the global minimum. The second plot shows the dynamics of token scores at the first three positions, where the first token score grows (blue curve) and the second and third token scores decrease (green and orange curves) during training. The third plot illustrates dynamics of the attention weights, where the first token receives more attention in positive samples (blue curve) and less attention in negative samples (orange curve). These plots validate our theoretical findings on dynamics of token scores and attention scores characterized in \Cref{thm:phase1} and \Cref{thm:phase1CoT}. 
Furthermore, the vertical blue dashed line in these plots indicates the end of first phase. The vertical orange dashed line in these plots indicates $t_2\approx 600$ in \Cref{thm:implicit bias} and \Cref{thm:implicit bias CoT}, where  $\|u_t\|$ starts to grow logarithmically (second plot of both figures). 

 {\bf Additional Experiments.} In \Cref{sec:additional experiments}, we conduct more experiments on different configurations of scaling parameter $\lambda$ and the two-phase learning dynamics. 
 
 All experiments are conducted on a PC equipped with an i5-12400F processor and 16GB of memory.

\section{Conclusion}
In this work, we provide a theoretical characterizing of two training phases to uncover how a one-layer transformer can be trained to solve two regular language recognition problems: even pairs and parity check. 
In order to characterize the joint training of attention and linear layers, we employ higher-order Taylor expansions to precisely analyze the coupling between gradients of two layers and its impact on parameter updates. Our results not only offer deeper insights into the training behavior of transformers but also highlight the critical role of CoT in solving parity problems. Experimental validation further supports our theoretical findings, confirming key aspects of parameter evolution and convergence. The analysis tools developed in this work can be useful for future understanding the implicit biases and training dynamics of transformers in structured learning tasks.

\section*{Acknowledgements}
The work of Y. Liang was supported in part by the U.S. National Science Foundation under the grants ECCS-2113860 and DMS-2134145. The work of R. Huang and J. Yang was supported in part by the U.S. National Science Foundation under the grants CNS-1956276 and CNS-2114542.

\section*{Impact Statement}
This paper presents work whose goal is to advance the field of 
Machine Learning. There are many potential societal consequences 
of our work, none which we feel must be specifically highlighted here.

\bibliography{main}
\bibliographystyle{icml2025}

\newpage
\appendix
\onecolumn

\if{0}
\section{You \emph{can} have an appendix here.}

You can have as much text here as you want. The main body must be at most $8$ pages long.
For the final version, one more page can be added.
If you want, you can use an appendix like this one.  

The $\mathtt{\backslash onecolumn}$ command above can be kept in place if you prefer a one-column appendix, or can be removed if you prefer a two-column appendix.  Apart from this possible change, the style (font size, spacing, margins, page numbering, etc.) should be kept the same as the main body.
\fi

\section{Auxiliary Lemmas and Equations}
\allowdisplaybreaks
\begin{lemma}[\citet{gao2017properties}]\label{lemma:Lip}
    The softmax function with scaling factor $\lambda$ is $\frac{1}{\lambda}$-Lipschitz continuous. Mathematically, we have
    \begin{align*}
        \left\|\phi(x/\lambda) - \phi(y/\lambda) \right\| \leq \frac{1}{\lambda}\|x-y\|.
    \end{align*}
\end{lemma}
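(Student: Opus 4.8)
The plan is to first show that the unscaled softmax map $\phi:\Rb^{m}\to\Rb^{m}$ (for an arbitrary dimension $m$) is $1$-Lipschitz in the Euclidean norm, and then deduce the stated bound by rescaling the argument. The natural route is to bound the operator norm of the Jacobian of $\phi$ uniformly over its domain and apply the mean value inequality along the segment joining the two input points. First I would compute the Jacobian: writing $p=\phi(z)$ and differentiating $[\phi(z)]_i=e^{z_i}/\sum_j e^{z_j}$ gives $\partial [\phi(z)]_i/\partial z_k=\delta_{ik}p_i-p_ip_k$, i.e. the Jacobian is $J(z)=\mathrm{diag}(p)-pp^\top$. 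Since $p$ has nonnegative entries summing to one, $J(z)$ is precisely the covariance matrix of the categorical distribution with weights $p$; in particular it is symmetric and positive semidefinite.

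The key estimate is a uniform bound $\|J(z)\|_{\mathrm{op}}\le 1$. For any unit vector $v\in\Rb^{m}$, view $v$ as a random variable $V$ taking value $v_i$ with probability $p_i$. Then
\[
v^\top J(z)v=\sum_i p_i v_i^2-\Big(\sum_i p_i v_i\Big)^2=\mathrm{Var}_p(V)\le \Eb_p[V^2]=\sum_i p_i v_i^2\le \max_i v_i^2\le \|v\|^2=1 .
\]
Because $J(z)$ is positive semidefinite, its operator norm equals its largest eigenvalue, which equals $\max_{\|v\|=1} v^\top J(z) v\le 1$; hence $\|J(z)\|_{\mathrm{op}}\le 1$ for every $z$. (A sharper constant $1/2$ is in fact available, but $1$ is all that is needed here.)

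With this in hand the rest is routine. For $x,y\in\Rb^{m}$, the smooth curve $s\mapsto \phi\big(y+s(x-y)\big)$, $s\in[0,1]$, together with the fundamental theorem of calculus yields $\phi(x)-\phi(y)=\int_0^1 J\big(y+s(x-y)\big)(x-y)\,ds$, so $\|\phi(x)-\phi(y)\|\le \big(\sup_z\|J(z)\|_{\mathrm{op}}\big)\|x-y\|\le \|x-y\|$. Finally, applying this inequality to the points $x/\lambda$ and $y/\lambda$ gives $\|\phi(x/\lambda)-\phi(y/\lambda)\|\le \|x/\lambda-y/\lambda\|=\tfrac1\lambda\|x-y\|$, which is the claim. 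The only step requiring any care is the uniform operator-norm bound on $J(z)$ in the second paragraph; the Jacobian computation and the integration argument are standard, so I do not expect a genuine obstacle here.
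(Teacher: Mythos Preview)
Your argument is correct: the Jacobian computation, the variance bound giving $\|J(z)\|_{\mathrm{op}}\le 1$, and the mean value inequality together with the rescaling all go through cleanly. The paper itself does not supply a proof of this lemma; it simply cites \citet{gao2017properties}, and your derivation is exactly the standard Jacobian--operator-norm argument used there, so there is nothing to compare beyond noting that your write-up fills in the omitted details.
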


By \Cref{lemma:Lip} and 
\(\varphi_\ell^{(n,t)} = \frac{ \exp\left( \langle x_\ell^{(n)}, W_t x_L^{(n)} \rangle /\lambda \right) }{\sum_{\ell'}\exp\left(\langle x_{\ell'}^{(n)}, W x_L^{(n)} \rangle /\lambda\right)} \), we have the following inequalities, which will be itensively used in the subsequence proofs.
\begin{align*}
    |\varphi_\ell^{(n,t)} - \varphi_\ell^{(n,t')}| \leq \sqrt{ \sum_{\ell=1}^L |\varphi_\ell^{(n,t)} - \varphi_\ell^{(n,t')}|^2 } \leq \frac{\|W_t - W_{t'}\|}{\lambda}
\end{align*}

Let $J(x) = \log(1+e^{-x})$ be the logistic loss. We also frequently use the following Taylor expansion about $-J'(x)$. 
\begin{align}
    |J(x)' - J(x')'|\leq |x-x'|,\quad \left|\frac{1}{1+\exp(x)} - \left( \frac{1}{2} - \frac{1}{4} x \right) \right| \leq x^3. \label{eqn:Taylor 1/(1+e(x))}
\end{align}

\subsection{Gradients Calculation}

Recall that the loss at time step $t$ is
\[ \mathcal{L}_t = \sum_{L=1}^{L_{\max}} \frac{1}{2^L} \sum_{n\in I_L} J\left(y_n u_t^\top\sum_{\ell}x_\ell^{(n)}\varphi_{\ell}^{(n,t)} \right). \]

For simplicity, we use the following notation. For each sample $n$ and time step $t$, we define
\[ J_{(n,t)}' = - \frac{1 }{1 + \exp(y_n \mathtt{T}_t(X^{(n)}))}, \]
where $\mathtt{T}_t(X^{(n)}) = u_t^\top\sum_{\ell}x_\ell^{(n)}\varphi_{\ell}^{(n,t)}$. Therefore, the gradients can be written as follows. The gradients at time $t$ are
\begin{align*}
    \left\{ 
    \begin{aligned}
        &\nabla_u\Lc_t = \sum_{L=1}^{L_{\max}} \frac{1}{2^L} \sum_{n\in I_L} J_{(n,t)}' y_n \sum_{\ell} x_\ell^{(n)} \varphi_\ell^{(n,t)} \\
        &\nabla_W\Lc_t = \frac{1}{\lambda} \sum_{L=2}^{L_{\max}} \frac{1}{2^L} \sum_{n\in I_L} J_{(n,t)}' y_n \sum_{\ell} u_t^\top x_\ell^{(n)} \varphi_\ell^{(n,t)} \left( x_\ell^{(n)} - \sum_{\ell'} \varphi_{\ell'}^{(n,t)} x_{\ell'}^{(n)} \right) (x_L^{(n)})^\top
    \end{aligned}
    \right.
\end{align*}

Note that $\|\nabla_u\Lc_t\| \leq L_{\max}$ and $\|\nabla_W \Lc_t\| \leq \|u_t\|L_{\max}/\lambda$. We will also frequently use the following projection of gradients on each token embeddings. 

\begin{align*}
\left\{
\begin{aligned}
    &\left\langle \nabla_u\Lc_t, E^w_\ell \right\rangle = \sum_{L\geq \ell} \frac{1}{2^L} \sum_{n\in I_L, \mathtt{C}(x_\ell^{(n)})=w} J_{(n,t)}' y_n  \varphi_\ell^{(n,t)}, \\
    &\left\langle E^w_1, (\nabla_W \Lc_t) E^w_L \right\rangle  = \frac{1}{2^L} \sum_{n\in I_L,\mathtt{C}(x_1^{(n)})=\mathtt{C}(x_L^{(n)})=w} J_{(n,t)}'  \varphi_1^{(n,t)} \left( u_t^\top x_1^{(n)} - \mathtt{T}(X^{(n)}) \right),  \\
    &\left\langle E^{-w}_1, (\nabla_W\Lc_t) E^w_L \right\rangle = -\frac{1}{2^L} \sum_{n\in I_L, \mathtt{C}(x_1^{(n)})\neq \mathtt{C}(x_L^{(n)})=w} J_{(n,t)}'  \varphi_1^{(n,t)} \left( u_t^\top x_1^{(n)} - \mathtt{T}(X^{(n)}) \right), \\
    &\left\langle E^{w}_\ell, (\nabla_W\Lc_t) E^w_L \right\rangle = \frac{1}{2^L} \sum_{n\in I_L, \mathtt{C}(x_\ell^{(n)})=\mathtt{C}(x_L^{(n)})=w} J_{(n,t)}' y_n  \varphi_\ell^{(n,t)} \left( u_t^\top x_\ell^{(n)} - \mathtt{T}(X^{(n)}) \right),
\end{aligned}
\right.  
\end{align*}
where $\mathtt{C}(x_\ell^{(n)})$ is the character of token embedding $x_\ell^{(n)}$. For example, $\mathtt{C}(E_\ell^w) = w$.

\if{0}
Let us define
\begin{align*}
    (U_1) &= \left\langle-\nabla_u \Lc_t, E_1^w \right\rangle = \frac{1}{2(1+\exp(\langle u_t, E_1^w \rangle))} + \sum_{L\geq 2} \frac{1}{2^L} \sum_{n\in I_L, \mathtt{C}(x_1^{(n)})=w}(-J_{(n,t)}') y_n \varphi_1^{(n,t)} \\
    (U_2) &= \left\langle-\nabla_u \Lc_t, E_2^w \right\rangle = \sum_{L\geq 2} \frac{1}{2^L} \sum_{n\in I_L, \mathtt{C}(x_2^{(n)})=w}(-J_{(n,t)}') y_n \varphi_2^{(n,t)} \\
    (U_\ell) &= \left\langle-\nabla_u \Lc_t, E_\ell^w \right\rangle = \sum_{L\geq \ell} \frac{1}{2^L} \sum_{n\in I_L, \mathtt{C}(x_\ell^{(n)})=w}(-J_{(n,t)}') y_n \varphi_\ell^{(n,t)} \\
    (P_1) &=\left\langle E_1^w, (-\nabla_W\Lc_t) E_L^w \right\rangle = \frac{1}{2^L} \sum_{n\in I_L, \mathtt{C}(x_1^{(n)})=\mathtt{C}(x_L^{(n)})=w} (-J_{(n,t)}') \varphi_1^{(n,t)}\left(\langle u_t, E_1^w\rangle - \mathtt{T}_t(X^{(n)}) \right) \\
    (P_2) &= \left\langle E_2^w, (-\nabla_W\Lc_t) E_L^w \right\rangle = \frac{1}{2^L} \sum_{n\in I_L, \mathtt{C}(x_2^{(n)})=\mathtt{C}(x_L^{(n)})=w} (-J_{(n,t)}') y_n \varphi_2^{(n,t)}\left(\langle u_t, E_2^w\rangle - \mathtt{T}_t(X^{(n)}) \right) \\
    (P_\ell) &= \left\langle E_\ell^w, (-\nabla_W\Lc_t) E_L^w \right\rangle = \frac{1}{2^L} \sum_{n\in I_L, \mathtt{C}(x_\ell^{(n)})=\mathtt{C}(x_L^{(n)})=w} (-J_{(n,t)}') y_n \varphi_\ell^{(n,t)}\left(\langle u_t, E_\ell^w\rangle - \mathtt{T}_t(X^{(n)}) \right) \\
    (N_1) &= \left\langle E_1^{-w}, (-\nabla_W\Lc_t) E_L^w \right\rangle = \frac{1}{2^L} \sum_{n\in I_L, \mathtt{C}(x_\ell^{(n)})\neq \mathtt{C}(x_L^{(n)})=w} J_{(n,t)}' \varphi_1^{(n,t)}\left(\langle u_t, E_1^w\rangle - \mathtt{T}_t(X^{(n)}) \right) 
\end{align*}
\fi

\section{Proofs of Even Pairs Problem}\label{sec:prove EP}

In this section, we provide the full proof of training dynamics of transformers on even pairs problem. Before proceeding to analyzing phase 1 and phase 2, we first show that the token scores only depend on the position, which helps us to reduce the complexity of the subsequent proof.

\begin{lemma}\label{lemma:same_value_on_same_position} During the entire training process, for any $w\in\{\texttt{a,b}\}$, we have
    \[ \left\langle u_t, E^w_\ell \right\rangle = \left\langle u_t, E^{-w}_\ell \right\rangle, \quad \forall \ell \geq 1. \]
For attention scores, we have the following equalities. 
    \[\left\{
    \begin{aligned}
    & \left\langle E^w_1,  W_t E^w_L \right\rangle = \left\langle E^{-w}_1, W_t E^{-w}_L \right\rangle.  \\
    & \left\langle E^w_\ell,  W_t E^w_L \right\rangle = \left\langle E^{-w}_\ell, W_t E^w_L \right\rangle, \quad \forall \ell \geq  2. 
    \end{aligned}
    \right.\]
\end{lemma}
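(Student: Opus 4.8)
The plan is to prove the statement by induction on the training step $t$, exploiting the $\texttt{a}\leftrightarrow\texttt{b}$ symmetry of the even pairs problem together with the symmetric zero initialization. Let $\sigma$ be the involution on embedding space that swaps $e_{2\ell-1}\leftrightarrow e_{2\ell}$ for every position $\ell$ (equivalently $\sigma(E_\ell^w)=E_\ell^{-w}$); note $\sigma$ is an orthogonal linear map and $\sigma^2=\mathrm{id}$. The key observation is that the dataset $I_L=\{\texttt{a},\texttt{b}\}^L$ with even-pairs labels is invariant under global token flip: if $X$ has label $y$, then the flipped sequence $\sigma(X)$ (flip every token) has the same label $y$, because "first and last tokens are equal" is preserved by flipping all tokens. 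Consequently the loss $\Lc$ satisfies the equivariance $\Lc(u,W)=\Lc(\sigma u, \sigma W\sigma^\top)$, since $\mathtt{T}_{(\sigma u,\sigma W\sigma^\top)}(\sigma X)=\mathtt{T}_{(u,W)}(X)$: the query/key products $\langle \sigma x_\ell, \sigma W\sigma^\top \sigma x_L\rangle = \langle x_\ell, Wx_L\rangle$ are unchanged, so the attention weights $\varphi_\ell$ are unchanged, and $\langle \sigma u, \sigma x_\ell\rangle=\langle u,x_\ell\rangle$.

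First I would record the induction hypothesis in the clean form: $u_t = \sigma u_t$ and $W_t = \sigma W_t \sigma^\top$ (restricted to the relevant coordinate block; entries outside the token-embedding span stay $0$ throughout and can be ignored). At $t=0$ this holds since $\theta_0=0$. For the inductive step, one uses the gradient identities $\nabla_u\Lc(\sigma u,\sigma W\sigma^\top)=\sigma\nabla_u\Lc(u,W)$ and $\nabla_W\Lc(\sigma u,\sigma W\sigma^\top)=\sigma(\nabla_W\Lc(u,W))\sigma^\top$, which follow by differentiating the equivariance relation above (chain rule, using that $\sigma$ is orthogonal so $\sigma^\top=\sigma^{-1}$). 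Plugging in $u_t=\sigma u_t$, $W_t=\sigma W_t\sigma^\top$ gives $\nabla_u\Lc_t=\sigma\nabla_u\Lc_t$ and $\nabla_W\Lc_t=\sigma(\nabla_W\Lc_t)\sigma^\top$, hence the GD updates (in either phase, since scaling by $\eta$ or $\eta\lambda$ commutes with $\sigma$) preserve the invariance: $u_{t+1}=\sigma u_{t+1}$, $W_{t+1}=\sigma W_{t+1}\sigma^\top$. Finally, $u_t=\sigma u_t$ immediately yields $\langle u_t,E_\ell^w\rangle=\langle u_t,\sigma E_\ell^{w}\rangle = \langle u_t, E_\ell^{-w}\rangle$ after also noting $\langle \sigma u_t, E_\ell^w\rangle = \langle u_t, \sigma E_\ell^w\rangle$; similarly $W_t=\sigma W_t\sigma^\top$ gives $\langle E_1^w,W_tE_L^w\rangle=\langle \sigma E_1^w, \sigma W_t\sigma^\top \sigma E_L^w\rangle=\langle E_1^{-w},W_tE_L^{-w}\rangle$ and $\langle E_\ell^w,W_tE_L^w\rangle=\langle E_\ell^{-w},W_t E_L^{-w}\rangle$ for $\ell\ge 2$; combining the latter with the (separate) position-only statement for $u$ — or more directly, noting $\langle E_\ell^{-w}, W_tE_L^w\rangle$ must equal $\langle E_\ell^w,W_tE_L^w\rangle$ by a second symmetry (flipping only the last token together with re-examining the orbit structure) — recovers the stated second attention identity. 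I would double-check which symmetry is needed for the $\langle E_\ell^{w}, W_tE_L^w\rangle = \langle E_\ell^{-w}, W_t E_L^w\rangle$ line: it is the "flip all tokens except possibly a relabeling" argument, or alternatively one derives it from the explicit gradient projection formulas in the appendix, where the right-hand sides depend on $\ell$ and on whether $\mathtt{C}(x_\ell)=\mathtt{C}(x_L)$ but not on the absolute character values once the $u$-symmetry is in hand.

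The main obstacle I anticipate is purely bookkeeping rather than conceptual: making the "entries outside the token span remain zero" claim precise (so that $\sigma$ acting on the full $d\times d$ matrix is unambiguous and the invariance is genuinely preserved coordinate-by-coordinate), and correctly pinning down the exact symmetry group element responsible for each of the three displayed equalities — in particular the second attention identity $\langle E_\ell^w,W_tE_L^w\rangle=\langle E_\ell^{-w},W_tE_L^w\rangle$, which is not the global flip $\sigma$ (that would also flip $E_L^w$) but follows either from combining $\sigma$-invariance with the $u$-position-symmetry through the gradient formulas, or from an independent argument that the attention score into $E_L^w$ of $E_\ell^a$ versus $E_\ell^b$ stays equal because those two tokens play interchangeable roles in every sequence of a fixed length once labels depend only on first-vs-last agreement. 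Alternatively, and perhaps most cleanly, I would prove all parts simultaneously by induction directly on the scalar quantities $\langle u_t,E_\ell^w\rangle$ and the attention scores using the explicit gradient-projection expressions already derived in the appendix, observing that the summations on their right-hand sides are manifestly invariant under the relevant character swaps given the inductive hypothesis.
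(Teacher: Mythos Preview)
Your proposal is correct and follows essentially the same inductive strategy as the paper. The paper also proceeds by induction on $t$, showing at the inductive step that the gradient projections $\langle \nabla_u\Lc_t, E_\ell^w\rangle$ and $\langle E_\ell^{w'}, (\nabla_W\Lc_t)E_L^w\rangle$ satisfy the claimed symmetries, by exhibiting a bijection on samples that matches the two sides term by term.

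The one substantive difference is in which symmetry is invoked where. You package identities (1) and (2) via the single global flip $\sigma$ (flip all tokens), which is clean and works exactly as you say; the paper instead uses the \emph{local} flip at position $\ell$ (for $\ell\ge 2$) and the simultaneous flip of first and last tokens (for $\ell=1$). For the third identity $\langle E_\ell^w, W_tE_L^w\rangle=\langle E_\ell^{-w}, W_tE_L^w\rangle$, you correctly observe that the global $\sigma$ is not the right map (it would also flip $E_L^w$); the paper handles this case precisely by the local flip at position $\ell$ ($2\le \ell\ne L$), which keeps the last token and the label fixed while swapping $x_\ell^{(n)}$ between $E_\ell^w$ and $E_\ell^{-w}$. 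This is exactly the ``second symmetry'' you were looking for, and your fallback suggestion of proving all parts directly from the explicit gradient-projection formulas is literally what the paper does. Your anticipated bookkeeping issues (coordinates outside the token span staying zero, the case $\ell=L$) are harmless: the gradient at those coordinates vanishes identically, and when $\ell=L$ the entry $\langle E_L^{-w},(\nabla_W\Lc_t)E_L^w\rangle$ is zero because no sample can have $\mathtt{C}(x_L)=w$ and $\mathtt{C}(x_L)=-w$ simultaneously.
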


\begin{proof}
Note that the results are valid for $t=0$. Assume the results hold at time $t$, we aim to prove the results hold for $t+1$. It suffices to prove the following equalities.

\begin{align}
    & \left\langle \nabla_u\Lc_t, E^w_\ell \right\rangle = \left\langle \nabla_u\Lc_t, E^{-w}_\ell \right\rangle, \quad \forall \ell \geq 1, \label{eqn:u_gradients_equal_on_same_position} \\
    & \left\langle E^w_\ell,  (\nabla_W\Lc_t) E^w_L \right\rangle = \left\langle E^{-w}_\ell, (\nabla_W\Lc_t) E^w_L \right\rangle, \quad \forall \ell \geq  2, \label{eqn:W_gradients_equal_on_same_position} \\
    &\left\langle E^w_1,  (\nabla_W\Lc_t) E^w_L \right\rangle = \left\langle E^{-w}_1, (\nabla_W\Lc_t) E^{-w}_L \right\rangle. \label{eqn:W_gradients_equal_on_first_position}
\end{align}

{\vskip 1em}

\textbf{We first show that \Cref{eqn:u_gradients_equal_on_same_position} is true.}

For $\ell\geq 2$, we have
    \begin{align*}
        \left\langle \nabla_u\Lc_t, E^w_\ell \right\rangle &= \sum_{L\geq \ell} \sum_{n\in I_L, \mathtt{C}(x_\ell^{(n)})=w} J_{(n,t)}' y_n \varphi_{\ell}^{(n,t)} \\
        &  = \sum_{L\geq \ell} \sum_{ n\in I_L ,  \mathtt{C}(x_\ell^{(n)})=w} \frac{-1}{ 1 + \exp(y_n\mathtt{T}_t(X_L^{(n)}))} y_n \varphi_\ell^{(n,t)}.
    \end{align*}
For any $n\in I_L $ satisfying $\mathtt{C}(x_\ell^{(n)})=w$, let $n'$ be the sample that only replace $w$ with $-w$ at the $\ell$-th position. Then, due to the induction hypothesis, we have $\varphi_\ell^{(n,t)} = \varphi_\ell^{(n',t)}$, and $\mathtt{T}_t(X_L^{(n)}) = \mathtt{T}_t(X_L^{(n')})$. Since $\ell\geq 2$, changing one token at the position $\ell$ does not change the label, we have $y_n = y_{n'}$. Therefore, we have
\[ \left\langle \nabla_u\Lc_t, E^w_\ell \right\rangle  = \left\langle \nabla_u\Lc_t, E^{-w}_\ell \right\rangle, \quad \forall \ell \geq 2.\]

For $\ell=1$, we have,
\begin{align*}
        \left\langle \nabla_u\Lc_t, E^w_1 \right\rangle &= \sum_{L\geq 1} \sum_{n\in I_L, \mathtt{C}(x_1^{(n)})=w} J_{(n,t)}' y_n \varphi_{1}^{(n,t)} \\
        &  = \sum_{L\geq 1} \sum_{ n\in I_L, \mathtt{C}(x_1^{(n)})=w} \frac{-1}{ 1 + \exp(y_n\mathtt{T}_t(X_L^{(n)}))} y_n \varphi_1^{(n,t)}
    \end{align*}

Now, for any $n\in  I_L$ satisfying $\mathtt{C}(x_1^{(n)})=w$, let $n'$ be the sample that flips the first and the last token at the same time. Then, due to the induction hypothesis, we  have  $\varphi_\ell^{(n,t)} = \varphi_\ell^{(n',t)}$, $y_n = y_{n'}$ and $\mathtt{T}_t(X_L^{(n)}) = \mathtt{T}_t(X_L^{n'})$. Therefore, we have
\[ \left\langle \nabla_u\Lc_t, E^w_1 \right\rangle  = \left\langle \nabla_u\Lc_t, E^{-w}_1 \right\rangle.\]

\textbf{We conclude that \Cref{eqn:u_gradients_equal_on_same_position} is true.}
{\vskip 1em}

\textbf{Then, we show that \Cref{eqn:W_gradients_equal_on_same_position} is true.}
For $\ell \geq 2,$ we have 
\begin{align*}
    \left\langle E^w_\ell,  (\nabla_W\Lc_t) E^w_L \right\rangle &= \frac{1}{2^L} \sum_{n\in I_L, \mathtt{C}(x_\ell^{(n)})=\mathtt{C}(x_L^{(n)})=w} J_{(n,t)}' y_n  \varphi_\ell^{(n,t)} \left( u_t^\top x_\ell^{(n)} - \mathtt{T}_t (X_L^{(n)}) \right) \\
    & = \frac{1}{2^L} \sum_{n\in I_L, \mathtt{C}(x_\ell^{(n)})=\mathtt{C}(x_L^{(n)})=w} \frac{-1}{1+\exp\left(y_n\mathtt{T}_t(X^{(n)})\right)} y_n  \varphi_\ell^{(n,t)} \left( u_t^\top x_\ell^{(n)} - \mathtt{T}_t (X_L^{(n)}) \right)
\end{align*}
For any $n\in I_L$ satistyfing $\mathtt{C}(x_\ell^{(n)})=\mathtt{C}(x_L^{(n)})=w$, let $n'$ be the sample that only replace $w$ with $-w$ at the $\ell$-th position. Then, due to the induction hypothesis, we have $\varphi_\ell^{(n,t)} = \varphi_\ell^{(n',t)}$, and $\mathtt{T}_t(X_L^{(n)}) = \mathtt{T}_t(X_L^{(n')})$. Since $\ell\geq 2$, changing one token at the position $\ell$ does not change the label, we have $y_n = y_{n'}$. Therefore, we have
\[ \left\langle E^w_\ell,  (\nabla_W\Lc_t) E^w_L \right\rangle  = \left\langle E^{-w}_\ell,  (\nabla_W\Lc_t) E^w_L \right\rangle, \quad \forall \ell \geq 2.\]

\textbf{We conclude that \Cref{eqn:W_gradients_equal_on_same_position} is true.}
{\vskip 1em}

\textbf{Finally, we show that \Cref{eqn:W_gradients_equal_on_first_position} is true.}

Note that 
\begin{align*}
    \left\langle E^w_1, (\nabla_W \Lc_t) E^w_L \right\rangle  &= \frac{1}{2^L} \sum_{n\in I_L,\mathtt{C}(x_1^{(n)})=\mathtt{C}(x_L^{(n)})=w} J_{(n,t)}'  \varphi_1^{(n,t)} \left( u_t^\top x_1^{(n)} - \mathtt{T}_t(X_L^{(n)}) \right)   \\
    & = \frac{1}{2^L} \sum_{n\in I_L,\mathtt{C}(x_1^{(n)})=\mathtt{C}(x_L^{(n)})=w} \frac{-1}{1+ \exp\left(y_n\mathtt{T}_t(X_L^{(n)})\right)}  \varphi_1^{(n,t)} \left( u_t^\top x_1^{(n)} - \mathtt{T}_t(X_L^{(n)}) \right)  
\end{align*}
Now, for any $n\in  I_L$ satisfying $\mathtt{C}(x_1^{(n)})=\mathtt{C}(x_L^{(n)})=w$, let $n'$ be the sample that flips the first and the last token at the same time, i.e., $\mathtt{C}(x_1^{(n')})=\mathtt{C}(x_L^{(n')})=-w$. Then, due to the induction hypothesis, we  have  $\varphi_\ell^{(n,t)} = \varphi_\ell^{(n',t)}$, $y_n = y_{n'}$ and $\mathtt{T}_t(X_L^{(n)}) = \mathtt{T}_t(X_L^{(n')})$. Therefore, we have
\[ \left\langle \nabla_u\Lc_t, E^w_1 \right\rangle  = \left\langle \nabla_u\Lc_t, E^{-w}_1 \right\rangle.\]
\textbf{We conclude that \Cref{eqn:W_gradients_equal_on_first_position} is true.}
{\vskip 1em}

Therefore, the proof is complete by induction.
\end{proof}

Due to above lemma, for each length $L$, we only need to analyze two types of sequence, i.e., the one with positive label and the one with negative label. We use $X_L^{(+)}=[E_1^w, E_2^w,\ldots, E_L^ws]$ to represent the sequence with positive labels, and $X_L^{(-)}=[E_1^{-w}, E_2^w,\ldots, E_L^2]$ to represent the sequence with negative labels.

\subsection{Phase 1}
In this section, we characterize the training dynamics in phase 1. In general, we prove the results by induction. First, we characterize the initialization dynamics.

\begin{lemma}[Initialization]\label{lemma:init}
    At the beginning ($t=2, 3$), for the linear layer, we have
    \begin{align*}
        &\langle u_2, E_1^w \rangle = \Theta(\eta t),  \quad \forall w,\\
        &\langle u_2, E_\ell^w \rangle = -\Theta(\eta^2 t^2),  \quad  \forall \ell \geq 2, \forall w, \\
        &\langle u_2, E_2^w - E_\ell^w \rangle \leq - \Omega(\eta^2 t^2), \quad   \forall \ell \geq 3, \forall w. 
    \end{align*}

For the attention layer, we have
\begin{align*}
    & \left\langle E_1^{w} - E_2^w, W_3 E_L^w \right\rangle \geq \Omega\left(\frac{\eta^2}{L}\right),  \quad\forall w,\\
    & \left\langle E_1^{-w} - E_2^w, W_3  E_L^w \right\rangle \leq -\Omega\left(\frac{\eta^2}{L}\right),  \quad\forall w,\\
    & \left\langle E_2^{w} - E_\ell^w, W_3 E_L^w \right\rangle \geq \Omega\left(\frac{\eta^2}{L}\right) ,\quad \forall \ell \geq 3, \forall w.
\end{align*}

\end{lemma}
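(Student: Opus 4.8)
\textbf{Proof proposal for Lemma~\ref{lemma:init} (Initialization).}

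The plan is to directly unroll the gradient descent updates for the first few steps, starting from $\theta_0 = 0$, and track the leading-order behavior of the relevant inner products. At $t=0$ all token scores, attention scores, and transformer outputs vanish, so $J'_{(n,0)} = -1/2$ for every sample $n$. First I would compute $u_1 = -\eta \nabla_u \Lc_0$ and $W_1 = -\eta\lambda \nabla_W \Lc_0$: since $\mathtt{T}_0 \equiv 0$ and $u_0 = 0$, the attention gradient $\nabla_W\Lc_0$ vanishes (it carries a factor $u_0^\top x_\ell^{(n)} = 0$), so $W_1 = 0$, while $\langle \nabla_u\Lc_0, E_1^w\rangle$ picks up a contribution from the length-$1$ sequences (always positive label) plus contributions from longer sequences with $\varphi_\ell = 1/L$ at $t=0$; the dominant term comes from the length-$1$ sequences, giving $\langle u_1, E_1^w\rangle = \Theta(\eta)$, while $\langle u_1, E_\ell^w\rangle = 0$ for $\ell \geq 2$ because at $t=0$ positive and negative samples contribute $\varphi_\ell J'_{(n,0)} y_n$ terms that cancel exactly (this is the gradient-cancellation phenomenon flagged in the ``Key challenges'' paragraph: each non-leading token appears in equal numbers of positive and negative sequences). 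Then at step $t=1$, the nonzero first-token score $\langle u_1, E_1^w\rangle = \Theta(\eta)$ breaks the symmetry: plugging into the expression for $\langle\nabla_u\Lc_1, E_\ell^w\rangle$ via the Taylor expansion $-J'(x) = \tfrac12 - \tfrac14 x + O(x^3)$ from \eqref{eqn:Taylor 1/(1+e(x))}, the linear-in-$x$ correction no longer cancels because the transformer outputs of positive and negative samples now differ by a $\Theta(\eta)$ amount, yielding $\langle u_2, E_\ell^w\rangle = -\Theta(\eta^2)$ for $\ell\geq 2$. Similarly $W_1 = 0$ but $W_2, W_3$ become nonzero once $u_1 \neq 0$ feeds the $u_t^\top x_\ell^{(n)}$ factor in $\nabla_W\Lc$.

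For the ordering statements (the $E_2^w - E_\ell^w$ and $E_1^w - E_2^w$ comparisons), the key point is a counting argument: the second token $E_2^w$ appears in more sequences (all lengths $L\geq 2$ contain a position-$2$ token, whereas position $\ell$ requires $L\geq \ell$), so the magnitude of its (negative) token-score update strictly exceeds that of $E_\ell^w$ for $\ell\geq 3$. I would make this precise by writing $\langle u_2, E_2^w - E_\ell^w\rangle$ as a sum over lengths $L$ with $2 \le L < \ell$ of strictly negative terms (from \eqref{eqn:Taylor 1/(1+e(x))}, each such term is $\le -\Omega(\eta^2/2^L)$, and summing geometrically gives $-\Omega(\eta^2)$), plus a sum over $L \ge \ell$ of terms that match and cancel. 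The attention-score claims then follow from the gradient projection formulas for $\langle E_1^w, (\nabla_W\Lc_t)E_L^w\rangle$ etc.: once $\langle u_1, E_1^w\rangle = \Theta(\eta) > 0$ and $\langle u_1, E_\ell^w\rangle \le 0$, the factor $(u_t^\top x_\ell^{(n)} - \mathtt{T}_t(X^{(n)}))$ is positive for $\ell = 1$ in positive samples and the signs propagate to give $\langle E_1^w - E_2^w, W_3 E_L^w\rangle \ge \Omega(\eta^2/L)$ (the $1/L$ comes from $\varphi_\ell^{(n,t)} \approx 1/L$ still holding at these early steps), and the analogous lower/upper bounds for the negative-sample and second-token comparisons; note the $W$ update uses learning rate $\eta\lambda$ in Phase 1, but the $\lambda$ cancels against the explicit $1/\lambda$ in $\nabla_W\Lc_t$, so the net scale is $\Theta(\eta^2)$ after two effective update steps.

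The main obstacle I anticipate is bookkeeping the cancellations carefully enough to extract the correct \emph{sign} and \emph{order} of each term, rather than just an upper bound on magnitude. Because every non-leading token sits in both positive and negative samples, the naive first-order contributions cancel and the result is a genuinely second-order effect; one must expand to the right order in the Taylor series \eqref{eqn:Taylor 1/(1+e(x))}, verify that the cubic remainder is $O(\eta^3)$ hence negligible, and confirm that the surviving second-order term has a definite sign driven by the $\Theta(\eta)$ gap between $\mathtt{T}_t$ on positive versus negative samples. A secondary technical point is that one must simultaneously track $u_t$ and $W_t$ for $t=1,2,3$, since the attention update at step $t$ depends on $u_{t-1}$ and the token-score update at step $t$ depends (through $\mathtt{T}_t$, hence through $\varphi_\ell^{(n,t)}$) on $W_{t-1}$; fortunately at these initial steps $\|W_t\|/\lambda = O(\eta^2/\lambda)$ is tiny, so $\varphi_\ell^{(n,t)} = 1/L + O(\eta^2/\lambda)$ and the attention perturbation only affects lower-order terms, which decouples the two recursions to leading order. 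This lemma then serves as the base case for the induction that proves \Cref{thm:phase1}.
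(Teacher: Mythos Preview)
Your proposal is correct and mirrors the paper's approach almost exactly: unroll GD from $\theta_0=0$, use $u_0=0$ to get $W_1=0$ and the length-$1$ sequences to get $\langle u_1,E_1^w\rangle=\Theta(\eta)$ with all other token scores zero by cancellation, then push one more step via the Taylor expansion \eqref{eqn:Taylor 1/(1+e(x))} to extract the $\Theta(\eta^2)$ second-order effects for $u_2$ and $W_2,W_3$. One small quantitative slip: the per-length contribution to $\langle u_2, E_2^w - E_\ell^w\rangle$ scales like $-\Theta(\eta^2/L^2)$ (the $1/2^L$ in the loss is cancelled by the $2^{L-1}$ sequences containing the given token, and two factors of $1/L$ come from $\varphi_\ell\approx 1/L$ and the Taylor correction $\mathtt{T}_t\approx \eta/(4L)$), not $-\Omega(\eta^2/2^L)$, though the conclusion $-\Omega(\eta^2)$ is unaffected since the $L=2$ term already dominates.
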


\begin{proof}
    Since $J_{(n,0)} = \frac{1}{2}$, and $\varphi_\ell^{(n,0)} = \frac{1}{L}$ for $n\in I_L$ (the length is $L$), we have
    \begin{align*}
        \langle u_1, E_1^w \rangle &= \eta \langle -\nabla_u \Lc_0, E_1^w \rangle \\
        & = \eta \sum_{L=1}^{L_{\max}}\frac{1}{2^L}\sum_{n\in I_L, \mathtt{C}(x_\ell^{(n)})=w } (-J_{(n,0)}') y_n \varphi_1^{(n,0)} \\
        & = \frac{\eta}{4},
    \end{align*}
where the last equality is due to the cancellation between positive and negative samples whose length is greater than 1. Due to the same reason, for any $\ell\geq 2$, we have
\begin{align*}
    \langle u_1, E_\ell^w \rangle &= \eta \langle -\nabla_u \Lc_0, E_\ell^w \rangle \\
    & = \eta \sum_{L=2}^{L_{\max}}\frac{1}{2^L}\sum_{n\in I_L, \mathtt{C}(x_\ell^{(n)})=w } (-J_{(n,0)}') y_n \varphi_1^{(n,0)} \\
    & = 0
\end{align*}

Regarding the attention, for any token $E_\ell^{w'}$, we have
\begin{align*}
    \langle E_\ell^{w'}, W_1 E_L^w\rangle &= \eta \lambda \langle E_\ell^2, (-\nabla_W\Lc_1) E_L^w\rangle \\
    & = \frac{1}{2^L} \sum_{n\in I_L, \mathtt{C}(x_\ell^{(n)})w', \mathtt{C}(x_L^{(n)})=w} J_{(n,0)}' y_n  \varphi_\ell^{(n,0)} \left( u_0^\top x_\ell^{(n)} - \mathtt{T}_0(X^{(n)}) \right)  \\
    & = 0,
\end{align*}
where the last inequality is due to the fact that $u_0 = 0$.

In summary, at time step 1, only the token score at the first position increases, and all other token scores remain 0, and the attention scores are all 0, resulting $\varphi_\ell^{(n,1)} = \frac{1}{L}$ for $n\in I_L$. Note that, we also have
\begin{align*}
    -J_{(n,1)}' = \frac{1}{1 + \exp(\frac{\eta}{4L})}, n\in I_L^+; \quad -J_{(n,1)}' = \frac{1}{1 + \exp(-\frac{\eta}{4L})}, n\in I_L^-.
\end{align*}

\textbf{Next, we characterize the token scores and attention scores at time step 2.}

\begin{align*}
    \langle u_2, E_1^w \rangle &= \langle u_1, E_1^w \rangle + \eta \langle -\nabla_u \Lc_1, E_1^w \rangle  \\
        & = \frac{\eta}{4} + \eta \sum_{L=2}^{L_{\max}}\frac{1}{2^L}\sum_{n\in I_L, \mathtt{C}(x_\ell^{(n)})=w } (-J_{(n,0)}') y_n \varphi_1^{(n,0)} \\
        & = \frac{\eta}{4} + \frac{\eta}{2}\frac{1}{1+\exp(\eta/4)}  + \sum_{L=2}^{L_{\max}} \frac{\eta}{2} \left( \frac{1}{1+\exp(\frac{\eta}{4L})} -\frac{1}{1+\exp(-\frac{\eta}{4L})} \right) \frac{1}{L}
\end{align*}

Thus,
\begin{align*}
     \left| \langle u_2, E_1^w \rangle - \frac{2\eta}{4} \right| &\leq \frac{\eta}{2}\left|\frac{1}{1+\exp(\eta/4)} - \frac{1}{2}\right| + \sum_{L=2}^{L_{\max}} \frac{\eta}{2L}\cdot \frac{\eta}{2L} \\
     &\leq \frac{3\eta ^2}{8},
\end{align*}
where the last inequality is due to the Lipchitz continuity of $f(x)=1/(1+e^x)$ (with Lipchitz constant 1) and $\sum_{L=2}^{\infty}1/L^2 \leq 1$.

Similarly, for $\ell\geq 2$, we have
\begin{align*}
    \langle u_2, E_\ell^w \rangle &= \eta \langle -\nabla_u \Lc_1, E_\ell^w \rangle \\
    & = \eta \sum_{L=\ell}^{L_{\max}}\frac{1}{2^L}\sum_{n\in I_L, \mathtt{C}(x_\ell^{(n)})=w } (-J_{(n,1)}') y_n \varphi_\ell^{(n,0)} \\
    & = \sum_{L=\ell}^{L_{\max}}\frac{\eta}{2} \left( \frac{1}{1+\exp(\frac{\eta}{4L})} - \frac{1}{1+\exp(-\frac{\eta}{4L})}\right) \frac{1}{L} \\
    & \leq - \sum_{L=\ell}^{L_{\max}} \frac{\eta^2}{4L^2} \\
    &\leq -\frac{\eta^2}{4\ell^2}.
\end{align*}

In addition, regarding the difference of token scores, we have
\begin{align*}
    \langle u_2, E_2^w - E_\ell^2 \rangle & = \sum_{L=2}^{L_{\max}}\frac{\eta}{2} \left( \frac{1}{1+\exp(\frac{\eta}{4L})} - \frac{1}{1+\exp(-\frac{\eta}{4L})}\right) \frac{1}{L} - \sum_{L=\ell}^{L_{\max}}\frac{\eta}{2} \left( \frac{1}{1+\exp(\frac{\eta}{4L})} - \frac{1}{1+\exp(-\frac{\eta}{4L})}\right) \frac{1}{L} \\
    & = \sum_{L=2}^{\ell-1}\frac{\eta}{2} \left( \frac{1}{1+\exp(\frac{\eta}{4L})} - \frac{1}{1+\exp(-\frac{\eta}{4L})}\right) \frac{1}{L} \\
    & \leq -\frac{\eta^2}{16},
\end{align*}
where $\ell \geq 3$.

Therefore, we already prove that 
\begin{align*}
&\langle u_2, E_1^w \rangle = \Theta(\eta t),  \quad \forall w,\\
&\langle u_2, E_\ell^w \rangle = -\Theta(\eta^2 t^2),  \quad  \forall \ell \geq 2, \forall w, \\
&\langle u_2, E_2^w - E_\ell^w \rangle \leq - \Omega(\eta^2 t^2), \quad   \forall \ell \geq 3, \forall w.
\end{align*}

\textbf{Next, we analyze the attention score.}
\begin{align*}
        \lambda \left\langle E_1^w - E_2^w, (-\nabla_W\Lc_1) E_L^w \right\rangle 
        & = \frac{1}{2^L} \sum_{n\in I_L^+} (-J_{(n,1)}') \varphi_1^{(n,1)}  \left( \langle u_1, E_1^w\rangle - \frac{\eta}{4L} )\right) \\
        &\quad - \frac{1}{2^L} \sum_{n\in I_L, \mathtt{C}(x_2^{(n)})=w} (-J_{(n,1)}') y_n \varphi_2^{(n,1)} \left( \langle u_t, E_2^w \rangle - \frac{\eta}{4L} \right) \\
        & = \frac{1}{8}\cdot \frac{1}{1+\exp(\frac{\eta}{4L})} \cdot \frac{1}{L} \left[  2\langle u_1, E_1^w\rangle  -    \langle u_1, E_2^w\rangle  \right] \\
        &\quad + \frac{1}{8}\frac{1}{1+\exp(-\frac{\eta}{4L})} \cdot \frac{1}{L} \left( \langle u_1, E_2^w\rangle - \frac{\eta}{4L}\right) \\
        &\overset{(a)} \geq \frac{1}{8L}\cdot \frac{1}{1+\exp(\frac{\eta}{4L})} \left[  \eta - \frac{3\eta^2}{4}    \right] + \frac{1}{8L} \frac{\eta}{2L} \langle u_1, E_2^w\rangle  \\
        &\quad + \frac{1}{8L} \frac{1}{1+\exp(-\frac{\eta}{4L})} \cdot  \frac{\eta}{4L} \\
        &\overset{(b)}\geq \Omega\left( \frac{\eta}{L} \right),
    \end{align*}
where $(a)$ is due to the Lipchitz continuity of $f(x)=1/(1+e^x)$, and $(b)$ is due to $\langle u_1, E_2^w\rangle \leq -\eta^2/16$.

Similarly, for negative samples, since, we only flip the label $y_n$ from 1 to -1, we directly have
\begin{align*}
    \lambda \left\langle E_1^{-w} - E_2^w, (-\nabla_W\Lc_1) E_L^w \right\rangle \leq -\Omega\left(\frac{\eta}{L}\right).
\end{align*}

Following the same argument and algebra, we have 
\begin{align*}
    & \lambda \left\langle E_1^{w} - E_2^w, (-\nabla_W\Lc_2) E_L^w \right\rangle \geq \Omega\left(\frac{\eta}{L}\right) \\
    & \lambda \left\langle E_1^{-w} - E_2^w, (-\nabla_W\Lc_2) E_L^w \right\rangle \leq -\Omega\left(\frac{\eta}{L}\right),
\end{align*}
which implies that 
\begin{align*}
    & \left\langle E_1^{w} - E_2^w, W_3 E_L^w \right\rangle \geq \Omega\left(\frac{\eta^2}{L}\right) \\
    & \left\langle E_1^{-w} - E_2^w, W_3  E_L^w \right\rangle \leq -\Omega\left(\frac{\eta^2}{L}\right),
\end{align*}

Since $\langle u_1, E_\ell^w \rangle = 0$ for all $\ell\geq 2$, we have 
\[ \lambda \left\langle E_\ell^{w} - E_{\ell'}^w, (-\nabla_W\Lc_1) E_L^w \right\rangle = 0. \]

Finally, we aim to show that at time step $t=3$, the attention layer also distinguishes between non-leading tokens. This can be done by noting that $\langle u_2, E_2^w - E_\ell^2 \rangle \leq -\Omega(\eta^2)$. Specifically, we have 
\begin{align*}
    \eta \lambda & \langle E_2^w - E_\ell^w, (-\nabla_W \Lc_2) E_L^w \rangle \\
    & = \frac{\eta}{ 2^L} \sum_{n\in I_L, \mathtt{C}(x_2^{(n)})=\mathtt{C}(x_L^{(n)})=w} (-J_{(n,2)}') y_n \varphi_2^{(n,2)} \left(\langle u_2, E_2^w \rangle - \mathtt{T}_2(X^{(n)}) \right) \\
    &\quad - \frac{\eta}{2^L} \sum_{n\in I_L, \mathtt{C}(x_\ell^{(n)})=\mathtt{C}(x_L^{(n)})=w} (-J_{(n,2)}') y_n \varphi_\ell^{(n,2)} \left( \langle u_2, E_\ell^w \rangle - \mathtt{T}_2(X^{(n)}) \right) \\
    & \overset{(a)}= \frac{\eta}{8 } (-J_{(+,t)}')\varphi_2^{(+,2)} \left[\langle u_2, E_2^w \rangle -  \langle u_2, E_\ell^w \rangle  \right] \\
    &\quad - \frac{\eta}{8 }  (-J_{(-,t)}') \varphi_2^{(-,2)} \left[\langle u_2, E_2^w \rangle -  \langle u_2, E_\ell^w \rangle \right] \\
    & \geq \Omega(\eta^4/L ),
\end{align*}
where $(a)$ is due to the fact that $\varphi_\ell^{(n,2)}$ are equal for any $\ell\geq 2$, and the last inequality follows from that $\langle u_2, E_2^w \rangle -  \langle u_2, E_\ell^w \rangle \leq -\Omega(\eta^2)$, and $(-J_{(+,2)}' - (-J_{(-,2)}')) \leq - \Omega(\eta/L)$.

 Thus, the proof is complete.   
\end{proof}

Then we prove \Cref{thm:phase1} through induction. Note that the statement in the following is essentially the same as that in \Cref{thm:phase1} due to \Cref{lemma:same_value_on_same_position}.

\begin{theorem}[Restatement of \Cref{thm:phase1}]
  Choose $\lambda=\Omega(L_{\max}^2), t_0 = O(1/(\eta L_{\max}) )$, and $\eta=O(\min\{1/L_{\max},1/\lambda^{2/3}\})$. we have
   
(1) The dynamics of linear layer \(u\) is governed by the following inequalities. 
\begin{align*}
&\langle u_t, E_1^w \rangle = \Theta(\eta t),  \qquad \forall w\in\{\texttt{a,b}\},\\
&\langle u_t, E_\ell^w \rangle = -\Theta(\eta^2 t^2),  \qquad  \forall \ell \geq 2, \forall w\in\{\texttt{a,b}\}, \\
&\langle u_t, E_2^w - E_\ell^w \rangle \leq - \Omega(\eta^2 t^2), \quad  \forall w\in\{\texttt{a,b}\} 
\end{align*}
(2) The dynamics of attention layer \(W\) is governed by the following inequalities. For any length $L\leq L_{\max}$, we have
     \[
     \langle E_1^w - E_\ell^{w}, W_t E_L^w \rangle \geq \Omega(\eta^2 t^2/ L), \; \forall \ell \geq 2, 
     \] 
     \[
     \langle E_2^{w} - E_1^{-w}, W_t E_L^w \rangle \geq \Omega(\eta^2 t^2 /L), \forall \ell \geq 2,
     \]
     \[ \langle E_2^{w} - E_\ell^{w}, W_t E_L^w \rangle \geq \Omega\left( \left( 1 + \frac{\eta^3 t^2}{\lambda L^3} \right)^t \frac{\eta^4}{\lambda L} \right), \quad \forall \ell \geq 3,  \]
\end{theorem}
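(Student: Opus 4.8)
The plan is to establish all six families of inequalities simultaneously by induction on $t$, using \Cref{lemma:init} for the base cases $t\in\{2,3\}$ and \Cref{lemma:same_value_on_same_position} to collapse, at each length $L$, the sum over sequences to the two canonical patterns: the positively-labeled $X_L^{(+)}$ (first token $=$ last token) and the negatively-labeled $X_L^{(-)}$ (first token $\neq$ last token). Since all $2^{L-2}$ sequences of a given type share the same token scores, attention weights, and transformer output, the factor $2^{-L}$ in the loss combines with the $2^{L-2}$ identical summands into an $L$-independent constant. The inductive step feeds the closed-form projections of $\nabla_u\Lc_t$ and $\nabla_W\Lc_t$ onto the token embeddings through three approximations valid throughout Phase~1 under the stated choices of $\eta,\lambda,t_0$: (i) $-J'_{(n,t)} = \tfrac12-\tfrac14\mathtt{T}_t(X^{(n)}) + O(\mathtt{T}_t(X^{(n)})^3)$ by \Cref{eqn:Taylor 1/(1+e(x))}, with $|\mathtt{T}_t(X^{(n)})| = O(\eta t_0)=O(1/L_{\max})$ guaranteed by the hypothesis; (ii) $|\varphi_\ell^{(n,t)} - 1/L| = O(\eta^2 t^2 L_{\max}/\lambda) = o(1/L)$ from \Cref{lemma:Lip} and the inductive attention bounds, so every attention weight is $\tfrac1L(1+o(1))$; and (iii) the $\eta\lambda$ prefactor on the Phase-1 update of $W$ exactly cancels the $1/\lambda$ carried by $\nabla_W\Lc_t$, so per-step attention-score increments live on the $\Theta(\eta)$ scale. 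Part of the induction is also the two-sided control $\mathtt{T}_t(X^{(\pm)})=\Theta(\eta t/L)$ on the transformer outputs.

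For the linear layer the bookkeeping is light. Length-$1$ sequences, always positively labeled, contribute an unopposed $\tfrac\eta4(1+o(1))$ to $\langle u_{t+1},E_1^w\rangle$ at each step while contributions of longer sequences cancel up to an $O(\eta^2)$ residual, giving $\langle u_t,E_1^w\rangle=\Theta(\eta t)$. For $\ell\geq2$ the length-$1$ term vanishes and each length $L\geq\ell$ contributes $\bigl((-J'_{(+,t)})-(-J'_{(-,t)})\bigr)\varphi_\ell$ up to constants; the logistic expansion gives $(-J'_{(+,t)})-(-J'_{(-,t)}) = -\tfrac14(\mathtt{T}_t(X^{(+)})+\mathtt{T}_t(X^{(-)})) + O(\eta^3 t^3/L^3) = -\Theta(\eta t/L)$, so the per-step increment is $-\Theta(\eta^2 t/L)$ and telescoping yields $\langle u_t,E_\ell^w\rangle = -\Theta(\eta^2 t^2)$. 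Since position $2$ appears in all sequences of length $\geq2$ while position $\ell$ appears only in those of length $\geq\ell$, subtracting the telescoped sums leaves a strictly negative remainder, hence $\langle u_t,E_2^w-E_\ell^w\rangle \leq -\Omega(\eta^2 t^2)$ for $\ell\geq3$.

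For the attention layer, fix a length $L$. The increment of $\langle E_1^w-E_\ell^w,W_tE_L^w\rangle$ equals $\eta$ times (essentially) $(-J'_{(+,t)})\bigl[\varphi_1^{(+)}(\langle u_t,E_1^w\rangle-\mathtt{T}_t(X^{(+)})) - \varphi_\ell^{(+)}(\langle u_t,E_\ell^w\rangle-\mathtt{T}_t(X^{(+)}))\bigr]$ plus a nonnegative term from the negative sample; since $\langle u_t,E_1^w\rangle=\Theta(\eta t)$ dominates every other token score, $\langle u_t,E_1^w\rangle-\mathtt{T}_t(X^{(+)})=\Theta(\eta t)$, the bracket is $\Omega(\eta t/L)$, and the increment $\Omega(\eta^2 t/L)$ telescopes to the first bound. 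The second bound $\langle E_2^w-E_1^{-w},W_tE_L^w\rangle$ is the negative-sample analogue: in $X_L^{(-)}$ the label is $-1$, the first token carries the large positive score $\langle u_t,E_1^{-w}\rangle=\langle u_t,E_1^w\rangle=\Theta(\eta t)$ while position $2$ carries the most negative score, so the gradient drives $\langle E_1^{-w},W_tE_L^w\rangle$ down and $\langle E_2^w,W_tE_L^w\rangle$ up, each at a $\Theta(\eta t/L)$ rate that overwhelms the $\Theta(\eta t/L^2)$-scale contamination from positive samples.

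The crux, and the step I expect to be the main obstacle, is the third attention bound $\langle E_2^w-E_\ell^w,W_tE_L^w\rangle\geq\Omega\bigl((1+\tfrac{\eta^3 t^2}{\lambda L^3})^t\tfrac{\eta^4}{\lambda L}\bigr)$ for $\ell\geq3$: here tokens $2$ and $\ell$ carry small, nearly equal negative scores, so the obvious contributions of $X_L^{(+)}$ and $X_L^{(-)}$ each behave like $\pm\tfrac12\varphi_\ell\langle u_t,E_2^w-E_\ell^w\rangle$ and almost annihilate. The residual growth must be extracted from second-order discrepancies between the two samples, namely the product of the asymmetry $(-J'_{(+,t)})\varphi_\ell^{(+)}-(-J'_{(-,t)})\varphi_\ell^{(-)} = -\Theta(\eta t/L^2)$ with the already-negative token-score gap $\langle u_t,E_2^w-E_\ell^w\rangle$; and the exponential-in-$t$ factor originates in the self-reinforcement whereby the current gap $G_t:=\langle E_2^w-E_\ell^w,W_tE_L^w\rangle$ itself tilts the softmax, $\varphi_2^{(n,t)}-\varphi_\ell^{(n,t)} = \tfrac1L(e^{G_t/\lambda}-1)+\cdots\approx\tfrac{G_t}{\lambda L}$, which reinserted into the gradient yields a recursion $G_{t+1}\geq\bigl(1+\Omega(\tfrac{\eta^3 t^2}{\lambda L^3})\bigr)G_t - (\text{lower order})$ whose solution is the claimed product, with the $\tfrac{\eta^4}{\lambda L}$ floor furnished by the smallest per-step source term. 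Making this rigorous demands showing that each discarded quantity --- the cubic Taylor tail $O(\mathtt{T}_t^3)$, the $O(\eta^2 t^2 L_{\max}/\lambda)$ softmax perturbation, and the cross terms coupling $\nabla_u\Lc_t$ with $\nabla_W\Lc_t$ --- is strictly dominated by the signal it could corrupt ($\Theta(\eta^2 t/L)$ for the first two bounds, a much smaller quantity for the third); this is exactly the balance that forces $\eta=O(\min\{1/L_{\max},\lambda^{-2/3}\})$ and $\lambda=\Omega(L_{\max}^2)$, and performing it uniformly over $t\leq t_0$, all $L\leq L_{\max}$, and all positions $\ell$ constitutes the bulk of the work.
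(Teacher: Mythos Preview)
Your proposal is correct and follows essentially the same route as the paper: induction seeded by \Cref{lemma:init}, symmetry collapse via \Cref{lemma:same_value_on_same_position}, Taylor expansion of $-J'$ together with the Lipschitz bound of \Cref{lemma:Lip} to control the loss derivative and attention weights, and for the third attention inequality the same self-reinforcing recursion $A_{2\ell}^{(t+1)}\geq\bigl(1+\Omega(\tfrac{\eta^3t^2}{\lambda L^3})\bigr)A_{2\ell}^{(t)}$ that you describe. The only minor amendment is that the $\tfrac{\eta^4}{\lambda L}$ floor is supplied directly by the base-case value $A_{2\ell}^{(3)}$ from \Cref{lemma:init} and then propagated through the multiplicative recursion, rather than by a per-step additive source term.
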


\begin{proof}

By \Cref{lemma:init}, we know that the results hold true for $t=3$. In the following, suppose the results are true for $t$.

We will intensively use the following fact that characterize the norm of attention parameter $W_t$. Since 
\begin{align*}
    \|\nabla_W \Lc_t\| &\leq 2\sum_{L=2}^{L_{\max}} \frac{1}{2^L} \sum_{n\in I_L} \left|\sum_{\ell=1}^L \langle u_t, E_\ell^w \rangle \varphi_\ell^{(n,t)} \right|  \\
    &\leq O(L_{\max}\eta t),
\end{align*}
we have
\[ \| W_t\| \leq \sum_{s=0}^{t-1} \eta\|\nabla_W\Lc_s\| \leq O(L_{\max} \eta^2 t^2).\]

\textbf{We first show that the dynamics of each token score $\langle u_t, E_\ell^w \rangle$ is true.}

For the first token, we have 
\begin{align*}
   \left\langle -\nabla_u\Lc_t, E^w_1 \right\rangle & = \sum_{L=1}^{L_{\max}}\frac{1}{2^L}\sum_{n\in I_L, \mathtt{C}(x_\ell^{(n)})=w } (-J_{(n,t)}') y_n \varphi_1^{(n,t)} \\
   & = \frac{1}{2}\cdot\frac{1}{1+\exp\left(\langle u_t, E_1^w  \rangle\right)}  \\
   &\quad + \sum_{L=2}^{L_{\max}}\frac{1}{2^L}\sum_{n\in I_L, \mathtt{C}(x_1^{(n)})=w} (-J_{(n,0)}') y_n \varphi_1^{(n,0)} \\
   &\quad + \sum_{L=2}^{L_{\max}}\frac{1}{2^L}\sum_{n\in I_L, \mathtt{C}(x_1^{(n)})=w} \left( -J_{(n,t)}' + J_{(n,0)}' \right) y_n \varphi_1^{(n,0)}  \\
   &\quad + \sum_{L=2}^{L_{\max}}\frac{1}{2^L}\sum_{n\in I_L, \mathtt{C}(x_1^{(n)})=w}   (-J_{(n,t)}') y_n \left( \varphi_1^{(n,t)} - \varphi_1^{(n,0)} \right) \\
   & \overset{(a)}= \frac{1}{4} + \frac{1}{2} \left(\frac{ 1}{1+\exp(\langle u_t, E_1^w \rangle )} - \frac{1}{2}\right) \\
   &\quad + \sum_{L=2}^{L_{\max}}\frac{1}{2^L}\sum_{n\in I_L, \mathtt{C}(x_1^{(n)})=w} \left( -J_{(n,t)}' + J_{(n,0)}' \right) y_n \varphi_1^{(n,0)}  + \sum_{L=2}^{L_{\max}}\frac{1}{2^L}\sum_{n\in I_L}   (-J_{(n,t)}') y_n \left( \varphi_1^{(n,t)} - \varphi_1^{(n,0)} \right),
\end{align*}
where $(a)$ follows from the fact that $J_{(n,0)} = \frac{1}{2}$, and $\varphi_1^{(n,0)} = \frac{1}{L}$ if $n\in I_L$.

Then, by the Lipchitz continuity (\Cref{lemma:Lip}), we have

Hence,
\begin{align*}
    \left| \left\langle -\nabla_u\Lc_t , E^w_1 \right\rangle - \frac{1}{4} \right| &\leq \frac{1}{2}|\langle u_t, E_1^w\rangle| + \sum_{L=2}^{L_{\max}} \frac{1}{2^L} \sum_{n\in I_L, \mathtt{C}(x_1^{(n)})=w} \left|\frac{1}{2} - \frac{1}{1 + \exp(y_n\mathtt{T}_t(X^{(n)}))} \right| \frac{1}{L} + \sum_{L=2}^{L_{\max}}\frac{1}{2}\cdot\frac{2\|W_t\|}{\lambda} \\
    &\overset{(a)}\leq \frac{1}{2}|\langle u_t, E_1^w\rangle| + \sum_{L=2}^{L_{\max}} \frac{1}{2^L} \sum_{n\in I_L, \mathtt{C}(x_1^{(n)})=w} \frac{1}{L} \left( \frac{1}{4} \left| \mathtt{T}_t(X^{(n)}) \right| + \left| \mathtt{T}_t(X^{(n)}) \right|^3 \right) + \frac{L_{\max}\|W_t\|}{\lambda} \\
    & \overset{(b)}\leq \frac{1}{2}|\langle u_t, E_1^w\rangle| + \sum_{L=2}^{L_{\max}} \frac{1}{2L} \left( \max_{1\leq \ell\leq L}|\langle u_t, E_\ell^w\rangle| \right) + \frac{L_{\max}\|W_t\|}{\lambda}\\
    &\leq \frac{1}{2}|\langle u_t, E_1^w\rangle| (1 + \log L_{\max}) + \frac{L_{\max}^2 \eta^2 t^2}{\lambda} \\
    &\leq O(\eta t),
\end{align*}
where $(a)$ is due to \Cref{eqn:Taylor 1/(1+e(x))}, $(b)$ follows from $\eta t = O(1)$, and the last inequality follows from $\lambda = \Omega(L_{\max}^2 )$, and induction hypothesis.

Similarly, for $\ell\geq 2$, we have
\begin{align*}
    \left\langle -\nabla_u\Lc_t , E^w_\ell \right\rangle & = \sum_{L=\ell}^{L_{\max}}\frac{1}{2^L}\sum_{n\in I_L, \mathtt{C}(x_\ell^{(n)})=w} (-J_{(n,t)}') y_n \varphi_\ell^{(n,t)} \\
    & = \sum_{L=\ell}^{L_{\max}} \frac{1}{2^L} \left(\sum_{n\in I_L^+, \mathtt{C}(x_\ell^{(n)})=w} (-J_{(n,t)}') \varphi_\ell^{(n,t)} - \sum_{n\in I_L^-, \mathtt{C}(x_\ell^{(n)})=w} (-J_{(n,t)}')\varphi_\ell^{(n,t)} \right) \\
    & = \sum_{L=\ell}^{L_{\max}} \frac{1}{4} \left(  (-J_{(+,t)}') \varphi_\ell^{(+,t)} -  (-J_{(-,t)}')\varphi_\ell^{(-,t)} \right) \\
    &\overset{(a)}\leq \sum_{L=\ell}^{L_{\max}} \frac{1}{4} \left( \frac{1}{2} -\frac{1}{4}\mathtt{T}_t(X_L^{(+)}) - \frac{1}{2} - \frac{1}{4}\mathtt{T}_t(X_L^{(-)}) + \Theta (|\max_\ell \langle u_t, E_\ell^w \rangle|^3)\right) \varphi_\ell^{(-,t)} \\
    &\overset{(b)}\leq \sum_{L = \ell}^{L_{\max}} -\frac{1}{16}\left( \sum_{\ell'=1}^L ( \varphi_{\ell'}^{(+,t)} + \varphi_{\ell'}^{(-,t)} ) \langle u_t, E_{\ell'}^w \rangle \right) \varphi_\ell^{(-,t)} \\
    &\overset{(c)}\leq - \Omega\left(\sum_{L=\ell}^{L_{\max}} \frac{1}{L} \left( \frac{\eta t}{L} -  \frac{1}{L}\sum_{\ell'=2}^L \eta^2 t^2  \right)\right) \\
    & = -\Omega\left(\eta t \right),
\end{align*}
where $(a)$ is due to \Cref{eqn:Taylor 1/(1+e(x))}, $(b)$ follows from $\eta t = O(1)$, and $(c)$ is due to the induction hypothesis.

In addition, for any $\ell\geq 3$, we have
\begin{align*}
    \left\langle -\nabla_u\Lc_t , E^w_2 - E^w_\ell \right\rangle &= \sum_{L=2}^{\ell-1} \frac{1}{4} \left((-J_{+,t}')\varphi_2^{(+,t)} - (-J_{-,t}')\varphi_2^{(-,t)}\right)  \\
    &\quad + \sum_{L=\ell}^{L_{\max}} \left( (-J_{(+,t)}')(\varphi_2^{(+,t)} - \varphi_\ell^{(+,t)}) - (-J_{(-,t)}')(\varphi_2^{(-,t)} - \varphi_\ell^{(-,t)}) \right) \\
    &\overset{(a)}\leq -\Omega\left(\eta t\right) \\
    &\quad + \sum_{L=\ell}^{L_{\max}} \left( \left(\frac{1}{2} - \frac{1}{4}\mathtt{T}_t(X^+)) + O(\eta^3t^3) \right) \left(\varphi_2^{(+,t)} - \varphi_\ell^{(+,t)}\right) \right.\\
    &\quad \left. - \left(\frac{1}{2} + \frac{1}{4}\mathtt{T}_t(X^-) + O(\eta^3t^3)\right)\left(\varphi_2^{(-,t)} - \varphi_\ell^{(-,t)} \right) \right) \\
    &\overset{(b)} \leq -\Omega(\eta t) + \sum_{L=\ell}^{L_{\max}}\frac{\|W_t\|\eta t}{\lambda}\\
    &\leq - \Omega(\eta t),
\end{align*}
where $(a)$ follows from the same argument of the previous analysis on $\left\langle -\nabla_u\Lc_t , E^w_\ell \right\rangle$, and \Cref{eqn:Taylor 1/(1+e(x))}, $(b))$ is due to the fact that $\mathtt{T}_t(X^{(n)}) \geq 0$, $\varphi_2^{(n,t)} -\varphi_\ell^{(n,t)} \geq 0$, and the last inequality follows from $\lambda = \Omega(L_{\max}^2)$.

Therefore,
\begin{align*}
    \left| \left\langle u_{t+1}, E_1^w\right\rangle - \frac{\eta(t+1)}{4} \right| &= \left| \sum_{s=2}^t \eta\left\langle -\nabla_u\Lc_{s}, E_1^w\right\rangle -\frac{\eta}{4} \right| \\
    &\leq O\left( \eta^2 t^2 \right)
\end{align*}

Similarly, for $\ell\geq 2$, 
\begin{align*}
    \left\langle u_{t+1}, E_2^w\right\rangle  &= \sum_{s=2}^t \eta \left\langle -\nabla_u\Lc_{s}, E_2^w\right\rangle \\
    & \leq - \Omega(\eta^2 t^2).
\end{align*}

In addition,
\begin{align*}
    \left\langle u_{t+1}, E_2^w- E_\ell^w\right\rangle &= \sum_{s=2}^t \eta \left\langle -\nabla_u\Lc_{s}, E_2^w- E_\ell^w\right\rangle \\
    & \leq - \Omega(\eta^2 t^2).
\end{align*}

\textbf{We conclude that the dynamics of $u_t$ in Phase 1 is true.}
{\vskip 1em}

\textbf{Then, we show that the dynamics of $W_t$ in Phase 1 is true.}

In the following, we fix the length $L$, and focus on the $L$-th column of $W_t$.

We denote
\begin{align*}
    & A_{12}^{(t)} = \left\langle E_1^w - E_2^w, W_t E_L^w \right\rangle \\
    & B_{\ell1}^{(t)} = \left\langle E_\ell^w - E_1^{-w}, W_t E_L^w \right\rangle \\
    & A_{2\ell}^{(t)} = \left\langle E_2^w - E_\ell^w, W_t E_L^w \right\rangle,
\end{align*}
which are three quantities we aim to analyze.

Due to the gradient update in phase 1, We have
    \begin{align*}
        A_{12}^{(t+1)} - A_{12}^{(t)} & = \eta \lambda \left\langle E_1^w - E_2^w, (-\nabla_W\Lc_t) E_L^w \right\rangle \\
        & = \frac{\eta}{2^L} \sum_{n\in I_L^+} (-J_{(n,t)}') \varphi_1^{(n,t)}  \left( \langle u_t, E_1^w\rangle - \mathtt{T}(X^{(n)} )\right) \\
        &\quad - \frac{\eta}{2^L} \sum_{n\in I_L, \mathtt{C}(x_2^{(n)})=w} (-J_{(n,t)}') y_n \varphi_2^{(n,t)} \left( \langle u_t, E_\ell^w \rangle - \mathtt{T}(X^{(n)}) \right) \\
        & = \frac{\eta }{8} (-J_{(+,t)}') \left[ \varphi_1^{(+,t)}  \left( \langle u_t, E_1^w\rangle - \mathtt{T}(X^{(+)} )\right) -  \varphi_2^{(+,t)}  \left( \langle u_t, E_2^w\rangle - \mathtt{T}(X^{(+)} )\right) \right] \\
        &\quad + \frac{\eta }{8} (-J_{(+,t)}')  \varphi_1^{(+,t)}  \left( \langle u_t, E_1^w\rangle - \mathtt{T}(X^{(+)} )\right) + \frac{1}{8} (-J_{(-,t)}')  \varphi_2^{(-,t)}  \left( \langle u_t, E_2^w\rangle - \mathtt{T}(X^{(-)} )\right) \\
        & \overset{(a)}= \frac{\eta}{8} \left(\frac{1}{2} - \frac{1}{4}\mathtt{T}_t(X^{(+)}) + O(|\mathtt{T}_t(X^{(+)})|^3)\right) \left[ 2\varphi_1^{(+,t)} \langle u_t, E_1^w\rangle - \varphi_2^{(+,t)} \langle u_t, E_2^w\rangle - (2\varphi_1^{(+,t)} - \varphi_2^{(+,t)}) \mathtt{T}(X^{(+)}) \right] \\
        &\quad - \frac{\eta}{8} \left(\frac{1}{2} + \frac{1}{4} \mathtt{T}(X^{(-)}) + O(|\mathtt{T}_t(X^{(-)})|^3) \right) \varphi_2^{(-,t)}  \left( |\langle u_t, E_2^w\rangle|  + |\mathtt{T}(X^{(-)})| \right) \\
        &\overset{(b)}\geq \eta\Omega\left(\frac{2}{L} \eta t - \frac{1}{L} \eta^2 t^2 - \frac{1}{L} \eta t \right) - \eta O\left( \frac{1}{L} \eta^2t^2 + \frac{1}{L^2} \eta t\right) \\
        &\geq \Omega\left(\frac{\eta^2 t}{L} \right),
    \end{align*}
where $(a)$ is due to \Cref{eqn:Taylor 1/(1+e(x))}, and $(b)$ follows from the induction hypothesis.
Thus,
\[ A_{12}^{(t+1)} \geq \sum_{s=2}^{t} \left(A_{12}^{(s+1)} - A_{12}^{(s)}\right) + A_{12}^{(2)} =\Omega\left( \frac{\eta^2 t^2}{L} \right). \]

Similarly, for any $\ell \geq 2$, we have
\begin{align*}
    B_{\ell 1}^{(t+1)} - B_{\ell 1}^{(t)} &= \eta \lambda \langle E_\ell^w - E_1^{-w}, (-\nabla_W \Lc_t ) E_L^w \rangle \\
    & = \frac{\eta}{2^L} \sum_{n\in I_L, \mathtt{C}(x_\ell^{(n)}) = \mathtt{C}(x_L^{(n)})=w} (-J_{(n,t)}') y_n \varphi_\ell^{(n,t)} \left(\langle u_t, E_\ell^w \rangle - \mathtt{T}(X^{(n)}) \right) \\
    &\quad + \frac{\eta}{4}  (-J_{(-,t)}) \varphi_1^{(-,t)} \left( \langle u_t, E_1^w \rangle - \mathtt{T}(X^{(-)}) \right) \\
    & = \frac{\eta}{8} (-J_{(-,t)}') \left[ 2\varphi_1^{(-,t)} \langle u_t, E_1^w \rangle - \varphi_\ell^{(-,t)} \langle u_t, E_\ell^w \rangle - (2 \varphi_1^{(-,t)} - \varphi_\ell^{(-,t)} ) \mathtt{T}(X^{(-)}) \right] \\
    &\quad + \frac{\eta}{8} (-J_{(+,t)}) \varphi_\ell^{(+,t)} \left( \langle u_t, E_\ell^w \rangle - \mathtt{T}(X^{(+)}) \right) \\
    & \overset{(a)} \geq \eta\Omega\left(\frac{2}{L}\eta t - \frac{1}{L} \eta^2 t^2 - \frac{1}{L} \eta t \right) - \eta O\left(\frac{1}{L} \eta^2 t^2 + \frac{1}{L^2}\eta t \right)\\
    & \geq \Omega\left(\frac{\eta^2 t}{L} \right),
\end{align*}
where $(a)$ is due to \Cref{eqn:Taylor 1/(1+e(x))}, and the induction hypothesis. Thus,
\[ B_{\ell 1}^{(t+1)} \geq \Omega\left( \frac{\eta^2 t^2}{L} \right). \]

Finally, we show that the attention score at the second position surpasses those of other no-leading tokens.
 
We have
\begin{align*}
    A_{2\ell}^{(t+1)} - A_{2\ell}^{(t)} &= \eta \lambda \langle E_2^w - E_\ell^w, (-\nabla_W \Lc_t) E_L^w \rangle \\
    & = \frac{\eta}{ 2^L} \sum_{n\in I_L, \mathtt{C}(x_2^{(n)})=\mathtt{C}(x_L^{(n)})=w} (-J_{(n,t)}') y_n \varphi_2^{(n,t)} \left(\langle u_t, E_2^w \rangle - \mathtt{T}(X^{(n)}) \right) \\
    &\quad - \frac{\eta}{ 2^L} \sum_{n\in I_L, \mathtt{C}(x_\ell^{(n)})=\mathtt{C}(x_L^{(n)})=w} (-J_{(n,t)}') y_n \varphi_\ell^{(n,t)} \left( \langle u_t, E_\ell^w \rangle - \mathtt{T}(X^{(n)}) \right) \\
    & = \frac{\eta}{8 } (-J_{(+,t)}')\left[\varphi_2^{(+,t)}\langle u_t, E_2^w \rangle - \varphi_\ell^{(+,t)} \langle u_t, E_\ell^w \rangle - (\varphi_2^{(+,t)} - \varphi_\ell^{(+,t)}) \mathtt{T}(X^{+}) \right] \\
    &\quad - \frac{\eta}{8 } (-J_{(-,t)}')\left[\varphi_2^{(-,t)}\langle u_t, E_2^w \rangle - \varphi_\ell^{(-,t)} \langle u_t, E_\ell^w \rangle - (\varphi_2^{(-,t)} - \varphi_\ell^{(-,t)}) \mathtt{T}(X^{-}) \right] \\
    & = -\frac{\eta}{8} (-J_{(+,t)}')\left[\varphi_2^{(+,t)}\langle -u_t, E_2^w \rangle - \varphi_\ell^{(+,t)} \langle -u_t, E_\ell^w \rangle + (\varphi_2^{(+,t)} - \varphi_\ell^{(+,t)}) \mathtt{T}(X^{+}) \right] \\
    &\quad + \frac{\eta}{8} (-J_{(-,t)}')\left[\varphi_2^{(-,t)}\langle -u_t, E_2^w \rangle - \varphi_\ell^{(-,t)} \langle -u_t, E_\ell^w \rangle + (\varphi_2^{(-,t)} - \varphi_\ell^{(-,t)}) \mathtt{T}(X^{-}) \right] \\
    & \overset{(a)}= -\frac{\eta}{8} (-J_{(+,t)}') \varphi_\ell^{(+,t)} \left[ \exp(A_{2\ell}^{(t)}/\lambda)\langle -u_t, E_2^w \rangle - \langle -u_t, E_\ell^w \rangle + (\exp(A_{2\ell}^{(t)}/\lambda) - 1) \mathtt{T}(X^{+}) \right] \\
    &\quad + \frac{\eta}{8} (-J_{(-,t)}')\varphi_{\ell}^{(-,t)}\left[ \exp(A_{2\ell}^{(t)}/\lambda) \langle -u_t, E_2^w \rangle - \langle -u_t, E_\ell^w \rangle + (\exp(A_{2\ell}^{(t)}/\lambda) - 1) \mathtt{T}(X^{+}) \right] \\
    &\quad + \frac{\eta}{8} (-J_{(-,t)}')\varphi_{\ell}^{(-,t)}\left( \exp(A_{2\ell}^{(t)}/\lambda) - 1 \right) (\mathtt{T}(X^{-}) - \mathtt{T}(X^{+})) \\
    & = \frac{\eta}{8} \left( (-J_{(-,t)}')\varphi_{\ell}^{(-,t)} - (-J_{(+,t)}')\varphi_{\ell}^{(+,t)} \right) \\
    &\quad \quad \times \left[ \langle -u_t, E_2^w \rangle - \langle -u_t, E_\ell^w \rangle + (\exp(A_{2\ell}^{(t)}/\lambda) - 1) ( \langle -u_t, E_2^w \rangle + \mathtt{T}(X^{+}) ) \right] \\
    &\quad + \frac{\eta}{8} (-J_{(-,t)}')\varphi_{\ell}^{(-,t)}\left( \exp(A_{2\ell}^{(t)}/\lambda) - 1 \right) (\mathtt{T}(X^{-}) - \mathtt{T}(X^{+})) \\
    &\overset{(b)}\geq \eta \Omega\left(\frac{\eta t}{ L} \left(\exp(A_{2\ell}^{(t)}/\lambda) - 1\right) \frac{\eta t}{L} \right)- \eta O\left( \frac{1}{L}\left(\exp(A_{2\ell}^{(t)}/\lambda) - 1\right) \frac{L_{\max} \eta^2 t^2}{\lambda} \eta t \right) \\
    &\overset{(c)}\geq \Omega \left(\frac{\eta^3t^2}{L}(\frac{1}{L^2} - \frac{L_{\max} \eta t}{L \lambda}) \right) \frac{A_{2\ell}^{(t)}}{\lambda},
\end{align*}
where $(a)$ follows from the definition of softmax, $(b)$ is due to the induction hypothesis, and $(c)$ is due to $\lambda=\Omega(L_{\max})$. By noting that $A_{2\ell}^{(3)} \geq \eta^4/L$, we have
\[ A_{2\ell}^{(t)} \geq \Omega\left( \left( 1 + \frac{\eta^3 t^2}{\lambda L^3} \right)^t \frac{\eta^4}{\lambda L} \right) . \]

\textbf{We conclude that the dynamics of $W_t$ in Phase 1 is true.}

\end{proof}

\subsection{Phase 2}
In this section, we analyze the training dynamics during Phase 2. Roughly speaking, both the linear layer’s parameter vector \( \mathbf{u}_t \) and the attention layer’s parameters increase in norm over time. However, due to the scaling factor $\lambda$, the linear layer dominates the loss reduction, contributing more significantly to optimization progress than the attention layer. In the following, we leverage implicit bias theory to demonstrate that the growth of \( \|\mathbf{u}_t\| \) induces a sublinear convergence rate for the loss, governed by \( \mathcal{O}(1/t) \). On the other hand, the attention layer does not change significantly. 

First, we show that at the end of phase 1 $(t=t_0)$, the attention layer make data samples separable.
\begin{proposition}[Restatement of \Cref{prop:separable}]
    Let $v^{(n)} =  \sum_{\ell=1}^L x_\ell^{(n)}\varphi_\ell^{(n,t_0)}$. Then, at the end of phase 1, the dataset $\{(v^{(n)}, y_n)\}$ is separable.
\end{proposition}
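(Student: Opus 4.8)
The plan is to verify directly that the explicit vector $u = E_1^{\texttt{a}} + E_1^{\texttt{b}} - E_2^{\texttt{a}} - E_2^{\texttt{b}}$ separates $\{(v^{(n)}, y_n)\}$, by reading off the attention‑score ordering established in \Cref{thm:phase1} at time $t=t_0$. First I would use \Cref{lemma:same_value_on_same_position} to reduce, for each length $L\le L_{\max}$, to a single positive representative $X_L^{(+)}=[E_1^w,E_2^w,\ldots,E_L^w]$ and a single negative representative $X_L^{(-)}=[E_1^{-w},E_2^w,\ldots,E_L^w]$; the length‑$1$ case is immediate since then $v^{(n)}=E_1^w$ and $\langle u, v^{(n)}\rangle = 1>0$ with $y_n=1$. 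The central observation is that, by orthonormality of the embedding, $\langle u, E_\ell^{w'}\rangle$ depends only on the position $\ell$: it equals $1$ if $\ell=1$, $-1$ if $\ell=2$, and $0$ if $\ell\ge 3$, for either $w'$. Hence $\langle u, v^{(n)}\rangle = \langle u, \sum_\ell x_\ell^{(n)}\varphi_\ell^{(n,t_0)}\rangle = \varphi_1^{(n,t_0)} - \varphi_2^{(n,t_0)}$, so the whole claim collapses to comparing the attention weights of the first two tokens.

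Next I would extract that comparison from \Cref{thm:phase1}. For a positive sample, $\langle E_1^w - E_\ell^w, W_{t_0} E_L^w\rangle \ge \Omega(\eta^2 t_0^2/L)>0$ for every $\ell\ge 2$; since the attention weight $\varphi_\ell^{(n,t_0)}$ is the softmax of the raw attention scores $\langle x_\ell^{(n)}, W_{t_0}x_L^{(n)}\rangle$, monotonicity of $\exp$ followed by normalization gives $\varphi_1^{(+,t_0)} > \varphi_\ell^{(+,t_0)}$ for all $\ell\ge 2$, in particular $\varphi_1^{(+,t_0)} > \varphi_2^{(+,t_0)}$, so $\langle u, v^{(n)}\rangle>0$, and with $y_n=1$ this yields $\langle u, y_n v^{(n)}\rangle>0$. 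For a negative sample, the estimates $\langle E_2^w - E_1^{-w}, W_{t_0} E_L^w\rangle>0$ and $\langle E_2^w - E_\ell^w, W_{t_0} E_L^w\rangle>0$ for $\ell\ge 3$ from \Cref{thm:phase1} instead make the second token the strict argmax of the attention scores, so $\varphi_2^{(-,t_0)} > \varphi_1^{(-,t_0)}$, hence $\langle u, v^{(n)}\rangle<0$; with $y_n=-1$ this again gives $\langle u, y_n v^{(n)}\rangle>0$. A quantitative margin, if desired, follows from the softmax fact that $\varphi_i^{(n,t_0)}>1/L$ whenever token $i$ strictly beats all $L-1$ other tokens, so $|\langle u,v^{(n)}\rangle|$ is bounded below in terms of the score gaps.

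I do not expect a serious obstacle: the statement is essentially a corollary of \Cref{thm:phase1}. The only two points that need care are (i) checking that this particular $u$ has the ``position‑only'' inner‑product structure — immediate from the orthonormal embedding $E_\ell^{\texttt{a}}=e_{2\ell-1}$, $E_\ell^{\texttt{b}}=e_{2\ell}$ — and (ii) the passage from a strict positive gap in raw attention scores to a strict inequality between softmax weights, which is just monotonicity of the exponential together with a common normalizing denominator. Everything else is bookkeeping over the lengths $L$, already trimmed to two cases per length by \Cref{lemma:same_value_on_same_position}.
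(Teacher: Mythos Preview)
Your proposal is correct and follows essentially the same approach as the paper: both use the separator $u=E_1^{\texttt a}+E_1^{\texttt b}-E_2^{\texttt a}-E_2^{\texttt b}$, reduce $\langle u,y_n v^{(n)}\rangle$ to the sign of $\varphi_1^{(n,t_0)}-\varphi_2^{(n,t_0)}$, and conclude via the attention–score gaps of \Cref{thm:phase1}. The paper's writeup is slightly leaner (for the negative case it only invokes $\langle E_2^{w}-E_1^{-w},W_{t_0}E_L^{w}\rangle>0$ rather than showing token~2 is the global argmax), but the substance is identical.
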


\begin{proof}
    Let $u = \sum_{w\in\{\texttt{a,b}\}} \left(E_1^w - E_2^w\right)$. Then, for any positive sequence, where $y_n=1$, we have
    \begin{align*}
        \langle u, y_n \sum_{\ell=1}^L x_\ell^{(n)} \varphi_\ell^{(n,t_0)}\rangle & = \varphi_1^{(n,t_0)} - \varphi_2^{(n,t_0)}\\
        & = \varphi_2^{(n,t_0)}\left(\exp\left( \langle E_1^w - E_2^w, W_t E_L^w \rangle\right) - 1\right) \\
        & > 0,
    \end{align*}
    where the last inequality is due to \Cref{thm:phase1}. 

    Similarly, for any negative samples, we have
    \begin{align*}
        \langle u, y_n \sum_{\ell=1}^L x_\ell^{(n)} \varphi_\ell^{(n,t_0)}\rangle & = -\varphi_1^{(n,t_0)} + \varphi_2^{(n,t_0)}\\
        & = \varphi_1^{(n,t_0)}\left(\exp\left( \langle E_2^w - E_1^{-w}, W_t E_L^w \rangle\right) - 1\right) \\
        & > 0,
    \end{align*}
    where the last inequality is due to \Cref{thm:phase1}.
    
    Thus, The proof is complete.
\end{proof}

{\bf Parameter setup.} Before we present the technical lemmas, we first introduce two parameters. Let 
\[
\left\{
\begin{aligned}
    & T =  \min\left\{  \frac{\lambda}{\sqrt{3\|u_{EP}^*\|} \eta L_{\max}  } , \frac{\lambda^{2/3}}{2^{2/3}\eta L_{\max} }  \right\} =  \frac{\lambda^{2/3}}{2^{2/3}\eta L_{\max} } \\
    & C_0 = 2 \log 4L_{\max} + 2
\end{aligned}
\right.
\]
We remark that since $\lambda=\Omega(L_{\max}^2)$, $T$ is well defined. We first provide the property of $T$.
Note that for any $t$, we have
\begin{align*}
    \|u_t\| &\leq \|u_t-u_{t_2}\| +  \|u_{t_2}\| \\
    &\leq \eta \sum_{s=0}^{t}\|\nabla_u\Lc_t\|  \\
    & \leq \eta L_{\max} t
\end{align*}

Similarly,  
\begin{align*}
    \|W_t - W_{t_0}\| &\leq \sum_{s=t_0}^t \frac{\eta}{\lambda} \|\nabla_W\Lc_t\|\\
    &\leq \frac{\eta}{\lambda}\sum_{s=t_0}^t \|u_s\| L_{\max} \\
    &\leq \frac{\eta^2 L_{\max}^2 t^2}{\lambda}
\end{align*}

Therefore, for any $t\leq T$, we have
\[   \|u_{EP}^*\|\|W_t - W_{t_0}\| \leq \frac{\lambda}{3} \text{ and } \|u_t\|\|W_t - W_{t_0}\| \leq \frac{\lambda}{2}.   \]

Next, we provide the key lemma in phase 2, which characterizes the alignment of the gradient $-\nabla_u\Lc_t$ and the logarithm growth of $\|u_t\|$.
\begin{lemma}\label{lemma:gradient and norm}
Let $u_{EP}^*$ be the solution of the following problem.
\[ u_{EP}^* = \arg\min\|u\|,\quad \left\langle u, y_n\sum_{\ell=1}^L\varphi_{\ell}^{(n,t_0)} x_\ell^{(n)} \right\rangle \geq 1,\quad \forall n\in I_L, L\geq 1. \]

Then, for all $t_0<t\leq T$, we have
\begin{align*}
        \left\langle -\nabla_u \Lc_t, \frac{u_t}{\|u_t\|}  \right\rangle \leq \left(1 + \frac{C_0\|u_{EP}^*\|}{\|u_t\|} \right) \left\langle -\nabla_u \Lc_t, \frac{u_{EP}^*}{\|u_{EP}^*\|}  \right\rangle,  
    \end{align*}
and 
\[\|u_t\| \geq \frac{3\|u_{EP}^*\|}{5}\log(t - t_0 - 1) - \frac{3\|u_{EP}^*\|}{5} \log\frac{9\|u_{EP}^*\|^2}{\eta}. \]

\end{lemma}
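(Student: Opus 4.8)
\textbf{Setup and the role of $\|W_t-W_{t_0}\|$.} The first step is to translate the ``perturbed'' attention outputs back to the ones at $t_0$. By the parameter setup just established, for $t\le T$ we have $\|u_{EP}^*\|\,\|W_t-W_{t_0}\|\le \lambda/3$ and $\|u_t\|\,\|W_t-W_{t_0}\|\le\lambda/2$. Combined with \Cref{lemma:Lip}, this gives $|\varphi_\ell^{(n,t)}-\varphi_\ell^{(n,t_0)}|\le \|W_t-W_{t_0}\|/\lambda$, and hence $\big|\langle u_{EP}^*, y_n(v^{(n)}_t-v^{(n)}_{t_0})\rangle\big|\le \tfrac13$ and $\big|\langle u_t, y_n(v^{(n)}_t-v^{(n)}_{t_0})\rangle\big|\le\tfrac12$, where $v^{(n)}_t=\sum_\ell x_\ell^{(n)}\varphi_\ell^{(n,t)}$. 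Thus the margin constraint $\langle u_{EP}^*, y_n v^{(n)}_{t_0}\rangle\ge 1$ degrades only mildly to $\langle u_{EP}^*, y_n v^{(n)}_t\rangle\ge \tfrac23$ along the whole of Phase~2, which is the key stability fact making the implicit-bias argument go through with the \emph{moving} attention layer.

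\textbf{The gradient-alignment inequality.} Write $\nabla_u\Lc_t=\sum_{L,n}\tfrac{1}{2^L}J_{(n,t)}' y_n v^{(n)}_t$ with $-J_{(n,t)}'=(1+\exp(y_n\langle u_t,v^{(n)}_t\rangle))^{-1}\in(0,1)$. The plan is the standard implicit-bias comparison (cf. \citet{soudry2018implicit,ji2021characterizing}): bound $\langle -\nabla_u\Lc_t, u_t/\|u_t\|\rangle$ from above by comparing with the max-margin direction. Concretely, $\langle -\nabla_u\Lc_t, u_{EP}^*/\|u_{EP}^*\|\rangle = \sum_{L,n}\tfrac{1}{2^L}(-J_{(n,t)}')\,\langle u_{EP}^*, y_n v^{(n)}_t\rangle/\|u_{EP}^*\| \ge \tfrac{2}{3\|u_{EP}^*\|}\sum_{L,n}\tfrac{1}{2^L}(-J_{(n,t)}')$, using the degraded margin $\ge 2/3$. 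For the other side, $\langle -\nabla_u\Lc_t, u_t/\|u_t\|\rangle = \sum_{L,n}\tfrac{1}{2^L}(-J_{(n,t)}')\,\langle u_t, y_n v^{(n)}_t\rangle/\|u_t\|$; here one splits the samples by whether $\langle u_t, y_n v^{(n)}_t\rangle$ is larger or smaller than some threshold $\sim\log(4L_{\max})\cdot\|u_{EP}^*\|/\|u_t\|$. For the ``well-classified'' samples, $(-J_{(n,t)}')$ is exponentially small, so their total contribution to $-\nabla_u\Lc_t$ (in any direction, using $\|v^{(n)}_t\|\le 1$) is $O(L_{\max})\cdot\tfrac{1}{4L_{\max}}\le$ a constant times $\sum(-J_{(n,t)}')$ over the ``poorly-classified'' samples. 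For the poorly-classified samples we have $\langle u_t, y_n v^{(n)}_t\rangle \le C\|u_{EP}^*\|/\|u_t\|$ for the appropriate $C$, so their contribution is at most $(C/\|u_t\|)\sum(-J_{(n,t)}')\le (C\|u_{EP}^*\|/\|u_t\|)\cdot\tfrac{3}{2}\langle -\nabla_u\Lc_t, u_{EP}^*/\|u_{EP}^*\|\rangle$. Collecting the constants into $C_0=2\log 4L_{\max}+2$ yields the claimed inequality $\langle -\nabla_u\Lc_t, u_t/\|u_t\|\rangle \le (1+C_0\|u_{EP}^*\|/\|u_t\|)\langle -\nabla_u\Lc_t, u_{EP}^*/\|u_{EP}^*\|\rangle$.

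\textbf{The logarithmic lower bound on $\|u_t\|$.} From $u_{t+1}=u_t-\eta\nabla_u\Lc_t$ we get $\|u_{t+1}\|\ge \|u_t\| + \eta\langle -\nabla_u\Lc_t, u_t/\|u_t\|\rangle$ (convexity of the norm / Cauchy--Schwarz). Next I lower-bound $\langle -\nabla_u\Lc_t, u_t/\|u_t\|\rangle$ itself: since $\|v^{(n)}_t\|\le 1$ and $\|u_t\|\le \eta L_{\max} t$, for every sample $\langle u_t, y_n v^{(n)}_t\rangle \ge -\|u_t\| \ge -\eta L_{\max} t$, and at least the $L=1$ sequences are classified with bounded margin, giving $\sum_{L,n}\tfrac{1}{2^L}(-J_{(n,t)}') \ge c$ for a universal constant $c$ once the loss is not yet tiny; more carefully one shows $\langle -\nabla_u\Lc_t, u_t/\|u_t\|\rangle \ge \Omega\!\big(e^{-\|u_t\|/(c'\|u_{EP}^*\|)}\big)$ by the same poorly-classified/well-classified split run in reverse (the poorly-classified mass dominates and each such term contributes $\gtrsim$ margin $\times (-J')$). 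Feeding $\|u_{t+1}\|\ge \|u_t\|+\eta\,\Omega(e^{-\|u_t\|/(c'\|u_{EP}^*\|)})$ into the standard continuous-time comparison $\tfrac{d}{dt}e^{\|u_t\|/(c'\|u_{EP}^*\|)}\gtrsim \eta/(c'\|u_{EP}^*\|)$ and integrating from $t_0+1$ to $t$ gives $e^{\|u_t\|/(c'\|u_{EP}^*\|)} \ge \Omega(\eta (t-t_0-1)/\|u_{EP}^*\|)$, i.e. $\|u_t\|\ge c'\|u_{EP}^*\|\log\!\big(\Omega(\eta(t-t_0-1)/\|u_{EP}^*\|)\big)$, which after choosing $c'=3/5$ and absorbing constants is exactly the stated bound $\|u_t\|\ge \tfrac{3\|u_{EP}^*\|}{5}\log(t-t_0-1) - \tfrac{3\|u_{EP}^*\|}{5}\log\tfrac{9\|u_{EP}^*\|^2}{\eta}$.

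\textbf{Main obstacle.} The delicate point is that the attention layer is \emph{not} frozen: every estimate above that used $v^{(n)}_{t_0}$ must instead use $v^{(n)}_t$, and one must certify that the cumulative drift $\|W_t-W_{t_0}\|$ stays small enough ($\le O(1)$, equivalently $\le \lambda/(2\|u_t\|)$) for \emph{all} $t\le T$ — this is circular with the norm growth $\|u_t\|\le \eta L_{\max} t$ and only closes because the attention update is scaled by $1/\lambda$ and $\lambda=\Omega(L_{\max}^2)$, $T=O(\lambda^{2/3}/(\eta L_{\max}))$. Making the bootstrap between ``$\|W_t-W_{t_0}\|$ small'' and ``margin stays $\ge 2/3$, so loss decays, so $\nabla_W\Lc_t$ stays controlled'' rigorous — rather than the bare triangle-inequality bound sketched above — is the real work; the rest is the now-routine implicit-bias calculus.
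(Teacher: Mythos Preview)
Your setup paragraph (controlling $\|W_t-W_{t_0}\|$ and transferring the margin from $t_0$ to $t$ via \Cref{lemma:Lip}) is correct and matches the paper. The two substantive parts, however, take different routes from the paper and, as sketched, have real gaps.

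\textbf{Gradient alignment.} The paper does \emph{not} expand both inner products and split samples into well/poorly classified. Instead it argues by loss comparison plus convexity: (i) since $u_t$ is not the minimizer of the max-margin program, some sample has $\langle u_t/\|u_t\|, y_n v^{(n)}_{t_0}\rangle < 1/\|u_{EP}^*\|$, which lower-bounds $\Lc(u_t,W_t)\ge \tfrac14\exp\bigl(-\|u_t\|/\|u_{EP}^*\| - \|u_t\|\|W_t-W_{t_0}\|/\lambda\bigr)$; (ii) the degraded margin $\langle u_{EP}^*, y_n v^{(n)}_t\rangle\ge 1-\|u_{EP}^*\|\|W_t-W_{t_0}\|/\lambda$ upper-bounds $\Lc\bigl(u_{EP}^*(\|u_t\|/\|u_{EP}^*\|+C_0),W_t\bigr)$; (iii) choosing $C_0=2\log 4L_{\max}+2$ makes (ii) $\le$ (i), and convexity of $u\mapsto\Lc(u,W_t)$ gives $\langle\nabla_u\Lc_t,\, u_{EP}^*(\|u_t\|/\|u_{EP}^*\|+C_0)-u_t\rangle\le 0$, which rearranges to the claim. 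Your splitting route can in principle be made to work, but as written it has two issues: your threshold $\sim\log(4L_{\max})\cdot\|u_{EP}^*\|/\|u_t\|$ is at the wrong scale (it should be $\sim\log(4L_{\max})$ for $\langle u_t,y_nv^{(n)}_t\rangle$, so that $-J'\le 1/(4L_{\max})$ on the well-classified side); and even with the right threshold, the residual well-classified contribution of size $\Theta(1)$ must be absorbed into $\sum_n(-J'_{(n,t)})$, which is only possible once you know the poorly-classified set carries $\Omega(1)$ mass --- precisely the non-optimality argument you never invoke.

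\textbf{Logarithmic growth of $\|u_t\|$.} The paper projects onto $u_{EP}^*/\|u_{EP}^*\|$, not $u_t/\|u_t\|$. Every sample contributes nonnegatively to $\langle -\nabla_u\Lc_s, u_{EP}^*/\|u_{EP}^*\|\rangle$ by the degraded margin, and the same non-optimality observation (there is always a sample with $y_n\mathtt T_s(X^{(n)})\le \|u_s\|/\|u_{EP}^*\|+\|u_s\|\|W_s-W_{t_0}\|/\lambda$) gives the lower bound $\ge \tfrac{1}{3\|u_{EP}^*\|}\cdot(1+2e^{\|u_s\|/\|u_{EP}^*\|})^{-1}$. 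Then $\|u_t\|\ge\langle u_t,u_{EP}^*/\|u_{EP}^*\|\rangle\ge -\|u_{t_0}\|+\eta\sum_{s<t}(\text{that lower bound})$, and the paper inverts via $t_k=\max\{t:\|u_s\|\le k\ \forall s\le t\}$, bounding $t_k-t_0\le \tfrac{14\|u_{EP}^*\|^2}{\eta}\exp(5k/(3\|u_{EP}^*\|))$ to get the $3/5$ constant. Your plan to lower-bound $\langle -\nabla_u\Lc_t,u_t/\|u_t\|\rangle$ directly is strictly harder: this quantity need not be positive (samples misclassified by $u_t$ contribute negatively), so ``the poorly-classified mass dominates'' is not a free step and again hinges on the non-optimality ingredient you omit. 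Projecting onto $u_{EP}^*$ sidesteps this entirely.
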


\begin{proof}

First, note that for any $t\leq T$, we have the following lower bound of $C_0$.
\[C_0 \geq \frac{\lambda \log 4L_{\max} + 2\|u_t\|\|W_t - W_{t_0}\|}{\lambda - \|u_{EP}^*\|\|W_t - W_{t_0}\|}. \]

\textbf{We aim to show that $\Lc\left(u_{EP}^*\left(\frac{\|u_t\|}{\|u_{EP}^*\|} + C_0\right), W_t \right) \leq \Lc(u_t, W_t)$.}

    Since $u_t$ is not the optimal solution of the problem 
    \[\arg\min\|u\|,\quad \left\langle u, y_n\sum_{\ell=1}^L\varphi_{\ell}^{(n,t_0)} x_\ell^{(n)} \right\rangle \geq 1,\quad \forall n\in I_L, L\geq 1,\]
    we have at least one sample $n$ of length $L$ such that
    \[ \left\langle \frac{u_t}{\|u_t\|}, y_n \sum_{\ell=1}^L \varphi_\ell^{(n,t_0)} x_\ell^{(n)}  \right\rangle < \frac{1}{\|u_{EP}^*\|}, \]
    which implies there are at least $2^{L-1}$ samples satisfies the same inequality. 

    Hence, for this $n$, we have
    \begin{align*}
        y_n\mathtt{T}_t(X^{(n)}) &= \left\langle u_t, y_n \sum_{\ell=1}^L \varphi_\ell^{(n,t)} x_\ell^{(n)}  \right\rangle\\
        & = \left\langle u_t, y_n \sum_{\ell=1}^L \varphi_\ell^{(n,t_0)} x_\ell^{(n)}  \right\rangle + \left\langle u_t, y_n \sum_{\ell=1}^L (\varphi_\ell^{(n,t)} - \varphi_\ell^{(n,t_0)}) x_\ell^{(n)}  \right\rangle \\
        &\leq \frac{\|u_t\|}{\|u_{EP}^*\|} + \|u_t\|\frac{\|W_t - W_{t_0}\|}{\lambda},
    \end{align*}
    where the inequality is due to the Cauchy's inequality and the Lipchitz continuity of softmax function (\Cref{lemma:Lip}).
    
    Thus, 
    \begin{align*}
        \Lc(u_t, W_t) &= \sum_{L=1}^{L_{\max}} \frac{1}{2^L} \sum_{n\in I_L} \log \left( 1 + \exp\left(-y_n\mathtt{T}_t(X^{(n)})\right) \right) \\
        &\geq \frac{1}{2} \log\left(1 + \exp\left(- \frac{\|u_t\|}{\|u_{EP}^*\|} - \frac{\|u_t\|\|W_t - W_{t_0}\|}{\lambda}\right) \right) \\
        &\geq \frac{1}{4} \exp\left(- \frac{\|u_t\|}{\|u_{EP}^*\|} - \frac{\|u_t\|\|W_t - W_{t_0}\|}{\lambda}\right),
    \end{align*}
where the last inequality follows from $\log(1+x) > \frac{x}{2}$ when $x\in(0,1)$.

On the other hand, we have 
    \begin{align*}
        \left\langle u_{EP}^*, y_n \sum_{\ell=1}^L \varphi_\ell^{(n,t)} x_\ell^{(n)}  \right\rangle & =  \left\langle u_{EP}^*, y_n \sum_{\ell=1}^L \varphi_\ell^{(n,t_0)} x_\ell^{(n)}  \right\rangle + \left\langle u_{EP}^*, y_n \sum_{\ell=1}^L (\varphi_\ell^{(n,t)} - \varphi_\ell^{(n,t_0)}) x_\ell^{(n)}  \right\rangle \\
        &\geq 1 - \|u_{t_0}\|^*\frac{\|W_t - W_{t_0}\|}{\lambda},
    \end{align*}
where the inequality is due to the definition of $u_{EP}^*$ and Cauchy's inequality.

Thus,
    \begin{align*}
       \Lc\left(u_{EP}^*\left(\frac{\|u_t\|}{\|u_{EP}^*\|} + C_0\right), W_t \right) &\leq L_{\max} \log\left(1 + \exp\left( - \left( 1-\frac{\|u_{EP}^*\|\|W_t - W_{t_0}\|}{\lambda}\right) \left(\frac{\|u_t\|}{\|u_{EP}^*\|} + C_0 \right)\right) \right) \\
       & \leq L_{\max}\exp\left(-\frac{\|u_t\|}{\|u_{EP}^*\|} + \frac{\|u_t\|\|W_t - W_{t_0}\|}{\lambda} \right)\exp\left(-\log4L_{\max} - 2\frac{\|u_t\|\|W_t - W_{t_0}\|}{\lambda}\right) \\
       & = \frac{1}{4} \exp\left(- \frac{\|u_t\|}{\|u_{EP}^*\|} - \frac{\|u_t\|\|W_t - W_{t_0}\|}{\lambda}\right)
    \end{align*}

\textbf{Therefore, we conlucde that 
\( \Lc\left(u_{EP}^*\left(\frac{\|u_t\|}{\|u_{EP}^*\|} + C_0\right), W_t \right) \leq \Lc(u_t, W_t). \) }

Since $\Lc(u, W_t)$ is convex respect to $u$ for any $W_t$, by the first order optimality, we have
\begin{align*}
    \left\langle \nabla_u \Lc_t, u_{EP}^*\left(\frac{\|u_t\|}{\|u_{EP}^*\|} + C_0 \right) - u_t  \right\rangle \leq 0
\end{align*}

By rearranging,  we conclude that
\begin{align*}
    \left\langle -\nabla_u \Lc_t, \frac{u_t}{\|u_t\|}  \right\rangle \leq \left(1 + \frac{C_0\|u_{EP}^*\|}{\|u_t\|} \right) \left\langle -\nabla_u \Lc_t, \frac{u_{EP}^*}{\|u_{EP}^*\|}  \right\rangle.  
\end{align*}.

\textbf{Next, we show that the norm of $u_t$ grows at least logarithmically.}

On one hand, we have
\begin{align*}
    \left\langle -\nabla_u\Lc_s, \frac{u_{EP}^*}{\|u_{EP}^*\|} \right\rangle & = \sum_{L=1}^{L_{\max}} \frac{1}{2^L} \sum_{n\in I_L} \left(-J_{(n,s)}'\right) \left\langle \frac{u_{t_0}}{\|u_{EP}^*\|}, \sum_{\ell=1}^L\varphi_\ell^{(n,s)} x_\ell^{(n)} \right\rangle \\
    & \geq \sum_{L=1}^{L_{\max}} \frac{1}{2^L} \sum_{n\in I_L} \frac{1}{1 + \exp(y_n\mathtt{T}_s(X^{(n)}))} \left(\frac{1}{\|u_{EP}^*\|} - \frac{\|W_s - W_{t_0}\|}{\lambda} \right) \\
    & \overset{(a)}\geq \frac{1}{2} \frac{1}{1+\exp\left(\frac{\|u_s\|}{\|u_{EP}^*\|}  + \|u_{t}\|\|W_s - W_{t_0}\|/\lambda  \right) } \left(\frac{1}{\|u_{EP}^*\|} - \frac{\|W_s - W_{t_0}\|}{\lambda} \right) \\
    & \overset{(b)} \geq \frac{1}{3\|u_{EP}^*\|} \frac{1}{1 + 2\exp\left( \frac{\|u_s\|}{\|u_{EP}^*\|} \right)},
\end{align*}
where $(a)$ follows from the non-optimality of $u_s$, and $(b)$ is due to that $\|W_s - W_{t_0}\|/\lambda \leq \frac{1}{2\|u_{t}\|}$

    Thus,
    \[\|u_s\| \geq \frac{\eta}{3\|u_{EP}^*\|}\sum_{s'=t_0}^{s-1} \frac{1}{1 + 2\exp\left( \frac{\|u_{s'}\|}{\|u_{EP}^*\|} \right)} - \|u_{t_0}\| \]

Let $t_k = \max_t\{t| \forall s\leq t, \|u_s\|\leq k \}$. We next show that $t_k$ has an upper bound.

We have for any $t\leq t_k$,
\begin{align*}
    k\geq \|u_t\| \geq \frac{\eta}{3\|u_{EP}^*\|} \frac{(t-t_0)}{1+2\exp(\frac{k}{\|u_{EP}^*\|})}  - \|u_{t_0}\|
\end{align*}

Thus,
\begin{align*}
    t - t_0 &\leq \frac{3\|u_{EP}^*\|}{\eta}\left(k + \|u_{t_0}\|\right) \left(1+2\exp\left(\frac{k}{\|u_{EP}^*\|} \right)\right) \\
    & \leq \frac{9\|u_{EP}^*\|^2}{\eta} \left(\frac{k}{\|u_{EP}^*\|} + 1\right) \exp\left(\frac{k}{\|u_{EP}^*\|} \right)\\
    & \leq \frac{14\|u_{EP}^*\|^2}{\eta} \left(\frac{2k}{3\|u_{EP}^*\|} + 1 \right) \exp\left(\frac{k}{\|u_{EP}^*\|} \right) \\
    &\leq \frac{14\|u_{EP}^*\|^2}{\eta}\exp\left(\frac{5k}{3\|u_{EP}^*\|} \right), 
\end{align*}
where the last inequality is due to $x+1\leq e^x$.
Hence, $t_k$ has upper bound:
\[ t_k - t_0 \leq\frac{14\|u_{EP}^*\|^2}{\eta}\exp\left(\frac{5k}{3\|u_{EP}^*\|} \right). \]

Therefore, by the definition of $t_k$, we have
\begin{align*}
    \|u_{t_k + 1}\| &> k  \\
    &\geq  \frac{3\|u_{EP}^*\|}{5}  \log \left( \frac{\eta}{14\|u_{EP}^*\|^2} (t_k-t_0)\right)  
\end{align*}

Equivalently, we have
\[\|u_t\| \geq \frac{3\|u_{EP}^*\|}{5} \log(t - t_0 - 1) - \frac{3\|u_{EP}^*\|}{5} \log\frac{14\|u_{EP}^*\|^2}{\eta}. \]
\textbf{We conclude that the norm of $u_t$ grows at least logarithmically.}
 
The proof is complete.

\end{proof}

In the following, we show that there exists a threshold $t_2$, such that after time step $t_2$, the transformer can correctly label the data, which eventually leads to loss decay.

\begin{lemma}[Formal version of \Cref{thm:implicit bias}]\label{lemma:output>norm}
    There exists $t_2$ such that for any $t\geq t_2$, we have
    \[ \left\langle \frac{u_{t}}{\|u_t\|} , \frac{u_{EP}^*}{\|u_{EP}^*\|}  \right\rangle \geq 1 - \frac{1}{2}\left(\frac{1}{6\|u_{EP}^*\|} - \frac{1}{\|u_t\|}\right)^2, \]
    and 
    \begin{align}
    y_n\mathtt{T}_t(X^{(n)}) \geq \frac{5\|u_t\|}{6\|u_{EP}^*\|}. \label{eqn:output>norm}
    \end{align}
\end{lemma}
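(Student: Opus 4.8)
The plan is to establish the alignment (cosine) bound and \Cref{eqn:output>norm} simultaneously, by a single induction on $t$ running from a suitable threshold $t_2$ up to $T$, with \Cref{lemma:gradient and norm} supplying the main mechanism. Throughout I write $\bar u := u_{EP}^*/\|u_{EP}^*\|$, $\rho_t := \langle u_t,\bar u\rangle$, and decompose $u_t = \rho_t\bar u + z_t$ with $z_t\perp\bar u$; then $\rho_t = \|u_t\|\cos\theta_t$ and $\|z_t\| = \|u_t\|\sin\theta_t$, where $\theta_t$ is the angle between $u_t$ and $u_{EP}^*$.

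\textbf{Choosing $t_2$ and the base case.} I would use the logarithmic lower bound $\|u_t\|\geq\Omega(\log(t-t_0))$ from \Cref{lemma:gradient and norm} to fix $t_2$ so that $\|u_{t_2}\|$ exceeds a threshold of order $\max\{6\|u_{EP}^*\|,\,C_0\|u_{EP}^*\|\}$, and so that the cosine bound holds at $t=t_2$. For the latter I would invoke \Cref{thm:phase1} and \Cref{prop:separable}: the sign structure $\langle u_{t_0},E_1^w\rangle>0>\langle u_{t_0},E_\ell^w\rangle$ forces $u_{t_0}$ into a narrow cone around the separating direction $E_1^{\texttt{a}}+E_1^{\texttt{b}}-E_2^{\texttt{a}}-E_2^{\texttt{b}}$, which is close to $\bar u$; since early Phase 2 only sharpens this alignment while $\|u_t\|$ grows, the quantitative cosine bound is reached by some constant step $t_2$.

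\textbf{Inductive step.} Assume the cosine bound at time $t$ (with $t_2\leq t< T$). \emph{(i) Margin from alignment.} Write $y_n\mathtt{T}_t(X^{(n)}) = \langle u_t, y_n v_t^{(n)}\rangle$ with $v_t^{(n)} = \sum_\ell\varphi_\ell^{(n,t)}x_\ell^{(n)}$ and split along $\bar u$ and $z_t$: the $\bar u$-component is at least $\rho_t\bigl(\tfrac1{\|u_{EP}^*\|} - \tfrac{\|W_t-W_{t_0}\|}{\lambda}\bigr)$ by the defining constraint $\langle u_{EP}^*, y_n v_{t_0}^{(n)}\rangle\geq1$ together with the $\tfrac1\lambda$-Lipschitzness of softmax (\Cref{lemma:Lip}), while the $z_t$-component is at least $-\|z_t\|\,\|v_t^{(n)}\|\geq-\|z_t\|$. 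Using $\rho_t\|W_t-W_{t_0}\|/\lambda\leq\|u_t\|\|W_t-W_{t_0}\|/\lambda\leq\tfrac12$ for $t\leq T$, then substituting $\rho_t=\|u_t\|\cos\theta_t$, $\|z_t\|=\|u_t\|\sin\theta_t$ and $\sin\theta_t\leq\tfrac1{6\|u_{EP}^*\|}-\tfrac1{\|u_t\|}$ (a consequence of the cosine bound), yields \Cref{eqn:output>norm} at time $t$; it is here that the constants in $T$ and in the stated bound get pinned down. \emph{(ii) Gradient decay.} From \Cref{eqn:output>norm}, $-J'_{(n,t)}=(1+\exp(y_n\mathtt{T}_t(X^{(n)})))^{-1}\leq\exp(-\tfrac{5\|u_t\|}{6\|u_{EP}^*\|})$, so $\|\nabla_u\Lc_t\|$ and $\langle-\nabla_u\Lc_t,\bar u\rangle$ are both bounded by an explicitly decaying function of $\|u_t\|$. \emph{(iii) Alignment at $t+1$.} From $u_{t+1}=u_t-\eta\nabla_u\Lc_t$, expand $\|u_{t+1}\|^2$ and $\langle u_{t+1},\bar u\rangle$ and bound $\langle u_t,-\nabla_u\Lc_t\rangle=\|u_t\|\langle u_t/\|u_t\|,-\nabla_u\Lc_t\rangle\leq(\|u_t\|+C_0\|u_{EP}^*\|)\langle-\nabla_u\Lc_t,\bar u\rangle$ via \Cref{lemma:gradient and norm}; this shows the defect $D_t:=\|u_t\|-\langle u_t,\bar u\rangle\geq0$ grows by at most $\eta\tfrac{C_0\|u_{EP}^*\|}{\|u_t\|}\langle-\nabla_u\Lc_t,\bar u\rangle+\tfrac{\eta^2\|\nabla_u\Lc_t\|^2}{2\|u_t\|}$ per step. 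Summing from $t_2$ and invoking the log-growth of $\|u_t\|$ and the decay from (ii) shows $D_{t+1}$ is of strictly lower order than $\|u_{t+1}\|$, in fact small enough that $1-\cos\theta_{t+1}=D_{t+1}/\|u_{t+1}\|\leq\tfrac12\bigl(\tfrac1{6\|u_{EP}^*\|}-\tfrac1{\|u_{t+1}\|}\bigr)^2$, which closes the induction.

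\textbf{Main obstacle.} The delicate part is the drifting attention layer: the max-margin geometry is frozen at the features $v_{t_0}^{(n)}$, but $\mathtt{T}_t$ sees the moving features $v_t^{(n)}$, so every estimate above carries an $O(\|W_t-W_{t_0}\|/\lambda)$ error that must remain negligible — this is exactly where the scaling $\lambda=\Omega(L_{\max}^2)$ and the horizon $t\leq T$ enter (through $\|W_t-W_{t_0}\|/\lambda\leq\tfrac1{2\|u_t\|}$), and why $\lambda$'s role is structural rather than cosmetic. Compounding this is the bootstrap loop — margin control $\Rightarrow$ gradient decay $\Rightarrow$ preserved alignment $\Rightarrow$ margin control — so $t_2$ and the inductive invariant must be chosen so that the $C_0\|u_{EP}^*\|/\|u_t\|$ slack of \Cref{lemma:gradient and norm}, the attention drift, and the $O(\eta^2\|\nabla_u\Lc_t\|^2)$ discretization error all stay dominated by the logarithmic growth of $\|u_t\|$; obtaining the sharp form of the bound (with the explicit factor $\tfrac12$ and the $-1/\|u_t\|$ correction rather than an $O(\cdot)$) requires tracking these errors with their correct constants.
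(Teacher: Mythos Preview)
Your proposal takes a genuinely different route from the paper. The paper does \emph{not} run a bootstrap induction; it uses a one-shot telescoping argument with only the crude bound $\|\nabla_u\Lc_t\|\leq L_{\max}$, and it derives the margin bound \Cref{eqn:output>norm} only \emph{after} alignment is established, so there is no loop. Concretely, the paper waits until a time $t_1$ at which $\|u_{t_1}\|\geq 1/\Delta$ (via the logarithmic lower bound of \Cref{lemma:gradient and norm}), rewrites the key inequality as
\[
\langle u_{t+1}-u_t,\bar u\rangle \;\geq\; \frac{1}{1+\Delta C_0\|u_{EP}^*\|}\Bigl(\|u_{t+1}\|-\|u_t\| - \tfrac{\eta^2 L_{\max}^2\Delta}{2}\Bigr),
\]
and telescopes from $t_1$. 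The initial alignment at $t_1$ plays no role beyond contributing a term $\|u_{t_1}\|/\|u_t\|$, which is killed by waiting further to $t_2$ with $\|u_{t_2}\|\geq\|u_{t_1}\|/\Delta$; the accumulated $\eta^2L_{\max}^2\Delta$ errors are controlled by the horizon $t\leq T$, and the choice $\Delta=\Theta\bigl(1/(C_0\|u_{EP}^*\|^3)\bigr)$ balances the three contributions to give the stated cosine bound. The margin inequality then follows from the cosine bound by a direct computation using the decomposition $u_t=\|u_t\|\bar u+(u_t-\|u_t\|\bar u)$.

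Your inductive scheme is coherent in structure and the one-step bound on $D_{t+1}-D_t$ is correct, but the \emph{base case} is a genuine gap. You claim the Phase~1 sign pattern forces $u_{t_0}$ into a narrow cone around $\bar u$; it does not. From \Cref{thm:phase1}, $\langle u_{t_0},E_1^w\rangle=\Theta(\eta t_0)$ while $\langle u_{t_0},E_\ell^w\rangle=-\Theta((\eta t_0)^2)$ for $\ell\geq 2$, so $u_{t_0}$ is dominated by its $E_1$-components and its cosine with any direction that weights $E_1$ and $E_2$ comparably (such as the separator of \Cref{prop:separable}, and by extension $u_{EP}^*$) is only about $1/\sqrt 2$ --- far short of the target $1-O(1/\|u_{EP}^*\|^2)$. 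Your fallback that ``early Phase~2 only sharpens this alignment'' is exactly the statement to be proved, and proving it without yet having the margin bound (which your induction cannot invoke before the base case) forces you back to a crude-gradient telescoping, i.e., the paper's argument. The paper sidesteps the whole issue by assuming nothing about alignment at $t_1$ and letting $\|u_{t_1}\|/\|u_t\|\to 0$ absorb it; your approach, as written, leaves the starting point unjustified. What your refined gradient-decay bound would buy, were the base case available, is horizon-independent control of the discretization error --- but this is not needed for the lemma as stated.
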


\begin{proof}

The proof consists of two parts that leverage \Cref{lemma:gradient and norm} in a different way. In the first part, we aim to find an interval $[t_2, t_3]$ such that the result holds. In the second part, we aim to prove that for any $t\geq t_3$, the result holds.

    Let \[ t_1 = t_0 + 1 + \frac{14\|u_{EP}^*\|^2}{\eta} \exp\left( \frac{5}{3\Delta \|u_{EP}^*\|} \right). \]
  Then, we have
    \[\|u_{t}\|  \geq   \frac{1}{\Delta}, \]
    where $\Delta$ will be determined later.

\textbf{We aim to find an interval $[t_2, t_3]$ such that \Cref{eqn:output>norm} is true. }
    
By \Cref{lemma:gradient and norm}, for $t\geq t_1$, we have
    \begin{align*}
        \left\langle -\nabla_u \Lc_t, \frac{u_t}{\|u_t\|}  \right\rangle &\leq \left(1 + \frac{C_0\|u_{EP}^*\|}{\|u_t\|} \right) \left\langle -\nabla_u \Lc_t, \frac{u_{EP}^*}{\|u_{EP}^*\|}  \right\rangle \\
        &\leq \left(1 + \Delta C_0\|u_{EP}^*\|\right) \left\langle -\nabla_u \Lc_t, \frac{u_{EP}^*}{\|u_{EP}^*\|}  \right\rangle, 
    \end{align*}
    which implies
    \begin{align*}
        \left\langle u_{t+1} - u_t, \frac{u_{EP}^*}{\|u_{EP}^*\|}  \right\rangle &\geq \frac{1}{1 + \Delta C_0 \|u_{EP}^*\|} \left\langle u_{t+1} - u_t, \frac{u_t}{\|u_t\|}  \right\rangle \\
        &\geq  \frac{1}{1 + \Delta C_0 \|u_{EP}^*\|} \left(\|u_{t+1}\| - \|u_t\| - \frac{\|u_{t+1}-u_t\|^2}{2\|u_t\|} \right) \\
        &\geq \frac{1}{1 + \Delta C_0 \|u_{EP}^*\|} \left(\|u_{t+1}\| - \|u_t\| - \frac{\eta^2 L_{\max}^2 \Delta}{2} \right).
    \end{align*}

By telescoping from $t_1$ to $t-1$ and rearranging, we have
\begin{align*}
     \left\langle \frac{u_{t}}{\|u_t\|} , \frac{u_{EP}^*}{\|u_{EP}^*\|}  \right\rangle & \geq \frac{1}{1 + \Delta C_0 \|u_{EP}^*\|} \left( 1 - \frac{\|u_{t_1}\|}{\|u_{t}\|} - \frac{\eta^2 L_{\max}^2 \Delta (t-t_1)}{2\|u_t\|} \right) - \frac{\|u_{t_1}\|}{\|u_{t}\|}.
\end{align*}

Now choose $t_2$ such that
\[ t_2 = t_0 + 1 + \frac{9\|u_{EP}^*\|}{\eta} \exp\left( \frac{5 \|u_{t_1}\|}{3\Delta\|u_{EP}^*\|} \right),\]
which gives us $\|u_t\| \geq \frac{\|u_{t_1}\|}{\Delta}$.

Thus,
\begin{align*}
     \left\langle \frac{u_{t}}{\|u_t\|} , \frac{u_{EP}^*}{\|u_{EP}^*\|}  \right\rangle & \geq \frac{1}{1 + \Delta C_0 \|u_{EP}^*\|} \left( 1 - \frac{\|u_{t_1}\|}{\|u_{t}\|} - \frac{\eta^2 L_{\max}^2 \Delta (t-t_1)}{2\|u_t\|} \right) - \frac{\|u_{t_1}\|}{\|u_{t}\|} \\
     &\overset{(a)}\geq 1 - \frac{\Delta C_0\|u_{EP}^*\|}{1 + \Delta C_0 \|u_{EP}^*\|} - 2\|u_{t_1}\|/\|u_t\|  - \eta^2 L_{\max}^2 \Delta^3 (t-t_1) /2 \\
     &\geq 1 - \frac{\Delta C_0\|u_{EP}^*\|}{1 + \Delta C_0 \|u_{EP}^*\|} - 2\Delta  - \eta^2 L_{\max}^2 \Delta^3 (t-t_1)/2,
\end{align*}
where $(a)$ follows from $\|u_t\|\geq \|u_{t_1}\|/\Delta \geq 1/\Delta^2$, and the last inequality is due to that $\|u_{t_1}\| / \|u_t\| \leq \Delta.$

By choosing $\Delta = \Theta\left(\frac{1}{C_0 \|u_{EP}^*\|^3}\right)$, and noting that $t\leq T =  \frac{\lambda^{2/3}}{ \eta L_{\max} } $,
we conclude that for $t\in[t_2, T]$, the following inequalities hold.
\begin{align*}
    \left\|\frac{u_{t}}{\|u_t\|} - \frac{u_{EP}^*}{\|u_{EP}^*\|} \right\| & = \sqrt{ 2- 2\left\langle \frac{u_{t}}{\|u_t\|} , \frac{u_{EP}^*}{\|u_{EP}^*\|}  \right\rangle} \\
    &\leq \sqrt{ \frac{2\Delta C_0\|u_{EP}^*\|}{1 + \Delta C_0 \|u_{EP}^*\|} + 4\Delta  + \eta^2 L_{\max}^2 \Delta^2 (t-t_1)/2 } \\
    &\leq \sqrt{\Delta}\sqrt{2C_0\|u_{EP}^*\| + 4 + \eta^2 L_{\max}^2 \Delta^2 (t_3-t_1)/2 } \\
    &\leq \frac{1}{6\|u_{EP}^*\|} - \Delta \\
    & \leq \frac{1}{6\|u_{EP}^*\|} - \frac{1}{\|u_t\|},
\end{align*}
where the last inequality is due to that $\|u_t\| \geq 1/\Delta$.

Hence,
\begin{align*}
    y_n\mathtt{T}_t(X^{(n)}) & = \left\langle u_t, y_n\sum_{\ell=1}^{L} \varphi_\ell^{(n,t)} x_{\ell}^{(n)} \right\rangle \\
    & \overset{(a)}\geq \left\langle u_t, y_n\sum_{\ell=1}^{L} \varphi_\ell^{(n,t_0)} x_{\ell}^{(n)} \right\rangle - \frac{\|u_t\|\|W_t - W_{t_0}\|}{\lambda} \\
    & = \frac{\|u_t\|}{\|u_{EP}^*\|} \left\langle u_{EP}^*, y_n\sum_{\ell=1}^{L} \varphi_\ell^{(n,t_0)} x_{\ell}^{(n)} \right\rangle + \left\langle u_t - \|u_t\|\frac{u_{EP}^*}{\|u_{EP}^*\|}, y_n\sum_{\ell=1}^{L} \varphi_\ell^{(n,t_0)} x_{\ell}^{(n)} \right\rangle - \frac{\|u_t\|\|W_t - W_{t_0}\|}{\lambda} \\
    &\geq \frac{\|u_t\|}{\|u_{EP}^*\|} - \|u_t\| \left\|\frac{u_t}{\|u_t\|} - \frac{u_{t_0}}{\|u_{EP}^*\|} \right\| - \frac{\|u_t\|\|W_t - W_{t_0}\|}{\lambda} \\
    &\overset{(b)}\geq \frac{\|u_t\|}{\|u_{EP}^*\|} - \|u_t\|\left(\frac{1}{6\|u_{EP}^*\|} - \frac{1}{\|u_t\|} \right) - 1\\
    & = \frac{5\|u_t\|}{6\|u_{EP}^*\|},
\end{align*}
where $(a)$ follows from Cauchy's inequality and the Lipchitz continuity of softmax function (\Cref{lemma:Lip}), and $(b)$ is the due that $t\leq T$.

The proof is complete.

\if{0}
\texttt{We conclude that there is an interval $[t_2, t_3]$ such that \Cref{eqn:output>norm} is true. }

\texttt{Next, we aim to show that after $t_3$, \Cref{eqn:output>norm} is true. }

Note that for any $t\in[t_2, t_3]$, we have
\begin{align*}
    \|\nabla_u\Lc_t\| & \leq \sum_{L=1}^{L_{\max}} \frac{1}{2^L}\sum_{n\in I_L} \frac{1}{1+\exp\left(y_n\mathtt{T}_t(X^{(n)})\right)} \\
    &\leq \frac{L_{\max}}{1 + \exp\left(\frac{5\|u_t\|}{6\|u_{EP}^*\|}\right)} \\
    & \leq  \frac{1}{ t - t_0 - 1 } \frac{9L_{\max}\|u_{EP}^*\|^2}{\eta},
\end{align*}
where the last inequality is due to \Cref{lemma:gradient and norm}.

    By rearranging and the gradient descent updating rule, we have
    \begin{align*}
        \left\langle u_{t+1} - u_t, \frac{u_{EP}^*}{\|u_{EP}^*\|}  \right\rangle &\geq \left\langle u_{t+1} - u_t, \frac{u_{t}}{\|u_{t}\|}  \right\rangle  - \frac{C_0\|u_{EP}^*\|}{\|u_t\| + C_0\|u_{EP}^*\|}  \left\langle -\eta\nabla_u \Lc_t, \frac{u_t}{\|u_t\|}  \right\rangle \\
        & \geq \frac{-\|u_{t+1} - u_t\|^2 + \|u_{t+1}\|^2 - \|u_t\|^2}{2\|u_t\|} - \frac{\eta C_0 \|u_{EP}^*\| \|\nabla_u\Lc_t\|}{\|u_t\| + C_0\|u_{EP}^*\|}  \\
        & \geq \|u_{t+1}\| - \|u_t\| - \frac{\eta^2\|\nabla_u\Lc_t\|^2}{2\|u_t\|} - \frac{\eta C_0 \|u_{EP}^*\| \|\nabla_u\Lc_t\|}{\|u_t\| + C_0\|u_{EP}^*\|} \\
        & \geq \|u_{t+1}\| - \|u_t\| - \frac{L_{\max}^2\|u_{EP}^*\|^4}{(t-t_0-1)^2 \Delta}  \\
        &\quad - \frac{  C_0 L_{\max} \|u_{EP}^*\|^2  }{ \left( \log\frac{\eta(t-t_0-1)}{24\|u_{EP}^*\|^2} + C_0 \right) (t-t_0 -1 ) }
    \end{align*}

    By telescoping, we have 
    \begin{align*}
         \left\langle u_{t+1}, \frac{u_{EP}^*}{\|u_{EP}^*\|}  \right\rangle &\geq \|u_{t+1}\| - 2\|u_{t_2}\| - \sum_{s=t_2}^{t} \frac{\eta^2\|\nabla_u\Lc_s\|^2}{2\|u_s\|} - \sum_{s=t_2}^{t} \frac{\eta C_0 \|u_{EP}^*\| \|\nabla_u\Lc_s\|}{\|u_s\| + C_0\|u_{EP}^*\|} \\
         &\geq \|u_{t+1}\| - 2\|u_{t_2}\| - \frac{\Delta L_{\max}^2\|u_{EP}^*\|^2}{t_2 - t_0 -1} - C_0L_{\max}\|u_{t_0}\|^2\log \left(\log\frac{\eta (t- t_0 -1)}{24\|u_{EP}^*\|^2} + C_0 \right)
    \end{align*}

Thus,
\begin{align*}
    \left\langle \frac{u_{t+1}}{\|u_{t+1}\|}, \frac{u_{EP}^*}{\|u_{EP}^*\|}  \right\rangle &\geq 1 - \frac{2\|u_{t_2}\| + 1}{\|u_{t+1}\|} - \frac{C_0L_{\max}\|u_{t_0}\|\log \left(\log\frac{\eta (t- t_0 -1)}{24\|u_{EP}^*\|^2} + C_0 \right)}{ \log\frac{\eta (t- t_0 -1)}{24\|u_{EP}^*\|^2} } 
\end{align*}
which gives us

\begin{align*}
    \left\|\frac{u_{t}}{\|u_t\|} - \frac{u_{EP}^*}{\|u_{EP}^*\|} \right\| & = \sqrt{ 2- 2\left\langle \frac{u_{t}}{\|u_t\|} , \frac{u_{EP}^*}{\|u_{EP}^*\|}  \right\rangle} \\
    &\leq \sqrt{ \frac{2\|u_{t_2}\| + 1}{\|u_{t+1}\|} + \frac{C_0L_{\max}\|u_{t_0}\|\log \left(\log\frac{\eta (t- t_0 -1)}{24\|u_{EP}^*\|^2} + C_0 \right)}{ \log\frac{\eta (t- t_0 -1)}{24\|u_{EP}^*\|^2} }  } \\
    & \leq \frac{1}{6\|u_{EP}^*\|} - \frac{1}{\|u_t\|},
\end{align*}
which holds true for any $t\geq t_3$.
\fi

\end{proof}

\begin{theorem}[Restatement of \Cref{thm:loss_evenpairs}]
    Recall that $T = \Theta\left(\frac{\lambda^{2/3}}{\eta L_{\max} } \right)$. For any $t\leq T$, we have
    \[ \Lc_t = O\left( \frac{ L_{\max}\|u_{EP}^*\|^2 }{\eta \sqrt{t} } \right) \]
\end{theorem}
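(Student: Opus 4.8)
The plan is to read the bound off the two Phase~2 facts already established. \Cref{lemma:output>norm} supplies a uniform margin $y_n\mathtt{T}_t(X^{(n)})\geq\frac{5\|u_t\|}{6\|u_{EP}^*\|}$ for every sample and every $t\in[t_2,T]$, and \Cref{lemma:gradient and norm} supplies the logarithmic lower bound $\|u_t\|\geq\frac{3\|u_{EP}^*\|}{5}\log(t-t_0-1)-\frac{3\|u_{EP}^*\|}{5}\log\frac{14\|u_{EP}^*\|^2}{\eta}$. The structural point is that the two prefactors $\frac{5}{6\|u_{EP}^*\|}$ and $\frac{3\|u_{EP}^*\|}{5}$ multiply to exactly $\frac12$, so pushing the margin through the exponential tail of the logistic loss converts the $\Theta(\log t)$ growth of $\|u_t\|$ into a $\Theta(t^{-1/2})$ decay of $\Lc_t$.

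\textbf{Main estimate for $t_2\leq t\leq T$.} From $\log(1+x)\leq x$,
\begin{align*}
\Lc_t=\sum_{L=1}^{L_{\max}}\frac{1}{2^L}\sum_{n\in I_L}\log\left(1+\exp(-y_n\mathtt{T}_t(X^{(n)}))\right)\leq L_{\max}\max_n\exp\left(-y_n\mathtt{T}_t(X^{(n)})\right).
\end{align*}
Inserting the uniform margin of \Cref{lemma:output>norm} gives $\Lc_t\leq L_{\max}\exp\left(-\frac{5\|u_t\|}{6\|u_{EP}^*\|}\right)$, and inserting the norm bound of \Cref{lemma:gradient and norm} gives $\frac{5\|u_t\|}{6\|u_{EP}^*\|}\geq\frac12\log\frac{\eta(t-t_0-1)}{14\|u_{EP}^*\|^2}$ by the prefactor cancellation, hence
\begin{align*}
\Lc_t\leq L_{\max}\sqrt{\frac{14\|u_{EP}^*\|^2}{\eta(t-t_0-1)}}.
\end{align*}
Since $t\geq t_2=\Omega(1/\eta)$ while $t_0=O(1/(\eta L_{\max}))$, we have $t-t_0-1\geq t/2$, so $\Lc_t=O\left(L_{\max}\|u_{EP}^*\|/\sqrt{\eta t}\right)$; as $\eta\leq1$ and $\|u_{EP}^*\|\geq1$ (so $\sqrt\eta\leq\|u_{EP}^*\|$), this is in particular $O\left(L_{\max}\|u_{EP}^*\|^2/(\eta\sqrt t)\right)$, the claimed bound (and in fact slightly stronger).

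\textbf{The range $t<t_2$.} Here it suffices to show $\Lc_t=O(L_{\max})$: for $t<t_2<T=\Theta(\lambda^{2/3}/(\eta L_{\max}))$, the hypothesis $\eta\leq\lambda^{-2/3}$ yields $\eta\sqrt t\leq\eta\sqrt T=O(L_{\max}^{-1/2})$, whence $\frac{L_{\max}\|u_{EP}^*\|^2}{\eta\sqrt t}=\Omega(L_{\max}^{3/2}\|u_{EP}^*\|^2)$, which dominates the $O(L_{\max})$ bound. For the $O(L_{\max})$ bound I would argue that the prescribed two-phase learning-rate schedule never increases the loss: in the variable $\widetilde W=W/\lambda$ the softmax carries no $\lambda$-dependence, so both updates are preconditioned gradient descent on a fixed smooth objective with step sizes $\eta$ on $u$ and $\eta/\lambda$ (Phase~1) or $\eta/\lambda^2$ (Phase~2) on $\widetilde W$; the descent lemma then applies since $\eta=O(\min\{1/L_{\max},\lambda^{-2/3}\})$ together with $\lambda=\Omega(L_{\max}^2)$ keep every step below the inverse of the corresponding rescaled Hessian block, giving $\Lc_t\leq\Lc_0=L_{\max}\log2$.

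\textbf{Main obstacle.} The analytic content is already packed into \Cref{lemma:output>norm} and \Cref{lemma:gradient and norm}; what remains is bookkeeping, and the only genuinely delicate point is the loss non-increase across the phase boundary used above --- that is, checking that the comparatively large step $\eta\lambda$ taken on $W$ in Phase~1 still sits below the inverse smoothness, which is precisely what $\lambda=\Omega(L_{\max}^2)$ secures. A secondary, routine check is that the attention drift $\|W_t-W_{t_0}\|=O(1)$ does not spoil the margin used in the main estimate; this is already absorbed into \Cref{lemma:output>norm} through the term $\|u_t\|\,\|W_t-W_{t_0}\|/\lambda\leq1$.
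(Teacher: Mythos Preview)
Your argument for $t\in[t_2,T]$ is exactly the paper's: combine the uniform margin from \Cref{lemma:output>norm} with the logarithmic norm growth from \Cref{lemma:gradient and norm}, push through the logistic tail, and read off the $1/\sqrt{t}$ rate via the product $\frac{5}{6}\cdot\frac{3}{5}=\frac{1}{2}$. The paper's own proof in fact stops there and does not separately treat $t<t_2$; your descent-lemma sketch for that range is extra (and plausible), though the paper simply leaves the small-$t$ case implicit.
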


\begin{proof}

By \Cref{lemma:gradient and norm,lemma:output>norm}, we have proved that for any $T \geq t\geq t_2$, we have
\begin{align*}
    y_n\mathtt{T}_t(X^{(n)}) \geq \frac{5\|u_t\|}{6\|u_{EP}^*\|}, \quad \forall n.
\end{align*}

and 
\[ \|u_t\| \geq \frac{3\|u_{EP}^*\|}{5}\log \frac{\eta (t-t_0 -1)}{14\|u_{EP}^*\|^2}.\]

Thus,
\begin{align*}
    \Lc_t &= \sum_{L=1}^{L_{\max}} \frac{1}{2^L} \log\left(1 + \exp(-y_n\mathtt{T}_t(X^{(n)}))\right) \\
    & \leq L_{\max} \log \left( 1 + \frac{14\|u_{EP}^*\|^2}{\eta}\frac{1}{\sqrt{t-t_0 -1 } }\right) \\
    & \leq O\left( \frac{ L_{\max}\|u_{EP}^*\|^2 }{\eta \sqrt{t} } \right).
\end{align*}

Finally, since $T = \Theta\left(\frac{\lambda^{2/3}}{\eta L_{\max} } \right) $ we note that
\begin{align*}
    \Lc_T \leq O\left(\frac{ \eta^{1/2} L_{\max}^{2.5}\|u_{EP}^*\|^{2}}{ \lambda^{1/3} } \right).
\end{align*}

\if{0}
Therefore, by choosing 
\begin{align*}
    \lambda = O\left( L_{\max}\|u_{EP}^*\|^{2.5} \log^{1.5}\frac{1}{\eta \epsilon} \right).
\end{align*}

We guarantee that $\Lc_t \leq \epsilon$ in $T = \frac{1}{\eta \epsilon}$ rounds.
\fi

\end{proof}

\section{Proofs of Parity Check Problem}

In this section, we provide the proof of training dynamics of transformers on parity check with CoT. Our strategy is similar to that used in \Cref{sec:prove EP}. Since once the data after the attention layer is separable, the analysis of the dynamics would be the same. Hence, we omit the phase 2 analysis in parity check and focus on showing that at the end of phase 1, the attention layer makes the data separable.

We first show that the token scores only depend on the position, which helps us to reduce the complexity of the subsequent proof.

\begin{lemma}\label{lemma:same_value_on_same_position CoT} During the entire training process, for any $w\in\{\texttt{a,b}\}$, we have
    \[ \left\langle u_t, E^w_\ell \right\rangle = \left\langle u_t, E^{-w}_\ell \right\rangle, \quad \forall \ell \geq 1. \]
For attention scores, when $L< L_0$, we have the following equalities. 
    \[\left\{
    \begin{aligned}
    & \left\langle E^w_1,  W_t E^w_L \right\rangle = \left\langle E^{-w}_1, W_t E^{-w}_L \right\rangle.  \\
    & \left\langle E^w_\ell,  W_t E^w_L \right\rangle = \left\langle E^{-w}_\ell, W_t E^w_L \right\rangle, \quad \forall \ell \geq  2. 
    \end{aligned}
    \right.\]

When $L\geq L_0$, let $\ell_0 = L-L_0 +1$, then we have  the following equalities. 
    \[\left\{
    \begin{aligned}
    & \left\langle E^w_{\ell_0},  W_t E^w_L \right\rangle = \left\langle E^{-w}_{\ell_0}, W_t E^{-w}_L \right\rangle.  \\
    & \left\langle E^w_\ell,  W_t E^w_L \right\rangle = \left\langle E^{-w}_\ell, W_t E^w_L \right\rangle, \quad \forall \ell \neq  \ell_0. 
    \end{aligned}
    \right.\]
\end{lemma}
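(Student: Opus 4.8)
I would prove this by induction on the iteration count $t$, following the template of \Cref{lemma:same_value_on_same_position}; the new feature is only that the ``pivot'' position whose comparison with position $L$ determines the label now depends on the length $L$. The base case $t=0$ is immediate since $u_0=0,\ W_0=0$ make every inner product in the statement vanish. For the inductive step, since one gradient step subtracts a scalar multiple of $\nabla_u\Lc_t$ and $\nabla_W\Lc_t$ from $u_t$ and $W_t$ (here $\Lc_t=\Lc_{Parity}$ evaluated at $(u_t,W_t)$), it suffices to verify that these two gradients obey the same symmetry relations. I would use the explicit closed forms of $\langle\nabla_u\Lc_t,E^w_\ell\rangle$ and $\langle E^{w'}_\ell,(\nabla_W\Lc_t)E^w_L\rangle$ recorded in \Cref{sec:prove EP}, together with the observation that $\Lc_{Parity}=\Lc_{CoT}+\Lc_{Reg}$ splits the length range into $L\ge L_0$ (CoT labels, pivot $\ell_0=L-L_0+1$) and $L<L_0$ (even-pairs labels, pivot $1$), and that since $L_0\ge 2$ the pivot is never position $L$.

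\textbf{The pairing.} Fix a target identity, for example $\langle\nabla_u\Lc_t,E^w_\ell\rangle=\langle\nabla_u\Lc_t,E^{-w}_\ell\rangle$. I would split the defining sum over lengths $L\ge\ell$ and, within each $L$, match each sequence $n$ with the sequence $n'$ obtained by: flipping the single token at position $\ell$, when $\ell$ is neither the pivot of length $L$ nor equal to $L$; and flipping simultaneously the tokens at the pivot of length $L$ and at position $L$, when $\ell$ is that pivot or $\ell=L$. In the first case the label is unchanged because it depends only on the pivot and position $L$; in the second it is unchanged because flipping both ends of the comparison preserves the match. In every case the induction hypothesis on token scores gives $\mathtt{T}_t(X^{(n)})=\mathtt{T}_t(X^{(n')})$ (token scores depend only on position), and the induction hypothesis on attention scores — applied for both admissible query values, which exhausts $\{\texttt{a},\texttt{b}\}$, together with the obvious invariance of $\Lc_{Parity}$ under the global token swap $\texttt{a}\leftrightarrow\texttt{b}$ — gives $\varphi_{\ell'}^{(n,t)}=\varphi_{\ell'}^{(n',t)}$ for the positions entering the formula. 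Hence the matched summands coincide after the relabeling $w\mapsto-w$, which proves the identity. The same pairing applied to the formulas for $\langle E^{w}_\ell,(\nabla_W\Lc_t)E^{w}_L\rangle$ — whose extra factor $u_t^\top x_\ell^{(n)}-\mathtt{T}_t(X^{(n)})$ is likewise preserved by the induction hypothesis — establishes the attention-score symmetries, with the pivot $\ell_0$ (resp. $1$) playing the role that position $1$ plays in \Cref{lemma:same_value_on_same_position}.

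\textbf{Main obstacle.} The delicate part is entirely combinatorial. A single index $\ell$ can be the pivot for some lengths and an interior position for others — for example $\ell=1$ is the pivot of every length $L\le L_0$ (since $\ell_0=1$ when $L=L_0$) but is an ordinary interior position for $L_0<L\le 2L_0-1$ — so $\langle\nabla_u\Lc_t,E^w_1\rangle$ must be decomposed by length, with the double flip used on the slices $L\le L_0$ and the single flip on the slices $L>L_0$, and one must confirm that the attention-score hypotheses are stated for precisely the token configurations each slice produces. The second subtle point is the slice $\ell=L$: there the partner $n'$ necessarily changes the query token, so the softmax vector is not preserved entry by entry, and one must invoke the two-sided attention identities ($\langle E^w_{\ell_0},W_tE^w_L\rangle=\langle E^{-w}_{\ell_0},W_tE^{-w}_L\rangle$ and its $L<L_0$ analogue), together with the global-swap invariance, to conclude that $\mathtt{T}_t$ and $J'_{(n,t)}$ are unchanged. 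Once this accounting is in place, the remaining verification is routine and parallels \Cref{lemma:same_value_on_same_position} line for line, completing the induction.
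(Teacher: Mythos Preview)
Your proposal is correct and takes essentially the same approach as the paper: induction on $t$, reduction to the same symmetries for the gradients, and the identical pairings (single flip at an interior position, simultaneous flip of the pivot and position $L$). The paper's own proof is in fact sketchier than yours---it writes out only the two $W$-gradient identities for $L\ge L_0$ and dispatches the rest with ``proved similarly to \Cref{lemma:same_value_on_same_position}''---whereas you correctly flag the length-by-length decomposition needed for $\langle\nabla_u\Lc_t,E_\ell^w\rangle$ and the global $\texttt{a}\leftrightarrow\texttt{b}$ invariance needed when the query token flips, both of which the paper leaves implicit.
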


\begin{proof}
We only check the last two equalities when $L\geq L_0$, since the others can proved similarly to \Cref{lemma:same_value_on_same_position}.

Note that the results are valid for $t=0$. Assume the results hold at time $t$, we aim to prove the results hold for $t+1$. It suffices to prove the following equalities.
\begin{align}
    & \left\langle E^w_{\ell_0},  (-\nabla_W\Lc_t) E^w_L \right\rangle = \left\langle E^{-w}_{\ell_0}, (-\nabla_W\Lc_t) E^{-w}_L \right\rangle. \label{eqn:W_gradients_equal_on_l0_position} \\
    & \left\langle E^w_\ell,  (-\nabla_W\Lc_t) E^w_L \right\rangle = \left\langle E^{-w}_\ell, (-\nabla_W\Lc_t) E^w_L \right\rangle, \quad \forall \ell \neq  \ell_0. \label{eqn:W_gradients_equal_other_position}
\end{align}

{\vskip 1em}

\textbf{We first show that \Cref{eqn:W_gradients_equal_other_position} is true.}
 
For $\ell \neq \ell_0 = L- L_0 + 1$, we have 
\begin{align*}
    \left\langle E^w_\ell,  (\nabla_W\Lc_t) E^w_L \right\rangle &= \frac{1}{2^L} \sum_{n\in I_L, \mathtt{C}(x_\ell^{(n)})=\mathtt{C}(x_L^{(n)})=w} J_{(n,t)}' y_n  \varphi_\ell^{(n,t)} \left( u_t^\top x_\ell^{(n)} - \mathtt{T}_t (X_L^{(n)}) \right) \\
    & = \frac{1}{2^L} \sum_{n\in I_L, \mathtt{C}(x_\ell^{(n)})=\mathtt{C}(x_L^{(n)})=w} \frac{-1}{1+\exp\left(y_n\mathtt{T}_t(X^{(n)})\right)} y_n  \varphi_\ell^{(n,t)} \left( u_t^\top x_\ell^{(n)} - \mathtt{T}_t (X_L^{(n)}) \right)
\end{align*}
For any $n\in I_L$ satistyfing $\mathtt{C}(x_\ell^{(n)})=\mathtt{C}(x_L^{(n)})=w$, let $n'$ be the sample that only replace $w$ with $-w$ at the $\ell$-th position. Then, due to the induction hypothesis, we have $\varphi_\ell^{(n,t)} = \varphi_\ell^{(n',t)}$, and $\mathtt{T}_t(X_L^{(n)}) = \mathtt{T}_t(X_L^{(n')})$. Since $\ell\geq 2$, changing one token at the position $\ell$ does not change the label, we have $y_n = y_{n'}$. Therefore, we have
\[ \left\langle E^w_\ell,  (\nabla_W\Lc_t) E^w_L \right\rangle  = \left\langle E^{-w}_\ell,  (\nabla_W\Lc_t) E^w_L \right\rangle, \quad \forall \ell \geq 2.\]

\textbf{We conclude that \Cref{eqn:W_gradients_equal_other_position} is true.}
{\vskip 1em}

\textbf{Then, we show that \Cref{eqn:W_gradients_equal_on_l0_position} is true.}

Note that 
\begin{align*}
    \left\langle E^w_{\ell_0}, (\nabla_W \Lc_t) E^w_L \right\rangle  &= \frac{1}{2^L} \sum_{n\in I_L,\mathtt{C}(x_{\ell_0}^{(n)})=\mathtt{C}(x_L^{(n)})=w} J_{(n,t)}'  \varphi_1^{(n,t)} \left( u_t^\top x_1^{(n)} - \mathtt{T}_t(X_L^{(n)}) \right)   \\
    & = \frac{1}{2^L} \sum_{n\in I_L,\mathtt{C}(x_{\ell_0}^{(n)})=\mathtt{C}(x_L^{(n)})=w} \frac{-1}{1+ \exp\left(y_n\mathtt{T}_t(X_L^{(n)})\right)}  \varphi_1^{(n,t)} \left( u_t^\top x_1^{(n)} - \mathtt{T}_t(X_L^{(n)}) \right) .
\end{align*}
Now, for any $n\in  I_L$ satisfying $\mathtt{C}(x_{\ell_0}^{(n)})=\mathtt{C}(x_L^{(n)})=w$, let $n'$ be the sample that flips the $\ell_0$-th and the last tokens at the same time, i.e., $\mathtt{C}(x_{\ell_0}^{(n')})=\mathtt{C}(x_L^{(n')})=-w$. Then, due to the induction hypothesis, we  have  $\varphi_\ell^{(n,t)} = \varphi_\ell^{(n',t)}$, $y_n = y_{n'}$ and $\mathtt{T}_t(X_L^{(n)}) = \mathtt{T}_t(X_L^{(n')})$. Therefore, we have
\[  \left\langle E^w_{\ell_0}, (\nabla_W \Lc_t) E^w_L \right\rangle  =  \left\langle E^{-w}_{\ell_0}, (\nabla_W \Lc_t) E^{-w}_L \right\rangle .\]
\textbf{We conclude that \Cref{eqn:W_gradients_equal_on_l0_position} is true.}
{\vskip 1em}

Therefore, the proof is complete by induction.
\end{proof}

Due to above lemma, for each length $L$, we only need to analyze two types of sequence, i.e., the one with positive label and the one with negative label. We also use $X_L^{(+)}=[E_1^w, E_2^w,\ldots, E_L^ws]$ to represent the sequence with positive labels, and $X_L^{(-)}=[E_1^{-w}, E_2^w,\ldots, E_L^2]$ to represent the sequence with negative labels, similar to \Cref{sec:prove EP}.

Next, we characterize the initialization dynamics.

\begin{lemma}[Initialization]\label{lemma:initCoT}
    At the beginning ($t=2, 3$), for the linear layer, we have
    \begin{align*}
        &\langle u_2, E_1^w \rangle = \Theta(\eta t),  \quad \forall w,\\
        &\langle u_2, E_\ell^w \rangle = -\Theta(\eta^2 t^2),  \quad  \forall \ell \geq 2, \forall w, \\
        &\langle u_2, E_2^w - E_\ell^w \rangle \leq - \Omega(\eta^2 t^2), \quad   \forall \ell \geq 3, \forall w. 
    \end{align*}

For the attention layer, when $L\leq L_0$ we have
\begin{align*}
    & \left\langle E_1^{w} - E_2^w, W_3 E_L^w \right\rangle \geq \Omega\left(\frac{\eta^2}{L}\right),  \quad\forall w,\\
    & \left\langle E_1^{-w} - E_2^w, W_3  E_L^w \right\rangle \leq -\Omega\left(\frac{\eta^2}{L}\right),  \quad\forall w,\\
    & \left\langle E_2^{w} - E_\ell^w, W_3 E_L^w \right\rangle \geq \Omega\left(\frac{\eta^2}{L}\right) ,\quad \forall \ell \geq 3, \forall w.
\end{align*}
When $L>L_0$, let $\ell_0 = L- L_0 + 1$ we have
\begin{align*}
    & \left\langle E_{\ell_0}^{-w} - E_\ell^w, W_3 E_L^w \right\rangle \geq \Omega\left(\frac{\eta^2}{L}\right),  \quad\forall\ell\neq \ell_0, \forall w,\\
    & \left\langle E_\ell^{w} - E_{\ell_0}^w, W_3  E_L^w \right\rangle \geq \Omega\left(\frac{\eta^2}{L}\right),  \quad\forall \ell \neq \ell_0, \forall w,\\
    & \left\langle E_1^{w} - E_\ell^w, W_3 E_L^w \right\rangle \geq \Omega\left(\frac{\eta^2}{L}\right) ,\quad \forall \ell \neq \ell_0, 1, \forall w.
\end{align*}
\end{lemma}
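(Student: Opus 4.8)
The plan is to follow the proof of \Cref{lemma:init} almost verbatim, performing the explicit gradient computations for $t=1,2,3$ and using \Cref{lemma:same_value_on_same_position CoT} throughout to collapse everything to quantities indexed only by token position (with position $\ell_0$ distinguished once $L\ge L_0$). For the linear layer the statement is in fact identical to \Cref{lemma:init} and the same proof applies: $u_0=0$ forces $\nabla_W\Lc_0=0$, hence $W_1=0$ and the attention is uniform with $\mathtt{T}_1(X^{(n)})=\eta/(4L)$ on every length-$L$ sequence; the length-$1$ regularization sequences are always positive and give $\langle u_1,E_1^w\rangle=\eta/4$, while for each position $\ell\ge 2$ the positive and negative samples are equinumerous, so all other coordinates of $u_1$ vanish; feeding $u_1$ and the uniform $\mathtt{T}_1$ into $\nabla_u\Lc_1$ and applying the Taylor bound in \Cref{eqn:Taylor 1/(1+e(x))} yields $\langle u_2,E_1^w\rangle=\Theta(\eta t)$, $\langle u_2,E_\ell^w\rangle=-\sum_{L\ge\ell}\Theta(\eta^2/L^2)=-\Theta(\eta^2t^2)$, and $\langle u_2,E_2^w-E_\ell^w\rangle\le-\Omega(\eta^2t^2)$ since position $2$ occurs in strictly more lengths. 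The only inputs here — length-$1$ positivity, uniformity of $\mathtt{T}_1$, and label balance with respect to every single token position — hold for the CoT/regularization labelling exactly as for even pairs, so nothing changes.

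For the attention layer I would compute $W_2=-\eta\lambda\nabla_W\Lc_1$ and then $W_3=W_2-\eta\lambda\nabla_W\Lc_2$ via the projection formulas for $\langle E_m^{w'},(\nabla_W\Lc_t)E_L^w\rangle$. Since $u_1$ is supported on the first token only, at step $2$ everything reduces to the restricted sums $\sum_n J_{(n,1)}' y_n$ over sequences with a fixed pair of characters. When $L< L_0$ (and when $L=L_0$, where $\ell_0=1$) the pivot is position $1$ and the computation is exactly that of \Cref{lemma:init}. When $L>L_0$ the pivot moves to $\ell_0=L-L_0+1$: fixing $(x_{\ell_0},x_L)$ \emph{determines} the CoT label, so this restricted sum does not cancel and $\langle E_{\ell_0}^{w'},W_2E_L^w\rangle=\pm\Theta(\eta^2/L^2)$, whereas fixing $(x_m,x_L)$ for any $m\ne\ell_0$ (including $m=1$) leaves the labels balanced and only the $O(\eta^3)$ residue of the logistic asymmetry survives; moreover the sign at the pivot is opposite to that of even pairs, because $\langle u_1,E_{\ell_0}^w\rangle-\mathtt{T}_1=-\eta/(4L)<0$ while $\langle u_1,E_1^w\rangle-\mathtt{T}_1>0$ in even pairs. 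This establishes the first two inequalities for $L>L_0$ at step $2$, and they persist at $W_3$ since the step-$3$ increment is of strictly higher order. The step-$3$ update, now fed by the nonzero $u_2$, resolves the ordering of the $O(\eta^3)$-level non-pivot positions: reducing to the two symmetry types so that the common $\mathtt{T}_2$ terms cancel inside each pairwise difference, the surviving term in $\langle E_m^w-E_{m'}^w,(W_3-W_2)E_L^w\rangle$ is proportional to $(\langle u_2,E_m^w\rangle-\langle u_2,E_{m'}^w\rangle)$ times a factor of definite sign built from the step-$2$ logistic derivatives and attention weights on the two types, so the token-score ordering from part (1) pins down which non-pivot attention score dominates (position $2$ over positions $\ell\ge3$ when $L\le L_0$, and the corresponding $E_1$-versus-$E_\ell$ statement when $L>L_0$, the first token being the one carrying a positive token score).

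The delicate point I expect to be the main obstacle is precisely this last sign bookkeeping in the regime $L\ge L_0$. One must keep careful track of the sign of $\mathtt{T}_t$ (which at steps $1$–$2$, with near-uniform attention, equals $\approx\langle u_t,E_1^w\rangle/L>0$ and hence governs the sign of $\sum_n J_{(n,t)}' y_n$), distinguish the label-determined index sets (at the pivot $\ell_0$) from the label-balanced ones (everywhere else), and then correctly match the surviving non-pivot token to the sign of its token score — recalling that the first token alone carries a \emph{positive} score inherited from the regularization data while every non-leading token carries a \emph{negative} one, which is what forces the pivot's attention score to flip relative to even pairs and selects a different runner-up token for $L>L_0$. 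Everything else is the same bounded-perturbation and higher-order Taylor accounting already used for \Cref{lemma:init}, now merely carried along the additional position index $\ell_0$.
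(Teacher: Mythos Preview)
Your proposal is correct and follows essentially the same approach as the paper: explicit gradient evaluation at $t=1,2,3$, using \Cref{lemma:same_value_on_same_position CoT} to collapse to position-indexed quantities, Taylor-expanding the logistic derivative via \Cref{eqn:Taylor 1/(1+e(x))}, and doing sign bookkeeping at the pivot $\ell_0$ versus the label-balanced non-pivot positions. Your account of why the pivot sign flips for $L>L_0$ (namely $\langle u_1,E_{\ell_0}^w\rangle-\mathtt{T}_1=-\eta/(4L)<0$ versus $\langle u_1,E_1^w\rangle-\mathtt{T}_1>0$ in even pairs) and of how the step-$3$ increment orders the non-pivot tokens via the token-score differences from part (1) is exactly the argument the paper gives, only with the sign logic spelled out a bit more explicitly.
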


\begin{proof}
    Since $J_{(n,0)} = \frac{1}{2}$, and $\varphi_\ell^{(n,0)} = \frac{1}{L}$ for $n\in I_L$ (the length is $L$), we have
    \begin{align*}
        \langle u_1, E_1^w \rangle &= \eta \langle -\nabla_u \Lc_0, E_1^w \rangle \\
        & = \eta \sum_{L=1}^{L_{\max}}\frac{1}{2^L}\sum_{n\in I_L, \mathtt{C}(x_\ell^{(n)})=w } (-J_{(n,0)}') y_n \varphi_1^{(n,0)} \\
        & = \frac{\eta}{4},
    \end{align*}
where the last equality is due to the cancellation between positive and negative samples whose length is greater than 1. Due to the same reason, for any $\ell\geq 2$, we have
\begin{align*}
    \langle u_1, E_\ell^w \rangle &= \eta \langle -\nabla_u \Lc_0, E_\ell^w \rangle \\
    & = \eta \sum_{L=2}^{L_{\max}}\frac{1}{2^L}\sum_{n\in I_L, \mathtt{C}(x_\ell^{(n)})=w } (-J_{(n,0)}') y_n \varphi_1^{(n,0)} \\
    & = 0
\end{align*}

Regarding the attention, for any token $E_\ell^{w'}$, we have
\begin{align*}
    \langle E_\ell^{w'}, W_1 E_L^w\rangle &= \eta \lambda \langle E_\ell^2, (-\nabla_W\Lc_1) E_L^w\rangle \\
    & = \frac{1}{2^L} \sum_{n\in I_L, \mathtt{C}(x_\ell^{(n)})w', \mathtt{C}(x_L^{(n)})=w} J_{(n,0)}' y_n  \varphi_\ell^{(n,0)} \left( u_0^\top x_\ell^{(n)} - \mathtt{T}_0(X^{(n)}) \right)  \\
    & = 0,
\end{align*}
where the last inequality is due to the fact that $u_0 = 0$.

In summary, similar to what happens in even pairs problem, at time step 1, only the token score at the first position increases, and all other token scores remain 0, and the attention scores are all 0, resulting $\varphi_\ell^{(n,1)} = \frac{1}{L}$ for $n\in I_L$. Note that, we also have
\begin{align*}
    -J_{(n,1)}' = \frac{1}{1 + \exp(\frac{\eta}{4L})}, n\in I_L^+; \quad -J_{(n,1)}' = \frac{1}{1 + \exp(-\frac{\eta}{4L})}, n\in I_L^-.
\end{align*}

Next, we characterize the token scores and attention scores at time step 2. 

\begin{align*}
    \langle u_2, E_1^w \rangle &= \langle u_1, E_1^w \rangle + \eta \langle -\nabla_u \Lc_1, E_1^w \rangle  \\
        & = \frac{\eta}{4} + \eta \sum_{L=2}^{L_{\max}}\frac{1}{2^L}\sum_{n\in I_L, \mathtt{C}(x_\ell^{(n)})=w } (-J_{(n,0)}') y_n \varphi_1^{(n,0)} \\
        & = \frac{\eta}{4} + \frac{\eta}{2}\frac{1}{1+\exp(\eta/4)}  + \sum_{L=2}^{L_{\max}} \frac{\eta}{2} \left( \frac{1}{1+\exp(\frac{\eta}{4L})} -\frac{1}{1+\exp(-\frac{\eta}{4L})} \right) \frac{1}{L}
\end{align*}

Thus,
\begin{align*}
     \left| \langle u_2, E_1^w \rangle - \frac{2\eta}{4} \right| &\leq \frac{\eta}{2}\left|\frac{1}{1+\exp(\eta/4)} - \frac{1}{2}\right| + \sum_{L=2}^{L_{\max}} \frac{\eta}{2L}\cdot \frac{\eta}{2L} \\
     &\leq \frac{3\eta ^2}{8},
\end{align*}
where the last inequality is due to the Lipchitz continuity of $f(x)=1/(1+e^x)$ (with Lipchitz constant 1) and $\sum_{L=2}^{\infty}1/L^2 \leq 1$.

Similarly, for $\ell\geq 2$, we have
\begin{align*}
    \langle u_2, E_\ell^w \rangle &= \eta \langle -\nabla_u \Lc_1, E_\ell^w \rangle \\
    & = \eta \sum_{L=\ell}^{L_{\max}}\frac{1}{2^L}\sum_{n\in I_L, \mathtt{C}(x_\ell^{(n)})=w } (-J_{(n,1)}') y_n \varphi_\ell^{(n,0)} \\
    & = \sum_{L=\ell}^{L_{\max}}\frac{\eta}{2} \left( \frac{1}{1+\exp(\frac{\eta}{4L})} - \frac{1}{1+\exp(-\frac{\eta}{4L})}\right) \frac{1}{L} \\
    & \leq - \sum_{L=\ell}^{L_{\max}} \frac{\eta^2}{4L^2} \\
    &\leq -\frac{\eta^2}{4\ell^2}.
\end{align*}

In addition, regarding the difference of token scores, we have
\begin{align*}
    \langle u_2, E_2^w - E_\ell^2 \rangle & = \sum_{L=2}^{L_{\max}}\frac{\eta}{2} \left( \frac{1}{1+\exp(\frac{\eta}{4L})} - \frac{1}{1+\exp(-\frac{\eta}{4L})}\right) \frac{1}{L} - \sum_{L=\ell}^{L_{\max}}\frac{\eta}{2} \left( \frac{1}{1+\exp(\frac{\eta}{4L})} - \frac{1}{1+\exp(-\frac{\eta}{4L})}\right) \frac{1}{L} \\
    & = \sum_{L=2}^{\ell-1}\frac{\eta}{2} \left( \frac{1}{1+\exp(\frac{\eta}{4L})} - \frac{1}{1+\exp(-\frac{\eta}{4L})}\right) \frac{1}{L} \\
    & \leq -\frac{\eta^2}{16},
\end{align*}
where $\ell \geq 3$.

Therefore, we already prove that 
\begin{align*}
&\langle u_2, E_1^w \rangle = \Theta(\eta t),  \quad \forall w,\\
&\langle u_2, E_\ell^w \rangle = -\Theta(\eta^2 t^2),  \quad  \forall \ell \geq 2, \forall w, \\
&\langle u_2, E_2^w - E_\ell^w \rangle \leq - \Omega(\eta^2 t^2), \quad   \forall \ell \geq 3, \forall w.
\end{align*}

Next, we analyze the attention score. For any length $L<L_0$, the proof follows the same steps in \Cref{lemma:init}. Here, we just present the results.
\begin{align*}
        &\lambda \left\langle E_1^w - E_2^w, (-\nabla_W\Lc_1) E_L^w \right\rangle 
         \geq \Omega\left( \frac{\eta}{L} \right), \\
        &\lambda \left\langle E_1^{-w} - E_2^w, (-\nabla_W\Lc_1) E_L^w \right\rangle \leq -\Omega\left(\frac{\eta}{L}\right).
    \end{align*}

In addition,
\begin{align*}
    & \lambda \left\langle E_1^{w} - E_2^w, (-\nabla_W\Lc_2) E_L^w \right\rangle \geq \Omega\left(\frac{\eta}{L}\right) \\
    & \lambda \left\langle E_1^{-w} - E_2^w, (-\nabla_W\Lc_2) E_L^w \right\rangle \leq -\Omega\left(\frac{\eta}{L}\right),
\end{align*}
which implies that for any $L\leq L_0$, we have
\begin{align*}
    & \left\langle E_1^{w} - E_2^w, W_3 E_L^w \right\rangle \geq \Omega\left(\frac{\eta^2}{L}\right) \\
    & \left\langle E_1^{-w} - E_2^w, W_3  E_L^w \right\rangle \leq -\Omega\left(\frac{\eta^2}{L}\right),
\end{align*}

\textbf{Next, we show that the initial dynamics of attention scores in sequence with length $L> L_0$.}

Recall that $\ell_0 = L-L_0 + 1$. We have
\begin{align*}
        \lambda \left\langle E_{\ell_0}^{-w} - E_1^w, (-\nabla_W\Lc_1) E_L^w \right\rangle 
        & = -\frac{1}{2^L} \sum_{n\in I_L^-} (-J_{(n,1)}') \varphi_{\ell_0}^{(n,1)}  \left( \langle u_1, E_{\ell_0}^w\rangle - \frac{\eta}{4L} )\right) \\
        &\quad - \frac{1}{2^L} \sum_{n\in I_L, \mathtt{C}(x_1^{(n)})=w} (-J_{(n,1)}') y_n \varphi_1^{(n,1)} \left( \langle u_t, E_1^w \rangle - \frac{\eta}{4L} \right) \\
        & = \frac{1}{8}\cdot \frac{1}{1+\exp(-\frac{\eta}{4L})} \cdot \frac{1}{L} \left[  -2\langle u_1, E_{\ell_0}^w\rangle  +    \langle u_1, E_1^w\rangle + \frac{\eta}{4L} \right] \\
        &\quad - \frac{1}{8}\frac{1}{1+\exp(\frac{\eta}{4L})} \cdot \frac{1}{L} \left( \langle u_1, E_1^w\rangle - \frac{\eta}{4L}\right) \\
        &\overset{(a)} \geq \Omega\left( \frac{\eta}{L} \right),
    \end{align*}
where $(a)$ is due to the Lipchitz continuity of $f(x)=1/(1+e^x)$ and $\langle u_1, E_1^w \rangle > \Omega(\eta)$.

Similarly, 
\begin{align*}
        \lambda \left\langle E_{\ell_0}^{w} - E_1^w, (-\nabla_W\Lc_1) E_L^w \right\rangle 
        &\leq -\Omega\left( \frac{\eta}{L} \right),
    \end{align*}

Since $\langle u_1, E_\ell^w \rangle = 0$ for all $\ell\geq 2$, we have 
\[ \lambda \left\langle E_\ell^{w} - E_{\ell'}^w, (-\nabla_W\Lc_1) E_L^w \right\rangle = 0. \]

Finally, we aim to show that at time step $t=3$, the attention layer also distinguishes between first and other tokens. This can be done by noting that $\langle u_2, E_1^w - E_\ell^2 \rangle \geq \Omega(\eta)$ for $\ell\neq \ell_0$. More importantly, $\varphi_1^{(+,2)} > \varphi_1^{(-,2)}$ Specifically, we have 
\begin{align*}
    \eta \lambda & \langle E_1^w - E_\ell^w, (-\nabla_W \Lc_2) E_L^w \rangle \\
    & = \frac{\eta}{ 2^L} \sum_{n\in I_L, \mathtt{C}(x_2^{(n)})=\mathtt{C}(x_L^{(n)})=w} (-J_{(n,2)}') y_n \varphi_1^{(n,2)} \left(\langle u_2, E_1^w \rangle - \mathtt{T}_2(X^{(n)}) \right) \\
    &\quad - \frac{\eta}{2^L} \sum_{n\in I_L, \mathtt{C}(x_\ell^{(n)})=\mathtt{C}(x_L^{(n)})=w} (-J_{(n,2)}') y_n \varphi_\ell^{(n,2)} \left( \langle u_2, E_\ell^w \rangle - \mathtt{T}_2(X^{(n)}) \right) \\
    & \overset{(a)}= \frac{\eta}{8 } (-J_{(+,t)}') \left[\varphi_1^{(+,2)} \langle u_2, E_1^w \rangle -  \varphi_\ell^{(+,2)}\langle u_2, E_\ell^w \rangle  \right] \\
    &\quad - \frac{\eta}{8 }  (-J_{(-,t)}')  \left[ \varphi_1^{(+,2)} \langle u_2, E_1^w \rangle -  \varphi_\ell ^{(+,2)}\langle u_2, E_\ell^w \rangle \right] \\
    & \geq \Omega(\eta^4/L ),
\end{align*}
where $(a)$ is due to the fact that $\varphi_\ell^{(n,2)}$ are equal for any $\ell\geq 2$ and $\ell\neq \ell_0$, and the last inequality follows from that $\langle u_2, E_1^w \rangle -  \langle u_2, E_\ell^w \rangle \geq \Omega(\eta)$.

 Thus, the proof is complete.   
\end{proof}

For the rest of the proof, the steps follow the same as in \Cref{sec:prove EP}. Specifically, by induction, we have 
\begin{theorem}[Phase 1][Restatement of \Cref{thm:phase1CoT}]
Let \(-w\) denote the flip of token \(w \in \{\texttt{a}, \texttt{b}\}\). Choose $\lambda=\Omega(L_{\max}^2)$ and $t_0 = O(1/(\eta L_{\max}) )$. Then, for all \(t \leq t_0\), the parameters evolve as follows:  

(1) The dynamics of linear layer \(u\) is governed by the following inequalities.   
     \begin{align*}
  &   \langle u_t, E_1^w \rangle = \Theta(\eta t), \quad \forall w, \\
  &   \langle u_t, E_\ell^w \rangle = -\Theta(\eta^2 t^2), \quad \forall \ell \geq 2, \forall w, \\
  &    \langle u_t, E_2^w - E_\ell^w \rangle \leq - \Omega(\eta^2 t^2), \quad \forall \ell \geq 3,\forall w. 
     \end{align*}

(2) The dynamics of the attention layer \(W\) is governed by the following inequalities. For any length $L< L_{0}$, we have
     \begin{align*}
   &  \langle E_1^w - E_\ell^{w'}, W_t E_L^w \rangle \geq \Omega(\eta^2 t^2), \quad \forall \ell \geq 2, \forall w, w', \\
  &   \langle E_{\ell}^{w'} - E_1^{-w}, W_t E_L^w \rangle \geq \Omega(\eta^2 t^2), \quad\forall \ell \geq 2, \forall w, w', \\
   &  \langle E_2^{w'} - E_\ell^{w''}, W_t E_L^w \rangle \geq \Omega(\eta^4 t), \quad \forall \ell \geq 3, \forall w,w',w''. 
     \end{align*}

For length $L\geq L_0$, let $\ell_0 = L- L_0 +1$, and we have
\begin{align*}
   &  \langle E_{\ell_0}^{-w} - E_\ell^{w'}, W_t E_L^w \rangle \geq \Omega(\eta^2 t^2), \quad \forall \ell \neq \ell_0,\forall w, w',\\
 &    \langle E_{\ell}^{w'} - E_{\ell_0}^{-w}, W_t E_L^w \rangle \geq \Omega(\eta^2 t^2), \quad \forall \ell \neq \ell_0, {\forall w, w',} \\
  &   \langle E_1^{w'} - E_\ell^{w''}, W_t E_L^w \rangle \geq \Omega(\eta^4 t), \quad \forall \ell \neq 1, \ell_0, {\forall w, w',w''}. 
     \end{align*}
\end{theorem}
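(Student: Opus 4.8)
The plan is to mirror, almost line for line, the three-step template of \Cref{sec:prove EP}: a symmetry reduction, a base case at $t=2,3$, and an induction on $t$, the only genuinely new ingredient being the attention dynamics on the CoT sequences of length $L\ge L_0$. By \Cref{lemma:same_value_on_same_position CoT} the token scores depend only on position, and the attention scores depend only on position, the sign of the query token, and --- for the comparison position $\ell_0=L-L_0+1$ when $L\ge L_0$ --- on whether the token at position $\ell_0$ agrees with the query token; hence it suffices to track, for each length $L$, one ``positive'' and one ``negative'' representative sequence, exactly as in the even pairs proof. The base case is \Cref{lemma:initCoT}. The inductive step takes the claimed bounds at time $t$ as hypothesis, bounds the one-step increment of every token score and every attention score using the Taylor control of $1/(1+e^{x})$ in \Cref{eqn:Taylor 1/(1+e(x))}, the softmax-Lipschitz bound \Cref{lemma:Lip}, and the a priori estimate $\|W_t\|\le O(L_{\max}\eta^2 t^2)$, and then sums from $t=3$.

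\textbf{Linear layer.} Since $\Lc_{Parity}=\Lc_{CoT}+\Lc_{Reg}$ and $\Lc_{Reg}$ is precisely the even-pairs loss restricted to lengths $L<L_0$, the contribution of $\Lc_{Reg}$ to $\nabla_u\Lc_t$ reproduces verbatim the computation behind part (1) of \Cref{thm:phase1}: length-$1$ samples are always positive and force $\langle u_t,E_1^w\rangle=\Theta(\eta t)$, while every non-leading position accrues a score $-\Theta(\eta^2 t^2)$, most negative at position $2$. One then checks that $\Lc_{CoT}$ cannot overturn these. If $\ell$ is not the comparison position of any CoT step, then for each length-$L$ CoT sequence flipping the token at $\ell$ --- or flipping the pair $\{\ell_0,L\}$ when $\ell$ is the last token --- preserves the label, so positive and negative samples cancel in $\langle\nabla_u\Lc_t,E_\ell^w\rangle$, just as in \Cref{lemma:same_value_on_same_position CoT}; for the unique length $L=\ell+L_0-1$ with $\ell_0=\ell$ the surviving term is $O(\eta t)$ after a Taylor expansion and carries the correct sign (negative when $\ell\ge 2$). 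Summing from $t=3$ gives part (1), and as a by-product the estimates $\langle u_t,E_{\ell_0}^{\pm w}\rangle=-\Theta(\eta^2 t^2)$ and $\mathtt{T}(X^{(n)})=O(\eta t)$ that feed the attention analysis.

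\textbf{Attention layer.} For $L<L_0$ the sequences carry even-pairs labels, so the quantities $A_{12}^{(t)},B_{\ell 1}^{(t)},A_{2\ell}^{(t)}$ of the proof of \Cref{thm:phase1} obey the same one-step recursions and the same bounds follow. For $L\ge L_0$ one runs the identical argument with $\ell_0$ playing the role of position $1$: using the projections of $\nabla_W\Lc_t$ onto token pairs, the negative samples push $\langle E_{\ell_0}^{-w},W_t E_L^w\rangle$ upward --- their contribution is proportional to $(-J'_{(-,t)})\varphi_{\ell_0}^{(-,t)}\big(\langle u_t,E_{\ell_0}^w\rangle-\mathtt{T}(X^{(-)})\big)$ with $y_n=-1$, hence positive since $\langle u_t,E_{\ell_0}^w\rangle=-\Theta(\eta^2 t^2)$ dominates $\mathtt{T}(X^{(-)})=O(\eta t)$ --- while the positive samples drive $\langle E_{\ell_0}^{w},W_t E_L^w\rangle$ downward; telescoping then yields the two order-$\eta^2 t^2$ bounds for long sequences. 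The remaining bound $\langle E_1^{w'}-E_\ell^{w''},W_t E_L^w\rangle\ge\Omega(\eta^4 t)$ for $\ell\ne 1,\ell_0$ is the analogue of the $A_{2\ell}$ estimate: the first-order contributions of the positive and negative samples cancel, so the residual is controlled by the gap $(-J'_{(-,t)})-(-J'_{(+,t)})$ together with the factor $\exp\big(\langle E_1^{w'}-E_\ell^{w''},W_t E_L^w\rangle/\lambda\big)-1$, producing a recursion of the shape $(\text{new})\ge\big(1+\Omega(\eta^3 t^2/(\lambda L^3))\big)\cdot(\text{old})$ seeded by the $\Omega(\eta^4/L)$ value from \Cref{lemma:initCoT}.

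\textbf{Main obstacle.} As in the even pairs analysis, the crux is this last, genuinely second-order, estimate: it survives only through the small asymmetry between positive and negative samples, so one must at once extract that asymmetry, control the drift of the attention matrix through $\|W_t-W_{t_0}\|$ and \Cref{lemma:Lip}, and feed that drift back into the token scores without breaking part (1). Ensuring that this coupled induction closes uniformly over all lengths $L\le L_{\max}$ and over both regimes $L<L_0$ and $L\ge L_0$ is what forces the choices $\lambda=\Omega(L_{\max}^2)$ and $t_0=O(1/(\eta L_{\max}))$, and is the bulk of the work.
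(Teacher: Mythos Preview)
Your proposal is correct and takes essentially the same approach as the paper: symmetry reduction via \Cref{lemma:same_value_on_same_position CoT}, base case via \Cref{lemma:initCoT}, and induction mirroring the even pairs proof with $\ell_0$ playing the role of position $1$ on the CoT sequences of length $L\ge L_0$. In fact the paper's own proof of this restatement is the single sentence ``For the rest of the proof, the steps follow the same as in \Cref{sec:prove EP},'' so your sketch is already more detailed than what the paper provides; one small wording slip is the claim that $\langle u_t,E_{\ell_0}^w\rangle=-\Theta(\eta^2 t^2)$ ``dominates'' $\mathtt{T}(X^{(-)})=O(\eta t)$ --- in the regime $\eta t=O(1/L_{\max})$ it is the other way around, but since both terms push $\langle u_t,E_{\ell_0}^w\rangle-\mathtt{T}(X^{(-)})$ negative your sign conclusion is still correct.
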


Therefore, at the end of phase 1, i.e., $t\leq t_0$, the dataset $\{(\sum_{\ell=1}^Lx_\ell^{(n)}\varphi_\ell^{(n,t_0)}, y_n)\}$ is separable. This fact enable us to performa the same implicit bias analysis as in \Cref{sec:prove EP}. Thus, we conclude that similar theorems \Cref{thm:implicit bias CoT} and \Cref{thm:loss} hold true.

\section{Additional Experiments}\label{sec:additional experiments}
{\bf Imapct of the scaling parameter $\lambda$.} We first select two additional $\lambda$ configurations for training on the Even Pairs task to demonstrate that the training dynamics remain consistent with those reported in the main paper.
\begin{figure}
    \centering
    \includegraphics[width=1\linewidth]{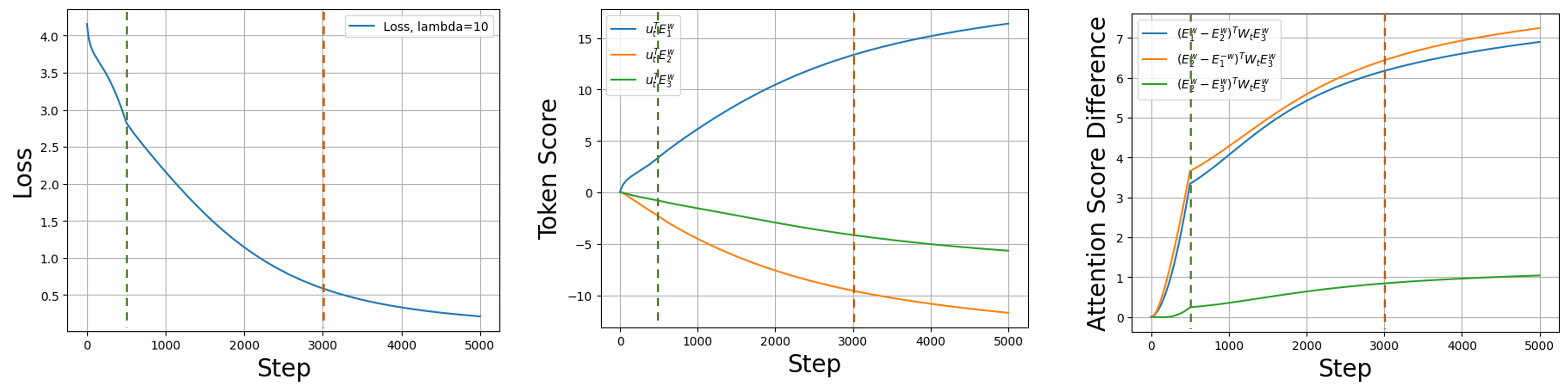}
    \caption{Training dynamics of learning Even pairs when $\lambda=10$}
    \label{fig:10}
\end{figure}

\begin{figure}
    \centering
    \includegraphics[width=1\linewidth]{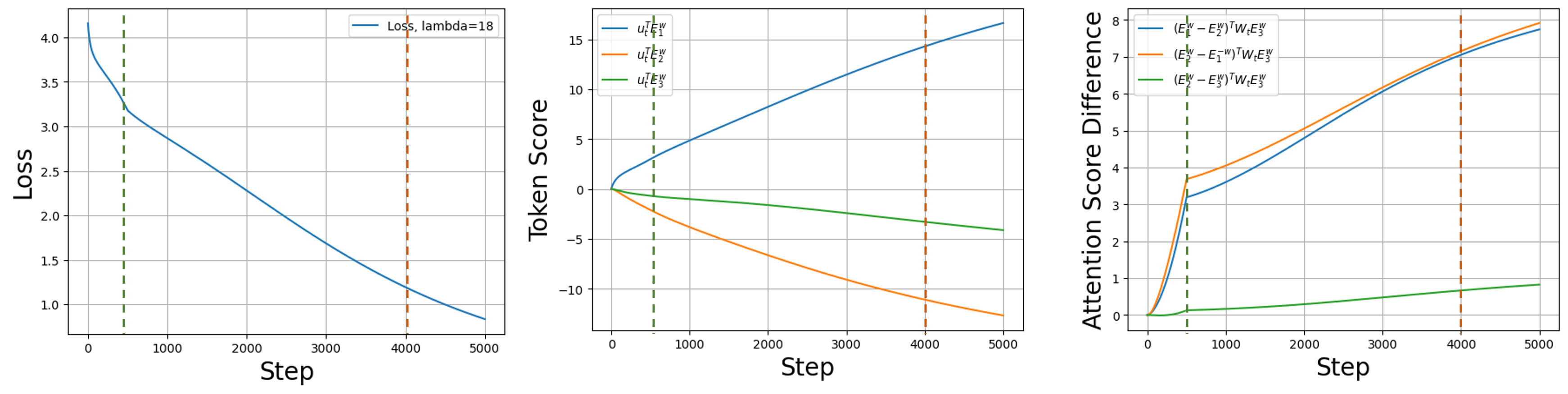}
    \caption{Training dynamics of learning Even pairs when $\lambda=18=L_{\max}^2/2$}
    \label{fig:18}
\end{figure}

{\bf Two-phase dynamics.} As shown in \Cref{fig:fixed lr}, the two-phase phenomenon naturally emerges even when a fixed step size is used throughout the entire gradient descent training process. Notably, this behavior also appears in real-world datasets: \Cref{fig:nanoGPT} demonstrates that the two-phase learning dynamics persist when training with NanoGPT \citep{karpathy2023nanogpt}.


\begin{figure}
    \centering
    \includegraphics[width=1\linewidth]{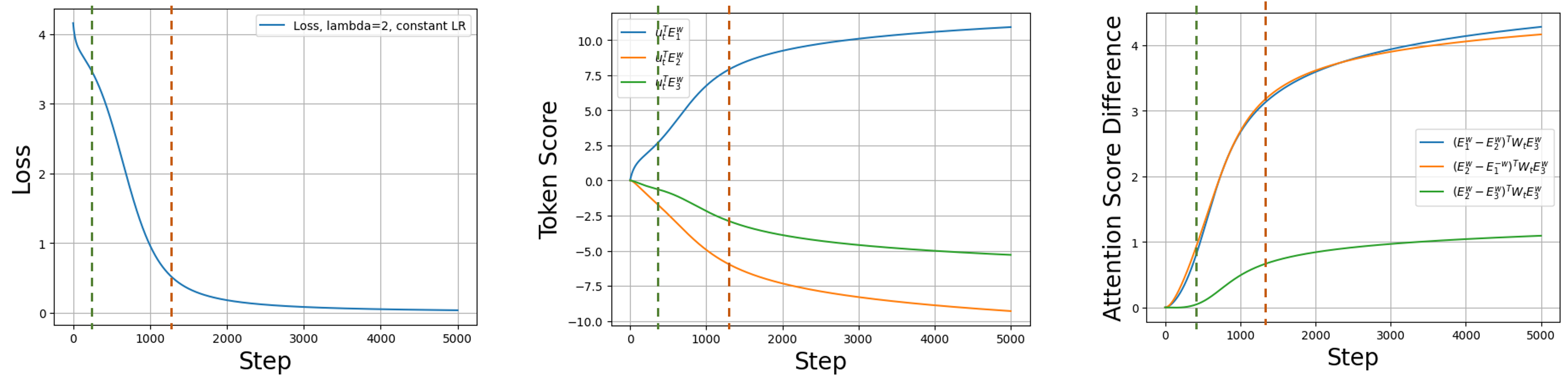}
    \caption{Training dynamics of training transformer on even pairs using vanilla GD (constant learning rate) and $\lambda=2$.}
    \label{fig:fixed lr}
\end{figure}

\begin{figure}
    \centering
    \includegraphics[width=1\linewidth]{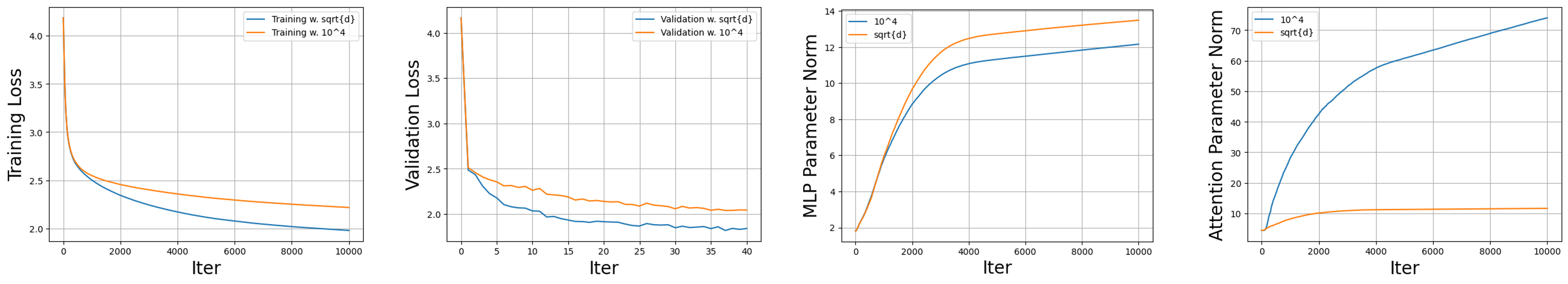}
    \caption{Results of nanoGPT, trained on 'shakespeare' dataset. Configuration: block-size=64, batch-size=12, n-layer=4, n-head=4, n-embd=128}
    \label{fig:nanoGPT}
\end{figure}

\end{document}